\documentclass[11pt]{article}
\usepackage{graphicx}
\newcount\Comments
\Comments=0

\usepackage{bbm}
\usepackage{amsmath}
\usepackage{amssymb}
\usepackage{algorithm}
\usepackage[framemethod=TikZ]{mdframed}
\usepackage{mathtools}
\usepackage{amsthm}
\usepackage{natbib}
\usepackage{letltxmacro}
\usepackage{authblk}
\usepackage{centernot}

\PassOptionsToPackage{algo2e,ruled, linesnumbered}{algorithm2e}
\RequirePackage{algorithm2e}

\LetLtxMacro\amsproof\proof
\LetLtxMacro\amsendproof\endproof

\usepackage{thmtools}
\usepackage{xcolor}
\AtBeginDocument{%
  \LetLtxMacro\proof\amsproof
  \LetLtxMacro\endproof\amsendproof
}

\newtheorem{theorem}{Theorem}[section]
\newtheorem{proposition}[theorem]{Proposition}
\newtheorem{definition}{Definition}[section]
\newtheorem{lemma}[theorem]{Lemma}
\newtheorem{assumption}{Assumption}
\newtheorem{corollary}[theorem]{Corollary}

\newtheorem{question}[theorem]{Question}

\newtheorem{remark}{Remark}[section]
\newtheorem*{theorem*}{Theorem}

\usepackage{xpatch}
\xpatchcmd{\proof}{\itshape}{\normalfont\proofnamefont}{}{}

\usepackage[most]{tcolorbox}

\usepackage[letterpaper,top=2cm,bottom=2cm,left=3cm,right=3cm,marginparwidth=1.75cm]{geometry}

\usepackage[colorlinks=true, allcolors=blue]{hyperref}

\newcommand{\naturals}{\mathbb{N}}
\newcommand{\integers}{\mathbb{Z}}

\usepackage{color}
\definecolor{darkgreen}{rgb}{0,0.5,0}
\definecolor{purple}{rgb}{1,0,1}
\newcommand{\kibitz}[2]{\ifnum\Comments=1\textcolor{#1}{#2}\fi}

\newcommand{\ambuj}[1]  {\kibitz{darkgreen}   {[AT: #1]}}

\newcommand{\Acal}{\mathcal{A}}

\newcommand{\Dcal}{\mathcal{D}}

\newcommand{\Gcal}{\mathcal{G}}
\newcommand{\Hcal}{\mathcal{H}}
\newcommand{\Ical}{\mathcal{I}}

\newcommand{\Ocal}{\mathcal{O}}
\newcommand{\Tcal}{\mathcal{T}}
\newcommand{\Zcal}{\mathcal{Z}}
\newcommand{\Xcal}{\mathcal{X}}
\newcommand{\Ycal}{\mathcal{Y}}

\newcommand{\Lcal}{\mathcal{L}}

\DeclareMathOperator*{\argmax}{arg\,max}

\newcommand{\expect}{\operatorname{\mathbb{E}}}

\newcommand*\samethanks[1][\value{footnote}]{\footnotemark[#1]}

\title{Generation through the lens of learning theory}
\author{Jiaxun Li\thanks{Equal contribution}, Vinod Raman\samethanks, Ambuj Tewari}
\affil{Department of Statistics, University of Michigan}
\affil{\texttt{\{jasonli, vkraman, tewaria\}@umich.edu}}
\date{\today}

\begin{document}

\maketitle

\begin{abstract}
 We study generation through the lens of statistical learning theory. First, we abstract and formalize the results of \cite{gold1967language}, \cite{angluin1979finding}, \cite{angluin1980inductive} and \cite{kleinberg2024language} in terms of a binary hypothesis class defined over an abstract example space. Then, we extend the notion of ``generation" from \cite{kleinberg2024language} to two new settings, we call ``uniform" and ``non-uniform" generation, and provide a characterization of which hypothesis classes are uniformly and non-uniformly generatable. As is standard in learning theory, our characterizations are in terms of the finiteness of a new combinatorial dimension termed the Closure dimension. By doing so, we are able to compare generatability with predictability (captured via PAC and online learnability) and show that these two properties of hypothesis classes are \emph{incompatible} -- there are classes that are generatable but not predictable and vice versa. Finally, we extend our results to capture \emph{prompted} generation and give a complete characterization of which classes are prompt generatable, generalizing some of the work by \cite{kleinberg2024language}. 
\end{abstract}

\section{Introduction}
 Over the past 50 years, predictive machine learning has been a cornerstone for both theorists and practitioners. Predictive tasks like classification and regression have been extensively studied, in both theory and practice, due to their applications to face recognition, autonomous vehicles, fraud detection, recommendation systems, etc. Recently, however, a new paradigm of machine learning has emerged: \emph{generation}. Unlike predictive models, which focus on making accurate predictions of the true label given examples, generative models aim to \emph{create} new examples based on observed data. For example, in language modeling, the goal might be to generate coherent text in response to a prompt, while in drug development, one might want to generate candidate molecules. In fact, generative models have already been applied to these tasks and others \citep{zhao2023survey, jumper2021highly}.

The vast potential of generative machine learning has spurred a surge of research across diverse fields like natural language processing \citep{wolf2020transformers}, computer vision \citep{khan2022transformers}, and computational chemistry/biology \citep{vanhaelen2020advent}. Despite this widespread adoption, the theoretical foundations of generative machine learning lags far behind its predictive counterpart. While prediction has been extensively studied by learning theorists through frameworks like PAC and online learning \citep{ShwartzDavid, 10.5555/2371238, cesa2006prediction}, generative machine learning has, for the most, part remained elusive.  A key reason for this is that generation is fundamentally an \emph{unsupervised} task. Unlike classification or regression, where there is a true label or response to guide the model, generation lacks a clear notion of correctness. This makes it challenging to define a loss function -- the primary tool used in predictive tasks to quantify the quality of a model.

\subsection{Main Contributions}

In this work, we take a step towards closing this gap by bridging learning theory and generative machine learning. Our main contributions are four-fold.

\begin{itemize}
\item[(1)]  We formalize the notion of ``generation in the limit" from \cite{kleinberg2024language} through the lens of classical learning theory, allowing the tools and techniques that have been developed over the years to seamlessly port over. Specifically, we reinterpret the framework from \cite{kleinberg2024language} using a binary hypothesis class $\Hcal$ defined over a countable example space $\Xcal$. This abstraction generalizes beyond language generation, enabling our results to apply to other generative tasks such as image and molecule generation.

\item[(2)] We introduce two new paradigms of generation called ``uniform" and ``non-uniform`` generation, both of which are stronger than the notion of ``generation in the limit" from \cite{kleinberg2024language}. While \cite{kleinberg2024language} do show that finite hypothesis classes are uniformly generatable, they leave the full characterization of uniform generatability open. We close this gap and provide a complete characterization of which hypothesis classes are uniformly generatable in terms of a new combinatorial dimension we term the Closure dimension. 

\begin{theorem*}[Informal]A class $\Hcal \subseteq \{0, 1\}^{\Xcal}$ is \emph{uniformly generatable} if and only if $\operatorname{C}(\Hcal) < \infty$, where  $\operatorname{C}(\Hcal)$ is the \emph{Closure dimension} of $\Hcal$.
\end{theorem*}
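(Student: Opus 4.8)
The plan is to prove both directions through the \emph{closure} of a finite set of examples. Write $K_h := \{x \in \Xcal : h(x) = 1\}$, and --- following the convention that a generation target has infinite support, so that only hypotheses $h$ with $|K_h| = \infty$ are relevant --- call a finite $S \subseteq \Xcal$ \emph{realizable} if $\Hcal_S := \{h \in \Hcal : S \subseteq K_h\} \neq \emptyset$; for such $S$ let its closure be $\langle S \rangle := \bigcap_{h \in \Hcal_S} K_h$. By definition, $\operatorname{C}(\Hcal)$ is the supremum of $|S|$ over realizable $S$ with $|\langle S \rangle| < \infty$. Two basic facts about the closure operator will be used: $S \subseteq \langle S \rangle$, and $\langle S \rangle$ is itself realizable with $\langle \langle S \rangle \rangle = \langle S \rangle$ (since $\Hcal_{\langle S \rangle} = \Hcal_S$). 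In particular, if $\operatorname{C}(\Hcal) = d$ then every realizable $S$ with $|S| \geq d+1$ has $|\langle S \rangle| = \infty$, directly from the definition of $\operatorname{C}(\Hcal)$.

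For the ``if'' direction, suppose $\operatorname{C}(\Hcal) = d < \infty$ and exhibit a generator with threshold $d+1$. Fix a well-ordering of the countable set $\Xcal$; on an input whose set of distinct elements is $S$, once $|S| \geq d+1$ output the least element of $\langle S \rangle \setminus S$, and output anything otherwise. This is well defined since $\langle S \rangle$ is infinite while $S$ is finite. For any target $h^\star$ with $|K_{h^\star}| = \infty$ and any enumeration of $K_{h^\star}$, once $d+1$ distinct elements have appeared we have $h^\star \in \Hcal_S$, hence $\langle S \rangle \subseteq K_{h^\star}$, so the produced element lies in $K_{h^\star}$ and is distinct from everything seen. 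Hence $\Hcal$ is uniformly generatable.

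For the ``only if'' direction, suppose $\operatorname{C}(\Hcal) = \infty$ and fix an arbitrary candidate generator $\Gcal$ together with a claimed finite threshold $d$. Since the closure dimension is infinite, there is a realizable $S$ with $|S| \geq d$ and $|\langle S \rangle| < \infty$; replacing $S$ by $\langle S \rangle$ (still realizable and finite, with $|S| \geq d$ because $S \subseteq \langle S\rangle$), we may assume $\langle S \rangle = S$. Let the adversary enumerate the elements of $S$ without repetition and let $x^\star$ be the output of $\Gcal$ on this prefix. If $x^\star \in S$, then $x^\star$ has already been seen, so $\Gcal$ fails. Otherwise $x^\star \notin S = \langle S \rangle = \bigcap_{h \in \Hcal_S} K_h$, so some $h^\star \in \Hcal_S$ has $x^\star \notin K_{h^\star}$; as $S \subseteq K_{h^\star}$ and $|K_{h^\star}| = \infty$, the prefix extends to a legal enumeration of $K_{h^\star}$, and on it $\Gcal$ --- having observed $|S| \geq d$ distinct examples --- outputs $x^\star \notin K_{h^\star}$, again a failure. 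As $d$ and $\Gcal$ were arbitrary, $\Hcal$ is not uniformly generatable.

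I expect the ``only if'' direction to be the main obstacle, and within it the key point is that the finite-closure witness guaranteed by $\operatorname{C}(\Hcal) = \infty$ can be taken \emph{closed}, so the generator has no ``safe'' element of the closure to fall back on, yet is still realized by an infinite-support hypothesis that can be chosen \emph{after} seeing the generator's output so as to exclude it --- this is exactly where the definition of the Closure dimension and the infinite-target convention do the work. The ``if'' direction is comparatively mechanical once closures are set up; the one point requiring care there is that a realizable set of size exceeding $\operatorname{C}(\Hcal)$ has \emph{infinite}, not merely nonempty, closure, which is what ensures a fresh valid output always exists.
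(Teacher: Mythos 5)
Your proof is correct and follows essentially the same route as the paper's: the sufficiency direction builds the obvious generator that plays from the closure once $\operatorname{C}(\Hcal)+1$ distinct examples are seen, and the necessity direction streams the (finite) closure of a witness set to the generator and then chooses a hypothesis in the version space that excludes the generator's output. The one presentational difference is that you explicitly replace the witness set $S$ by $\langle S\rangle$ and use idempotence ($\langle\langle S\rangle\rangle=\langle S\rangle$) to argue the set is its own closure, whereas the paper achieves the same effect by defining the adversary's stream to enumerate $\langle z_{1:d}\rangle_\Hcal$ in its natural order first; these are the same idea stated in two ways. Your observation that the witness can be taken closed is a clean way to package the step, and your argument that a realizable set above threshold has infinite (not merely nonempty) closure is exactly the point the paper uses in Lemma~\ref{lem:clossuff}.
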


In addition, we use the Closure dimension to fully characterize which classes are non-uniformly generatable. 

\begin{theorem*}[Informal] A class $\Hcal \subseteq \{0, 1\}^{\Xcal}$ is \emph{non-uniformly generatable} if and only if there exists a non-decreasing sequence of classes $\Hcal_1 \subseteq \Hcal_2 \subseteq \dots$ such that $\Hcal = \bigcup_{n=1}^{\infty} \Hcal_n$ and $\operatorname{C}(\Hcal_n) < \infty$ for every $n \in \naturals$.
\end{theorem*}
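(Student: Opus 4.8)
The plan is to prove the two implications separately, routing both through the characterization of uniform generatability stated above (a class is uniformly generatable if and only if its Closure dimension is finite). The feature that separates non-uniform generation from ``generation in the limit'' is the \emph{sample bound}: a non-uniform generator comes with a function $h \mapsto d(h) \in \naturals$ such that, for every target $h \in \Hcal$ and \emph{every} adversarial enumeration of $\operatorname{supp}(h) := \{x \in \Xcal : h(x) = 1\}$, the generator's outputs are valid (in $\operatorname{supp}(h)$ and previously unseen) once more than $d(h)$ distinct examples have appeared; here $d$ may depend on $h$ but not on the enumeration, whereas a uniform generator uses a single constant for the whole class. Throughout I treat supports as infinite, the finite case being degenerate and handled exactly as in the uniform-generation theorem.

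\emph{Forward direction ($\Rightarrow$)}. Let $G$ be a non-uniform generator for $\Hcal$ with bound $h \mapsto d(h)$, and set $\Hcal_n := \{h \in \Hcal : d(h) \le n\}$. Then $\Hcal_1 \subseteq \Hcal_2 \subseteq \cdots$, $\bigcup_n \Hcal_n = \Hcal$ because every $h$ has finite $d(h)$, and for each $n$ the same $G$ is valid on all targets in $\Hcal_n$ after more than $n$ distinct examples, so $G$ certifies that $\Hcal_n$ is uniformly generatable. By the uniform-generation characterization, $\operatorname{C}(\Hcal_n) < \infty$ for every $n$.

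\emph{Backward direction ($\Leftarrow$)}. Suppose $\Hcal = \bigcup_n \Hcal_n$ with $\Hcal_1 \subseteq \Hcal_2 \subseteq \cdots$ and each $\operatorname{C}(\Hcal_n) < \infty$. By the uniform-generation characterization each $\Hcal_n$ admits a uniform generator $G_n$ that is valid once more than some finite constant $d_n$ distinct examples have been seen; after replacing $d_n$ by $\max(d_1,\dots,d_n)$ we may assume $d_1 \le d_2 \le \cdots$. Define an aggregated generator $G$: at a step where the distinct observed examples form the set $S_t$ with $m_t := |S_t|$, let $k_t := \min\bigl(m_t,\ \sup\{\,n \in \naturals : d_n < m_t\,\}\bigr)$ and output whatever $G_{k_t}$ outputs on $S_t$ (output anything if $k_t = 0$). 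Two points finish the argument. First, monotonicity of $(d_n)$ gives $d_{k_t} < m_t$ whenever $k_t \ge 1$, so $G_{k_t}$ is always invoked past its validity threshold. Second, since the enumeration eventually exhausts the infinite set $\operatorname{supp}(h)$ we have $m_t \to \infty$, hence $k_t \to \infty$; writing $n(h) := \min\{n : h \in \Hcal_n\}$, as soon as $m_t \ge \max(n(h),\, d_{n(h)} + 1)$ we get $k_t \ge n(h)$, so $h \in \Hcal_{n(h)} \subseteq \Hcal_{k_t}$ by nestedness, and the validity guarantee of $G_{k_t}$ — applied to target $h$ and the current enumeration, legitimate since more than $d_{k_t}$ distinct examples have been seen — shows $G$'s output is valid from then on. As $\max(n(h), d_{n(h)} + 1)$ depends only on $h$, this is a non-uniform generator for $\Hcal$.

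\emph{Main obstacle}. The delicate step is designing the aggregated generator. The obvious idea — at each step run the uniform generator for the \emph{smallest} level $\Hcal_n$ still consistent with $S_t$ — does not work: a low level can stay consistent forever (say it contains some $g$ with $\operatorname{supp}(g) \supsetneq \operatorname{supp}(h)$), and then its uniform generator keeps emitting elements of $\operatorname{supp}(g) \setminus \operatorname{supp}(h)$, which are invalid for the true target $h$. The fix is to make the level $k_t$ grow \emph{without} ever stabilizing, so it eventually passes $n(h)$ and nestedness forces correctness, while the cap $k_t \le m_t$ together with the supremum keeps $k_t$ from outrunning the data that $G_{k_t}$ requires. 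Two routine checks remain: changing generators at every step is harmless because a uniform generator's guarantee is a statement about each individual prefix, and the finite-support case is dealt with as in the uniform-generation result.
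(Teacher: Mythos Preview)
Your proof is correct and follows essentially the same route as the paper: the forward direction is identical (stratify $\Hcal$ by the per-hypothesis sample bound of the given non-uniform generator), and the backward direction uses the same aggregation idea of selecting which uniform generator $G_n$ to invoke based on the number $m_t$ of distinct examples seen so far. Your monotonization of $(d_n)$ together with the cap $k_t \le m_t$ lets you treat the bounded and unbounded cases for $\sup_n d_n$ in a single stroke, whereas the paper splits into these two cases explicitly; otherwise the arguments coincide.
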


In fact, while \cite{kleinberg2024language} show that all countable classes are generatable in the limit, our characterization of non-uniform generatability shows that all countable classes are non-uniformly generatable. This provides an improvement as non-uniform generation is a strictly harder than generation in the limit. With respect to generatability in the limit, we provide an alternate sufficiency condition in terms of the Closure dimension which, in conjunction with countableness, significantly expands the collection of classes that are known to be generatable in the limit.

\begin{theorem*}[Informal] A class $\Hcal \subseteq \{0, 1\}^{\Xcal}$ is \emph{generatable in the limit} if there exists a  finite sequence of classes $\Hcal_1, \Hcal_2, \dots, \Hcal_n$ such that $\Hcal = \bigcup_{i=1}^{n} \Hcal_i$ and $\operatorname{C}(\Hcal_i) < \infty$ for all $i \in [n].$
\end{theorem*}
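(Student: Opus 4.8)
The plan is to reduce the statement to the uniform‑generation machinery already developed for classes of finite Closure dimension and then glue the pieces together with a priority/consistency argument in the spirit of \cite{gold1967language} and \cite{kleinberg2024language}. Throughout, fix a generator that at round $t$ has seen an enumeration prefix whose distinct elements form $S_t$, and write $\operatorname{cl}_{\Gcal}(S) := \bigcap\{\operatorname{supp}(h') : h' \in \Gcal,\ S \subseteq \operatorname{supp}(h')\}$ for the closure of a finite consistent set $S$ with respect to a class $\Gcal$.

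First I would record the consequences of $\operatorname{C}(\Hcal_i) < \infty$. By the characterization of uniform generatability, each $\Hcal_i$ is uniformly generatable, which I would repackage as the following lemma: there is $d_i \ge \operatorname{C}(\Hcal_i)$ such that whenever $S$ is consistent with $\Hcal_i$ and $|S| > d_i$, the set $\operatorname{cl}_{\Hcal_i}(S)$ is infinite. Hence, if the true target $h^\star$ happens to lie in $\Hcal_i$, then for all $t$ with $|S_t| > d_i$ we have $\operatorname{cl}_{\Hcal_i}(S_t) \subseteq \operatorname{supp}(h^\star)$ (because $h^\star$ is one of the hypotheses whose supports are intersected) and $\operatorname{cl}_{\Hcal_i}(S_t)$ is infinite, so any element of $\operatorname{cl}_{\Hcal_i}(S_t) \setminus S_t$ is a valid generation. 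Set $D := \max_{i \in [n]} d_i < \infty$.

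The gluing step. At round $t$ compute the set $L_t \subseteq [n]$ of indices $i$ for which $\Hcal_i$ is still consistent with $S_t$. Two observations drive the argument: (i) $L_t$ is non‑increasing in $t$ (consistency is lost, never regained), so since $[n]$ is finite it stabilizes to some $L_\infty$ after a finite round $T$; and (ii) writing $h^\star \in \Hcal_{j}$, the index $j$ satisfies $S_t \subseteq \operatorname{supp}(h^\star)$ for all $t$, hence $j \in L_t$ for all $t$ and $j \in L_\infty$. For $t \ge T$ with $|S_t| > D$ the generator commits to an index $i_t$ chosen from $L_t$ by a priority rule and outputs an element of $\operatorname{cl}_{\Hcal_{i_t}}(S_t) \setminus S_t$, selecting that element to have large index relative to a fixed enumeration of $\Xcal$ (and larger than every element of $S_t$), so that the output "moves forward" through $\Xcal$ as $t$ grows. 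If $i_t$ stabilizes to an index $i^\star$ with $h^\star \in \Hcal_{i^\star}$, correctness is immediate from the lemma above, with $t^\star$ the first round past $T$ at which $|S_t| > D$.

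The delicate point — which I expect to be the main obstacle — is the case $h^\star \notin \Hcal_{i^\star}$: then $\operatorname{cl}_{\Hcal_{i^\star}}(S_t)$ need not be contained in $\operatorname{supp}(h^\star)$, so a careless commitment produces invalid generations forever. This failure is genuine — it already arises for the class of all hypotheses whose support misses exactly one point, which is a union of two classes of Closure dimension $0$ yet has infinite Closure dimension, so the theorem cannot be routed through uniform (or even non‑uniform) generatability. To resolve it I would use finiteness of each $\operatorname{C}(\Hcal_i)$ a second time: the closures $\operatorname{cl}_{\Hcal_i}(S_t)$ are monotone in $t$ with well‑defined limits $C_i^\infty := \bigcup_t \operatorname{cl}_{\Hcal_i}(S_t)$, and for the true index $j$ one has $C_j^\infty = \operatorname{supp}(h^\star)$, which serves as an infinite "witness" of validity. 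The goal is to design the priority rule so that the committed index $i^\star$ satisfies $C_{i^\star}^\infty \setminus \operatorname{supp}(h^\star)$ being controllable — either finite, so that the forward‑moving choice of output eventually escapes it, or else dominated by a live index whose closure is already contained in $\operatorname{supp}(h^\star)$; this is exactly the construction I would mirror from the "if" direction of the non‑uniform characterization. Once this refinement is in place, $t^\star$ can be taken as the first round past $T$ at which $|S_t| > D$ and the output has moved past the bad set, completing the argument.
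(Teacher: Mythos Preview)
Your setup and identification of the main obstacle are both correct: once $|S_t| > D$, every consistent $\Hcal_i$ has an infinite closure, and the danger is committing to an index $i$ with $h^\star \notin \Hcal_i$, whose closure may stray outside $\operatorname{supp}(h^\star)$. But your resolution has a genuine gap: you never actually specify the priority rule, and neither of your two suggested escape routes works. For a bad index $i \in L_\infty$ the set $C_i^\infty \setminus \operatorname{supp}(h^\star)$ can be infinite --- take $\Hcal_1 = \{h\}$ with $\operatorname{supp}(h) = \naturals$ and $\Hcal_2 = \{h^\star\}$ with $\operatorname{supp}(h^\star) = 2\naturals$; then $\operatorname{cl}_{\Hcal_1}(S_t) = \naturals$ for all $t$, and the odd naturals never get ``moved past.'' And ``dominated by a live index whose closure is already contained in $\operatorname{supp}(h^\star)$'' begs the question: the generator does not know $h^\star$, so how does it detect which live indices are good? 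Your pointer to the non-uniform sufficiency proof does not help either --- that argument selects an index purely from sample-complexity thresholds and contains no mechanism for distinguishing good closures from bad ones.

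The paper supplies exactly the missing mechanism, and it is a concrete racing test rather than a priority ordering. At the first round $t^\star$ with $|S_{t^\star}| > D$, freeze the closures $Z_i := \langle x_1, \dots, x_{t^\star} \rangle_{\Hcal_i}$ for each consistent $i$, and fix an enumeration $(z_1^{(i)}, z_2^{(i)}, \dots)$ of each $Z_i$. At every later round $t$, compute for each such $i$ the length $n_t^i$ of the longest initial segment $\{z_1^{(i)}, \dots, z_{n_t^i}^{(i)}\}$ that has already appeared among $x_1, \dots, x_t$, set $i_t \in \argmax_i n_t^i$, and play from $Z_{i_t} \setminus \{x_1, \dots, x_t\}$. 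The point is that if $Z_i \subseteq \operatorname{supp}(h^\star)$ then the adversary's enumeration eventually covers every $z_j^{(i)}$, so $n_t^i \to \infty$; if $Z_i \not\subseteq \operatorname{supp}(h^\star)$ then some $z_j^{(i)}$ never appears in the stream and $n_t^i$ stays bounded by that $j$. Since the true index $j^\star$ always has $Z_{j^\star} \subseteq \operatorname{supp}(h^\star)$, the argmax eventually selects only good indices. This is the idea your sketch is reaching toward but never writes down.
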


In addition to the above theorem, we  give two other sufficiency conditions for generatability in the limit in terms of what we call the ``Eventually Unbounded Closure" property. As a corollary, we are able to recover the results of \cite{kleinberg2024language} that all countable classes are generatable in the limit. We leave the complete characterization of generatability in the limit as an important future question (see Section \ref{sec:diss}).

\item[(3)] By adopting a learning theory perspective, we uncover fundamental differences between generation and \emph{prediction} for countable classes, the latter of which we measure through PAC and online learnability. In particular, we find that these two tasks are incompatible -- there exist countable hypothesis classes for which generation is possible but prediction is not, and vice versa. In addition, unlike prediction, we find generation behaves very poorly under unions. Overall, these results highlight that the generative paradigm is distinct from prediction and suggests that new tools must be developed to fully understand the capabilities and limitations of generative machine learning.

\item[(4)] We define and extend our results to capture a notion of \emph{prompted} generation,  generalizing some of the results from Section 7 of \cite{kleinberg2024language}. By extending the Closure dimension to the Prompted Closure dimension, we prove identical characterizations of prompted uniform and non-uniform generatability as in the informal theorems in Contribution (2).
\end{itemize}

Our results extend the study of generation beyond language modeling, but are mainly information-theoretic in nature. That said, for all our algorithms, we point out computational primitives that are necessary for computability.




\subsection{Related Works}
The literature on generative machine learning is too vast to be included in complete detail. Thus, we refer the reader to the books and surveys by \cite{jebara2012machine}, \cite{harshvardhan2020comprehensive}, and \cite{pml2Book}.

The works most related to our work are those regarding language identification and generation in the limit \citep{gold1967language, angluin1979finding, angluin1980inductive}. Instead of an example space $\Xcal$ and a hypothesis class $\Hcal \subseteq \{0, 1\}^{\Xcal}$, these works consider a countable set $U$ of strings and a countable language family $C = \{L_1, L_2, \dots\}$, where $L_i \subset U$ for all $i \in \mathbbm{N}$.

In the Gold-Angluin model, an adversary first picks a language $K \in C$, and begins to enumerate the strings one by the one to the player in rounds $t = 1, 2, \dots$. After observing the string $w_t$ in round $t \in \mathbbm{N}$, the player guesses a language $L_t$ in $C$ with the hope that $L_t = K$. Crucially, the player gets no feedback at all. The player has identified $K$ in the limit, if there exists a finite time step $t^{\star} \in \mathbbm{N}$ such that for all $s \geq t$, we have that $L_s = K$.

In full generality, \cite{gold1967language} showed that language identification in the limit is impossible -- there are simple language families $C$, like those produced by finite automata, for which no algorithm can perform language identification in the limit. Following this work,  \cite{angluin1979finding, angluin1980inductive} provide a precise characterization of which language families $C$ is language identification in the limit possible. This characterization further emphasized the impossibility of language identification in the limit by ruling out the vast majority of language families. 

Very recently, and inspired by large language models, \cite{kleinberg2024language} study the problem of language \emph{generation} in the limit. In this problem, the adversary also  picks a language $K \in C$, and begins to enumerate the strings one by the one to the player in rounds $t = 1, 2, \dots$. However, now, after observing the string $w_t$ in round $t \in \mathbbm{N}$, the player guesses a string $\hat{w}_t \in U$ with the hope that $\hat{w}_t \in K \setminus \{w_1, \dots, w_t\}$. Once again, the player gets no feedback at all. The player has generated from $K$ in the limit, if there exists a finite time step $t \in \mathbbm{N}$ such that for all $s \geq t$, we have that $\hat{w}_s \in K \setminus \{w_1, \dots, w_s\}$. Remarkably, \cite{kleinberg2024language} prove a strikingly different result -- while Gold-Angluin show that identification in the limit is impossible for most language families, \cite{kleinberg2024language} show that generation in the limit is possible for \emph{every} countable language family $C$. This shows that language identification and generation are drastically different in the limit. 

Concurrently and independently from our work, \cite{kalavasis2024limits} study generation in the stochastic setting, where the positive examples revealed to the generator are sampled i.i.d. from some unknown distribution. In this model, they study the trade-offs between generating with breadth and generating with consistency and resolve the open question posed by \cite{kleinberg2024language} for a large family of language models. In addition, \cite{kalavasis2024limits} quantify the error rates for generation with breadth/consistency according to the universal rates framework of \cite{bousquet2021theory}.

In a previous version of this manuscript, we had posed whether all countable class are non-uniformly generatable (see Definition \ref{def:generability}) as an open question. In a recent follow-up work, \cite{charikar2024exploring}, independently of us, resolve this affirmatively. However, \cite{charikar2024exploring} show that non-uniform generation is not possible using only membership queries. This is in contrast to generatability in the limit, where \cite{kleinberg2024language} show that every countable classes is generatable in the limit using only membership queries. In addition, similar to \cite{kalavasis2024limits}, \cite{charikar2024exploring} establish a trade-off between consistency and breadth by defining a new notion of exhaustive generation and provide a complete characterization of which classes are exhaustively generatable. Finally, \cite{charikar2024exploring} characterize which classes are uniformly generatable when feedback is available. 

\section{Preliminaries} \label{sec:prelim}

\subsection{Notation} \label{sec:not}
Let $\Xcal$ denote a \emph{countable} example space (e.g. text, molecules, images) and $\Hcal \subseteq \{0, 1\}^{\Xcal}$ denote a binary hypothesis class (e.g. class of all vision transformers). Let $\Xcal^{\star}$ denote the set of all finite subsets of $\Xcal$. In the context of language modeling, one can think about $\Xcal$ as the collection of all valid strings, and each hypothesis $h \in \Hcal$ as a language (i.e. a subset of strings). For a hypothesis $h \in \Hcal$, let $\text{supp}(h) := \{x \in \Xcal: h(x) = 1\}$, that is, its collection of positive examples. For any $h \in \Hcal$, an \emph{enumeration} of $\operatorname{supp}(h)$ is any infinite sequence $x_1, x_2, \dots$ such that $\bigcup_{i \in \mathbbm{N}} \{x_i\} = \operatorname{supp}(h)$. In other words, for every $x \in \operatorname{supp}(h)$, there exists an $i \in \mathbbm{N}$ such that $x_i = x$. 

For any class $\Hcal$ and a finite sequence of examples $x_1, \dots, x_n$, let

$$\Hcal(x_1, \dots, x_n) := \{h \in \Hcal: \{x_1, \dots, x_n\} \subseteq \operatorname{supp}(h)\}.$$ In learning theory literature, $\Hcal(x_1, \dots, x_n)$ is also called the ``version space" of  $\Hcal$ induced by the sample $\{(x_i, 1)\}_{i=1}^n$ (i.e., the set of all consistent hypothesis).

For any class $\Hcal$, define $\langle \cdot \rangle_{\Hcal}$ as its induced closure operator such that

$$\langle x_1,\dots,x_n \rangle_{\Hcal} := \begin{cases}
			\bigcap_{h \in \Hcal(x_{1:n})} \operatorname{supp}(h), & \text{if $|\Hcal(x_{1:n})| \geq 1$}\\
            \bot, & \text{if $|\Hcal(x_{1:n})| = 0$}
		 \end{cases}.$$

\vspace{5pt}

\begin{remark} \label{rem:erm}In learning-theoretic terms, $\langle x_1,\dots,x_n \rangle_{\Hcal}$ is the set of positive examples common to all hypothesis in the version space of $\Hcal$ consistent with the sample $(x_1, 1), \dots, (x_n, 1).$ From this perspective, one can check closure membership, i.e. given an example $x$ and a sequence of examples $x_1, \dots, x_n$, return  $\mathbbm{1}\{\langle x_1,\dots,x_n \rangle_{\Hcal} \neq \bot \text{ and } x\in \langle x_1,\dots,x_n \rangle_{\Hcal} \}$, using access to an Empirical Risk Minimization \emph{(ERM)} oracle. Formally, an \emph{ERM} oracle is a mapping $\Ocal: 2^{\{0, 1\}^{\Xcal}} \times (\Xcal \times \{0, 1\})^{\star} \rightarrow \naturals \cup \{0\}$, which given a class $\Hcal \subseteq \{0, 1\}^{\Xcal}$ and a labeled sample $S \in (\Xcal \times \{0, 1\})^{\star}$, outputs $\min_{h \in \Hcal} \sum_{(x, y) \in S} \mathbbm{1}\{h(x) \neq y\}.$ Then, given a class $\Hcal$, a sequence of examples $x_1, \dots, x_n$,  one can compute  $\mathbbm{1}\{\langle x_1,\dots,x_n \rangle_{\Hcal} \neq \bot \text{ and } x\in\mathbbm{1}\{\langle x_1,\dots,x_n \rangle_{\Hcal} \}$ using the following procedure. First, pass to $\Ocal$ the sample $S = \{(x_1,1), \dots, (x_n, 1)\}$ and $\Hcal$, and let $r$ be its output. If $r \geq 1$, output $0$.  Otherwise, define the sample $S_x = \{(x_1, 1), \dots, (x_n, 1), (x, 0)\}.$ Query $\Ocal$ on $S_x$ and $\Hcal$ and let $r_x$ be its output. Output $r_x$. To see why the latter step works, suppose $r_x = 0$. Then, that means there exists a hypothesis $h \in \Hcal$ such that $\{x_1, \dots. x_n\} \subseteq \operatorname{supp}(h)$ but $x \notin \operatorname{supp}(h)$. Thus, it cannot be the case that $x \in \langle x_1, \dots, x_n \rangle_{\Hcal}.$ On the other hand, if $r_x = 1$, then it must mean that for every $h \in \Hcal$ such that $\{x_1, \dots, x_n\} \subseteq \operatorname{supp}(h)$, we have that $h(x) = 1$. Accordingly, $x \in \langle x_1, \dots, x_n \rangle_{\Hcal}$ by definition. 
\end{remark}


\vspace{5pt}

\noindent We will sometimes make the following assumption about hypothesis classes.

\begin{assumption} [Uniformly Unbounded Support (UUS)] A hypothesis class $\Hcal \subseteq \{0, 1\}^{\Xcal}$ satisfies the \emph{Uniformly Unbounded Support (UUS)} property if $|\operatorname{supp}(h)| = \infty$ for every $h \in \Hcal$. 
\end{assumption}

Finally, we will let $[N]:= \{1, \dots, N\}$ and abbreviate a finite sequence $x_1, \dots, x_n$ as $x_{1:n}.$

\subsection{Generatability}
We restate the model of generation in \cite{kleinberg2024language} using the notation in Section \ref{sec:not}. Consider the following two-player game. At the start, the adversary picks a hypothesis $h \in \Hcal$ and an enumeration $x_1, x_2, \dots$ of $\operatorname{supp}(h)$ and does not reveal them to the learner. The game then proceeds over rounds $t = 1, 2, \dots$. In each round $t \in \mathbbm{N}$, the adversary reveals $x_t$. The generator, after observing $x_1, \dots, x_t$, must output $\hat{x}_t \in \Xcal \setminus \{x_1, \dots, x_t\}$ and suffers the loss  $\mathbbm{1}\{\hat{x}_t \notin \operatorname{supp}(h) \setminus \{x_1, \dots, x_t\} \}$. Crucially, the generator \emph{never} observes its loss as it does not know $h$. The goal of the generator is to eventually perfectly generate new, positive examples $x \in \operatorname{supp}(h)$.

\begin{remark}
    We highlight that the generator does not receive any feedback. This captures the unsupervised nature of generative machine learning. 
\end{remark}

To make this formal, we first define a generator as  a mapping from a finite sequence of examples to a new example.

\begin{definition}[Generator] \label{def:generator} A generator is a map $\Gcal: \Xcal^{\star} \rightarrow \Xcal$ that takes a finite sequence of examples $x_1, x_2, \dots$ and outputs a new example $x$. 
\end{definition}

We can now use the existence of a good generator to define the property of generatability in the limit from Section 2 of \cite{kleinberg2024language}.

\begin{definition}[Generatability in the Limit] \label{def:geninlim} Let $\Hcal \subseteq \{0, 1\}^{\Xcal}$  be any hypothesis class satisfying the $\operatorname{UUS}$ property. Then, $\Hcal$ is \emph{generatable in the limit} if there exists a generator $\Gcal$ such that for every $h \in \Hcal$, and any \emph{enumeration} $x_1, x_2, \dots$ of $\operatorname{supp}(h)$, there exists a  $t^{\star} \in \mathbbm{N}$ such  that $\Gcal(x_{1:s}) \in \operatorname{supp}(h) \setminus \{x_1, \dots, x_s\}$ for all $s \geq t^{\star}$.
\end{definition}


Roughly speaking, generatability captures the \emph{existence} of the ability to eventually generate positive examples, when no feedback is available and the underlying hypothesis is not known. Note that the adversary can repeat examples in its stream, but it must eventually enumerate all the positive examples of the chosen hypothesis. On the other hand, the adversary is still powerful as it can examine/simulate the generator $\Gcal$ in any way imaginable before choosing the true hypothesis and the enumeration of its support.

\begin{remark}
\cite{kleinberg2024language} assume that their algorithms have  access to ``membership queries." That is, their algorithms have access to black box that can answer questions of the form ``Is $w \in L$?" for any string $w \in U$. In our model, this is equivalent to being able to pick an $h \in \Hcal$ and query its label on any $x \in \Xcal$, which is the standard assumption in learning theory literature. In this paper, we are not concerned with computability. Accordingly, a generator is simply a function that maps past examples to a future one. This view of generators aligns with Section 4 of \cite{kleinberg2024language}, where the authors do not place any computational restrictions on their algorithms.
\end{remark}

While not explicitly defined, \cite{kleinberg2024language} also consider a notion of \emph{uniform} generatability in Theorem 2.2 of the Section titled ``A Result for Finite Collections." By ``uniform", we mean that the amount of time required before the generator should perfectly generate new positive examples should only be a function of the class $\Hcal$ and thus the same across all hypothesis $h \in \Hcal$ and enumerations of $\operatorname{supp}(h)$.  This is in contrast to generatability in the limit, where the time step $t^{\star}$ can depend on both the sequence of examples $x_1, x_2, \dots$ and the selected hypothesis $h \in \Hcal$. Definition \ref{def:unifgen} formalizes this notion of uniform generatability.


\begin{definition}[Uniform Generatability] \label{def:unifgen} Let $\Hcal \subseteq \{0, 1\}^{\Xcal}$  be any hypothesis class satisfying the $\operatorname{UUS}$ property. Then, $\Hcal$ is \emph{uniformly} generatable, if there exists a generator $\Gcal$ and $d^{\star} \in \mathbbm{N}$, such that for every $h \in \Hcal$ and any sequence $x_1, x_2, \dots$ with $\{x_1, x_2, \dots\} \subseteq \operatorname{supp}(h)$, if there exists $t^{\star} \in \mathbbm{N}$ such that $|\{x_1, \dots, x_{t^{\star}}\}| = d^{\star}$, then $\Gcal(x_{1:s}) \in  \operatorname{supp}(h) \setminus \{x_1, \dots, x_s\}$ for all $s \geq t^{\star}$.
\end{definition}

A subtle detail in Definition \ref{def:unifgen} is the fact that we must force the adversary to play a sufficient number of distinct examples before we require the generator to be perfect. This is necessary, as otherwise, the adversary can play the same example in all rounds, and even a simple hypothesis class with two hypotheses which share exactly one example in their support cannot be uniformly generatable. This restriction is also captured by Theorem 2.2 in \cite{kleinberg2024language} and effectively means that a generator witnessing Definition \ref{def:unifgen} must be able to generate new, positive examples after observing a sufficient number of distinct positive examples. In fact, given a generator $\Gcal$, the number of positive examples needed before perfect generation is akin to sample complexity and mistake-bounds in PAC and online learning. This motivates a notion of ``uniform generation sample complexity."

\begin{definition}[Uniform Generation Sample Complexity] Given a class $\Hcal \subseteq \{0, 1\}^{\Xcal}$ and a generator $\Gcal$, the \emph{uniform generation sample complexity} of a generator $\Gcal$ is the smallest number $d_{\Gcal} \in \mathbbm{N}$, such that $\Gcal$ perfectly generates according to Definition \ref{def:unifgen} after it observes $d_{\Gcal}$ unique positive examples. If no such number exists, we set $d_{\Gcal} = \infty.$
\end{definition}

While natural, one can also arrive at uniform generatability by swapping the order of quantifiers in Definition \ref{def:geninlim}. More precisely, one also gets uniform generatability by moving generation sample complexity ``left" twice. That is, before the quantifiers on the selected hypotheses and the stream, and therefore, in line with the the existence of the generator. This motivates an \emph{intermediate} setting, we term non-uniform generatability, where we only move the generation sample complexity ``left" once, and in particular, before the quantifier on the stream.

\begin{definition}[Non-uniform Generatability] \label{def:generability} Let $\Hcal \subseteq \{0, 1\}^{\Xcal}$ be any hypothesis class satisfying the $\operatorname{UUS}$ property. Then, $\Hcal$ is \emph{non-uniformly generatable} if there exists a generator $\Gcal$ such that for every $h \in \Hcal$, there exists a  $d^{\star} \in \mathbbm{N}$ such that for any sequence $x_1, x_2, \dots$ with $\{x_1, x_2, \dots\} \subseteq \operatorname{supp}(h)$, if there exists $t^{\star} \in \mathbbm{N}$ such that $|\{x_1, \dots, x_{t^{\star}}\}| = d^{\star}$, then $\Gcal(x_{1:s}) \in \operatorname{supp}(h) \setminus \{x_1, \dots, x_s\}$ for all $s \geq t^{\star}$.
\end{definition}

 We use the term ``non-uniform" to denote the fact that the number of distinct examples needed before perfect generation can depend on the hypothesis selected by the adversary, and hence, it is ``non-uniform" over the hypothesis class $\Hcal$. But do note that the number of distinct examples needed is still uniform over the possible stream chosen by the adversary. Hence, we use the term ``uniform" and ``non-uniform" only with respect to the hypothesis chosen by the adversary. Again, we require the restriction that the  adversary must select a sufficient number of distinct examples, as otherwise, even trivial classes are not non-uniformly generatable.

By inspecting the order of quantifiers, it is clear that uniform generatability is the strongest of the three properties, while generatability in the limit is the weakest.  In particular, for any class $\Hcal$, we have that 
$$\text{Uniform Generatability} \implies \text{Non-uniform Generatability} \implies \text{Generatability in the limit}.$$

This ordering is tight in the sense that the reverse directions are \emph{not} true.

\begin{proposition} \label{prop:gencomp} Let $\Xcal$ be countable. There exists classes $\Hcal_1, \Hcal_2 \subseteq \{0, 1\}^{\Xcal}$ satisfying the \emph{UUS} property such that: 

\begin{itemize}
\item[(i)] $\Hcal_1$ is non-uniformly generatable but not uniformly generatable.
\item[(ii)] $\Hcal_2$ is generatable in the limit but not non-uniformly generatable. 
\end{itemize}
    
\end{proposition}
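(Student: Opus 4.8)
The plan is to build two explicit hypothesis classes over a convenient countable $\Xcal$ (say $\Xcal = \naturals$, or a disjoint union of countably many "blocks"), and to exploit the characterizations in terms of the Closure dimension $\operatorname{C}(\Hcal)$ that the paper develops. Since the informal theorems tell us that uniform generatability is equivalent to $\operatorname{C}(\Hcal) < \infty$, and non-uniform generatability is equivalent to a countable union of finite-Closure-dimension classes, the natural strategy for part (i) is to exhibit a class $\Hcal_1$ that is a countable increasing union $\bigcup_n \Hcal_1^{(n)}$ with each $\operatorname{C}(\Hcal_1^{(n)})<\infty$ but with $\operatorname{C}(\Hcal_1)=\infty$. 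A clean candidate: for each $n$, let $\Hcal_1^{(n)}$ consist of hypotheses whose supports are "large" sets that pairwise-intersect in a controlled way so that after seeing $n$ distinct positive examples the closure already pins down an infinite set of guaranteed-positive examples, while across all $n$ no finite number of examples suffices. One should then verify directly from Definition \ref{def:unifgen} that no single $d^\star$ works (the adversary picks $h$ from deep in the union, forcing arbitrarily many distinct examples before the version space closure becomes non-trivial), but that the non-uniform generator — which, after identifying which $\Hcal_1^{(n)}$ the observed examples are consistent with, applies the uniform generator for that subclass — witnesses Definition \ref{def:generability}. All supports are made infinite by padding, so UUS holds.

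For part (ii) the plan is to use the gap between the non-uniform characterization (a union over $\naturals$ of finite-Closure pieces) and generatability in the limit (whose sufficient conditions in the paper are strictly weaker, e.g. countableness alone). So I would take $\Hcal_2$ to be a \emph{countable} class — hence generatable in the limit by the $\cite{kleinberg2024language}$ result recovered in Contribution (2) — that cannot be written as a countable union of finite-Closure-dimension subclasses. The obstruction to write down is a countable class where \emph{every} infinite subclass has infinite Closure dimension; a standard construction is to take hypotheses indexed by finite binary strings (or by natural numbers) arranged so that any infinite subfamily still has the "shattering-like" pattern that drives $\operatorname{C}=\infty$ — for instance, supports $S_i$ with $S_i \cap S_j$ arbitrarily large over $i,j$ in any infinite index set, while each $S_i$ itself is infinite. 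Then any decomposition $\Hcal_2 = \bigcup_n \Hcal_2^{(n)}$ has some $\Hcal_2^{(n)}$ infinite, hence of infinite Closure dimension, so by the non-uniform characterization $\Hcal_2$ is not non-uniformly generatable; yet being countable it is generatable in the limit.

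The steps, in order, are: (1) fix $\Xcal$ and define $\Hcal_1$, $\Hcal_2$ concretely; (2) check UUS for both; (3) for $\Hcal_1$: compute $\operatorname{C}(\Hcal_1^{(n)}) < \infty$ for each $n$, conclude non-uniform generatability via the characterization, then give an adversary argument showing no uniform $d^\star$ exists (equivalently $\operatorname{C}(\Hcal_1) = \infty$); (4) for $\Hcal_2$: invoke countability $\Rightarrow$ generatable in the limit, then show every infinite subclass has infinite Closure dimension, so no valid non-uniform decomposition exists.

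The main obstacle I anticipate is step (4)'s lower bound: one must argue that \emph{no} clever countable decomposition of $\Hcal_2$ into finite-Closure pieces exists, which requires the construction to be robust — the infinite-Closure-dimension behavior has to be inherited by \emph{every} infinite subfamily, not just $\Hcal_2$ itself. Getting the combinatorics of the supports right (so that the Closure dimension witnesses — long chains of examples each of which fails to be forced into the closure — survive passing to arbitrary infinite subclasses) is the delicate part; the rest is bookkeeping with the definitions and the cited characterizations.
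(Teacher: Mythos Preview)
Your plan for part~(i) is fine and essentially matches the paper: exhibit a countable class with $\operatorname{C}(\Hcal_1)=\infty$ (hence not uniformly generatable by Theorem~\ref{thm:unifgen}) and invoke the non-uniform characterization / countability (Corollary~\ref{cor:countnonunif}) for the positive direction.

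Your plan for part~(ii), however, has a genuine gap. You propose to take $\Hcal_2$ \emph{countable} so that generatability in the limit is free, and then argue it admits no decomposition into finite-Closure-dimension pieces. But this is impossible: every countable class \emph{is} non-uniformly generatable. Indeed, enumerate $\Hcal_2=\{h_1,h_2,\dots\}$ and set $\Hcal_2^{(n)}=\{h_1,\dots,h_n\}$; each $\Hcal_2^{(n)}$ is finite, hence uniformly generatable (Theorem~\ref{thm:genfinite}), hence has $\operatorname{C}(\Hcal_2^{(n)})<\infty$, and $\Hcal_2^{(1)}\subseteq\Hcal_2^{(2)}\subseteq\cdots$ is exactly the decomposition required by Theorem~\ref{thm:nonunifgen}. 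This is precisely Corollary~\ref{cor:countnonunif}. Your proposed obstruction --- ``every \emph{infinite} subclass has infinite Closure dimension'' --- is therefore beside the point, since the characterization does not require the pieces to be infinite.

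The paper's construction for (ii) is necessarily \emph{uncountable}: it takes
\[
\Hcal_2=\bigl\{x\mapsto\mathbbm{1}\{x\in A\text{ or }x\le 0\}:A\in 2^{\naturals}\bigr\}\cup\bigl\{x\mapsto\mathbbm{1}\{x\in\naturals\}\bigr\}
\]
over $\Xcal=\integers$. Generatability in the limit is shown \emph{directly} by an explicit generator (play from $\naturals$ until a non-positive example appears, then play from $\integers_{\le 0}$), not via countability. The failure of non-uniform generatability is shown by a direct diagonalization against an arbitrary putative generator $\Gcal$: one builds hypotheses $h_1,h_2,\dots$ and a limit hypothesis $h_\infty$ so that for every $n$ there is a stream of $d_n$ distinct examples in $\operatorname{supp}(h_\infty)$ on which $\Gcal$ errs, with $d_n\to\infty$. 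You will need to abandon countability and argue both directions by hand.
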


We will prove Proposition \ref{prop:gencomp} in Section \ref{sec:charunifgen}. We end this section by highlighting an important practical property of uniform and non-uniform generators. Our definitions of uniform and non-uniform generatability do not require the adversary to select an enumeration of the support of its selected hypothesis. That is, any valid sequence with a sufficient number of distinct examples is enough. As a consequence, once enough distinct examples are revealed to the generator, the adversary can reveal the generators prediction on round $t$ as the positive example on round $t+1$. This effectively means that once the  generator has observed a sufficient number of distinct example, it needs to be able to produce new, unseen positive examples \emph{auto-regressively} and with no feedback at all. One way to interpret this is that once enough distinct positive examples are revealed to the generator $\Gcal$, it is able to continuously produce new unseen positive examples on its own  without any supervision by setting $x_{t+1} = \Gcal(x_1, \dots, x_t).$ This property might be useful in applications where the generator is used for downstream tasks.

\subsection{Identifiability}

In identification, one seeks not to output new, positive examples $x \in \Xcal$, but rather, to identify the true, underlying hypothesis $h \in \Hcal$ chosen by the adversary. Historically, identification has been studied in the context of language modeling, with works dating as far back as Gold's seminal work on language identification in the limit \citep{gold1967language}. For consistency sake, we will formally define Gold's model in the notation of this paper. As in generation, we start by defining an Identifier.

\begin{definition}[Identifier] An Identifier is a map $\Ical:  \Xcal^{\star} \rightarrow \{0, 1\}^{\Xcal}$ that takes as input a finite sequence of examples $x_1, x_2, \dots$ and outputs a hypothesis. 
\end{definition}

The notion of identifiability in the limit can now be written in terms of the existence of good identifiers, and one can verify that our definition of identifiability in the limit is equivalent to that from \cite{gold1967language} and \cite{angluin1979finding, angluin1980inductive}.

\begin{definition}[Identifiability in the limit] \label{def:ident} Let $\Hcal \subseteq \{0, 1\}^{\Xcal}$  be any hypothesis class satisfying the $\operatorname{UUS}$ property. Then, $\Hcal$ is \emph{identifiable in the limit} if there exists a identifier $\Ical$ such that for every $h \in \Hcal$ and any enumeration $x_1, x_2, \dots$  of $\operatorname{supp}(h)$, there exists a  $t^{\star} \in \mathbbm{N}$ such that $\Ical(x_{1:s}) = h$ for all $s \geq t^{\star}$.
\end{definition}



Although analogous definitions of uniform and non-uniform identifiability exist,   we do not define or focus on them here as they are stronger than identifiability in the limit, which is already a very restrictive requirement.




\subsection{Predictability}
It is also natural to understand the predictability of a hypothesis class $\Hcal$. Informally, the predictability of a class $\Hcal$ should measure how easy it is to predict the labels of new examples $x_1, x_2, \dots$ when the labels are produced by some unknown hypothesis $h \in \Hcal$. In this paper, we will measure predictability of a hypothesis class $\Hcal$ through their PAC and online \emph{learnability} -- properties of hypothesis classes that have been extensively studied by learning theorists \citep{vapnik1974theory, Littlestone1987LearningQW, ben2009agnostic}.

In the PAC learning model, an adversary picks both a distribution $\Dcal$ over $\Xcal$ and a hypothesis $h \in \Hcal$.  The learner receives $n$ iid samples $S = \{x_i, h(x_i)\}_{i=1}^n \sim (\Dcal \times h)^n$, where we use $\Dcal \times h$ to denote the distribution over $\Xcal \times \{0, 1\}$ defined procedurally by first sampling $x \sim \Dcal$ and then outputting $(x, h(x))$. The goal of the learner is to use the sample $S$ to output a hypothesis $f \in \{0, 1\}^{\Xcal}$ such that $f$ has low error probability on a \emph{future} labeled example drawn from $\Dcal$. 

\begin{definition}[PAC Learnability]
A hypothesis class $\Hcal$ is PAC  learnable, if there exists a function $m:(0,1)^2 \to \naturals$ and a learning algorithm $\Acal: (\Xcal \times \{0, 1\})^{\star} \to \{0, 1\}^{\Xcal}$ with the following property:  for every $\epsilon, \delta \in (0, 1)$,  distribution $\Dcal $ on $\Xcal$, and $h \in \Hcal$, algorithm $\Acal$ when run on $n \geq m(\epsilon, \delta)$ iid samples $S = \{(x_i, h(x_i))\}_{i = 1}^{n} \sim (\Dcal \times h)^n$, outputs a predictor $f := \Acal(S) \in \{0, 1\}^{\Xcal}$ such that with probability at least $ 1-\delta$ over $S \sim (\Dcal \times h) ^{n}$,
\[\expect_{x \sim \Dcal}[\mathbbm{1}\{f(x) \neq h(x)\}] \leq \epsilon.\]
\end{definition}

The seminal result by \cite{vapnik1971uniform} shows that the finiteness of the Vapnik–Chervonenkis (VC) dimension characterizes which hypothesis classes are PAC learnable.  

\begin{definition}[VC dimension]
A sequence $(x_1, \dots, x_d) \in \mathcal{X}^d$ is shattered by $\mathcal{H}$, if $\, \forall \, (y_1, \dots, y_d) \in \{0, 1\}^d$, $\exists h\in \mathcal{H}$, such that  $\forall i \in [d]$, $h(x_i) = y_i$. The VC dimension of $\mathcal{H}$, denoted $\emph{\text{VC}}(\mathcal{H})$, is the largest number $d \in \mathbbm{N}$ such that there exists a sequence $(x_1, \dots, x_d) \in \mathcal{X}^d$ that is shattered by $\mathcal{H}$. If there exists shattered sequences of arbitrarily large length $d \in \mathbbm{N}$, then we say that $\operatorname{VC}(\Hcal) = \infty$.
\end{definition}

In the online learning model, no distributional assumptions are placed \citep{Littlestone1987LearningQW, ben2009agnostic}. Instead,  an adversary plays a sequential game with the learner over $T \in \mathbbm{N}$ rounds. Before the game begins, the adversary selects a sequence of examples $x_1, x_2, \dots, x_T$ and a hypothesis $h \in \Hcal$. Then, in each round $t \in [T]$, the reveals first reveals $x_t$ to the learner, the learner makes a  prediction $\hat{y}_t \in \{0, 1\}$, the adversary reveals the true label $h(x_t)$, and finally the learner suffers the loss $\mathbbm{1}\{\hat{y}_t \neq h(x_t)\}$.The goal of the learner is to output predictions $\hat{y_t}$ such that its cumulative number of mistakes is ``small."

\begin{definition} [Online Learnability]
\label{def:agnOL}
A hypothesis class $\Hcal$ is online learnable if there exists an  algorithm $\mathcal{A}$ and sublinear function $\operatorname{R}: \mathbbm{N} \rightarrow \mathbbm{N} $ such that for any $T \in \mathbbm{N}$, any sequence of examples $x_1, \dots, x_T$, and any $h \in \Hcal$, the algorithm outputs $\hat{y}_t \in \{0, 1\}$ at every time point $t \in [T]$ such that 
$$\sum_{t=1}^T \mathbbm{1}\{\hat{y}_t \ne h(x_t) \}\leq \operatorname{R}(T).$$
\end{definition}

The online learnability of a hypothesis class $\Hcal$ is characterized by the finiteness of a different combinatorial parameter called the Littlestone dimension \citep{Littlestone1987LearningQW}. To define the Littlestone dimension, we first need to define a Littlestone tree and an appropriate notion of shattering. 

\begin{definition}[Littlestone tree]
\noindent A Littlestone tree of depth $d$ is a \emph{complete} binary tree of depth $d$ where the internal nodes are labeled by examples of $\mathcal{X}$ and the left and right outgoing edges from each internal node are labeled by $0$ and $1$ respectively.
\end{definition}

Given a Littlestone tree $\mathcal{T}$ of depth $d$, a root-to-leaf path down $\mathcal{T}$ is a bitstring $\sigma \in \{0, 1\}^d$ indicating whether to go left ($\sigma_i = 0$) or to go right ($\sigma_i = 1$) at each depth $i \in [d]$. A path $\sigma \in \{0, 1\}^d$ down $\mathcal{T}$ gives a sequence of labeled examples $\{(x_i, \sigma_i)\}_{i = 1}^{d}$, where $x_i$ is the example labeling the internal node following the prefix $(\sigma_1, \dots, \sigma_{i-1})$ down the tree. A hypothesis $h_{\sigma} \in \mathcal{H}$ shatters a path $\sigma \in \{0, 1\}^d$, if for every $i \in [d]$, we have  $h_{\sigma}(x_i) = \sigma_i$. In other words, $h_{\sigma}$ is consistent with the labeled examples when following $\sigma$. A Littlestone tree $\mathcal{T}$ is \emph{shattered} by $\mathcal{H}$ if for every root-to-leaf path $\sigma$ down $\mathcal{T}$, there exists a hypothesis $h_{\sigma} \in \mathcal{H}$ that shatters it. Using this notion of shattering, we define the Littlestone dimension of a hypothesis class. 

\begin{definition}[Littlestone dimension]
\noindent The Littlestone dimension of $\mathcal{H}$, denoted $\operatorname{L}(\mathcal{H})$, is the largest $d \in \mathbbm{N}$ such that there exists a Littlestone tree $\mathcal{T}$ of depth $d$ shattered by $\mathcal{H}$. If there exists shattered Littlestone trees $\mathcal{T}$ of arbitrary large depth, then we say that $\operatorname{L}(\mathcal{H}) = \infty$. 
\end{definition}

It is well known that online learnability implies PAC learnability, but not the other way around. That is, for every $\Hcal \subseteq \{0, 1\}^{\Xcal}$, we have that $\operatorname{L}(\Hcal) \geq \operatorname{VC}(\Hcal)$, and that the inequality can be strict. Our definitions of PAC and online learnability are uniform in nature, as is standard in learning theory literature \citep{ShwartzDavid}. There are also non-uniform and ``in-the-limit" versions of PAC and online learnability. However, we will not be concerned with them in this paper and will not make explicit the distinction between uniform and non-uniform predictability. We refer the reader Chapter 7 in \cite{ShwartzDavid} and \cite{lu2023non} for more details about the non-uniform versions of PAC and online learnability respectively and \cite{malliaris2022unstable} for a ``in-the-limit" version of PAC learnability (termed PEC learnability).


\subsection{Existing Results in Identification and Generation}
In this section, we restate the results for language identification and generation in learning theory notation. \cite{gold1967language},  \cite{angluin1979finding}, and \cite{angluin1980inductive} studied the problem of identification in the context of language modeling. In our notation, they showed that many natural hypothesis classes are \emph{not} identifiable in the limit according to Definition \ref{def:ident}. This result is often interpreted as a hardness result -- identification in the limit is \emph{impossible} in full generality, even for some natural countable classes. 

\begin{theorem}[\cite{gold1967language, angluin1979finding, angluin1980inductive}] Let $\Xcal$ be countable. There exists a countable $\Hcal \subseteq \{0, 1\}^{\Xcal}$ which is not identifiable in the limit.
\end{theorem}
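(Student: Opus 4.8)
The plan is to exhibit an explicit countable family of languages over a countable universe for which no identifier can succeed in the limit, mirroring the classical Gold construction. Concretely, take $\Xcal = \naturals$ and consider the hypothesis class $\Hcal = \{h_\infty\} \cup \{h_n : n \in \naturals\}$, where $\operatorname{supp}(h_\infty) = \naturals$ (the ``full'' language) and $\operatorname{supp}(h_n) = \{1, 2, \dots, n\}$ (the initial-segment languages). To respect the $\operatorname{UUS}$ assumption baked into Definition \ref{def:ident}, I would instead use a countable copy of this picture inside a product space — e.g. work over $\Xcal = \naturals \times \naturals$, let $h_\infty$ have support $\naturals \times \naturals$, and let $h_n$ have support $(\{1,\dots,n\} \times \naturals) \cup (\text{some fixed infinite ``tail'' set})$ arranged so that every $h_n$ has infinite support while the nesting $\operatorname{supp}(h_1) \subsetneq \operatorname{supp}(h_2) \subsetneq \cdots$ and $\bigcup_n \operatorname{supp}(h_n) = \operatorname{supp}(h_\infty)$ still holds. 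The essential combinatorial feature I need is a strictly increasing chain of supports whose union is the support of another member of the class; the exact encoding to get infinite supports is a routine bookkeeping step.

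The core argument is an adversary/diagonalization argument against an arbitrary identifier $\Ical$. Suppose for contradiction that $\Ical$ identifies $\Hcal$ in the limit. First run $\Ical$ against an enumeration of $\operatorname{supp}(h_\infty)$; by assumption there is a finite time after which $\Ical$ outputs $h_\infty$ forever. Now build an enumeration of $\operatorname{supp}(h_\infty)$ adaptively: feed elements in a way consistent with some $h_n$ until $\Ical$ is ``tricked'' into guessing $h_\infty$ (which must happen, since on any enumeration of $\operatorname{supp}(h_\infty)$ it eventually settles on $h_\infty$), then continue feeding the remaining elements of $\operatorname{supp}(h_n)$ — wait, the cleaner version: the standard trick is to interleave. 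Build a single enumeration of $\operatorname{supp}(h_\infty)$ in stages; in stage $k$, having committed to a finite prefix contained in $\operatorname{supp}(h_{n_k})$ for some large $n_k$, keep extending the prefix within $\operatorname{supp}(h_{n_k})$ until $\Ical$ outputs $h_{n_k}$ (this must happen in finite time because the prefix so far is also a valid prefix of an enumeration of $\operatorname{supp}(h_{n_k})$, on which $\Ical$ must eventually converge to $h_{n_k}$) — at which point we have caught $\Ical$ making a ``wrong'' guess relative to $h_\infty$; then jump to a larger $n_{k+1}$ and repeat. The resulting infinite stream is a legitimate enumeration of $\operatorname{supp}(h_\infty)$ on which $\Ical$ outputs $h_{n_k} \neq h_\infty$ infinitely often, contradicting convergence to $h_\infty$.

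The key steps in order are: (1) fix the countable class with a strictly nested support chain plus its union-hypothesis, adjusting the encoding so $\operatorname{UUS}$ holds; (2) observe that on any enumeration of a finite-ish ``prefix-consistent'' support the identifier must eventually commit, giving the ``catching'' lemma; (3) run the stagewise adversary to produce one enumeration of $\operatorname{supp}(h_\infty)$ that forces infinitely many wrong guesses; (4) conclude no identifier works, so $\Hcal$ is not identifiable in the limit. The main obstacle is step (2)–(3): one must be careful that the adaptively constructed stream is genuinely an \emph{enumeration} of $\operatorname{supp}(h_\infty)$ (every element eventually appears) and not merely a sequence drawn from it — this is handled by also promising, at each stage, to enumerate the next not-yet-listed element of $\operatorname{supp}(h_\infty)$ before moving on, so the diagonalization and the exhaustiveness requirement are satisfied simultaneously. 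A secondary subtlety is ensuring the ``catching'' step terminates; this follows because any finite prefix lying inside $\operatorname{supp}(h_{n})$ extends to a full enumeration of $\operatorname{supp}(h_n)$, and $\Ical$, being a fixed function, must converge to $h_n$ on that enumeration, hence must output $h_n$ at \emph{some} finite extension of the current prefix.
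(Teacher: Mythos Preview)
The paper does not supply its own proof of this statement; it is cited as a classical result of Gold and Angluin and left unproved. Your proposal is correct and is exactly the standard Gold construction: a strictly nested chain of supports together with their union furnishes the canonical obstruction to identification in the limit, and your stagewise adversary is the right way to defeat an arbitrary fixed identifier $\Ical$.

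Two minor remarks. First, your $\operatorname{UUS}$ adjustment is sound but slightly over-engineered: simply taking $\operatorname{supp}(h_n) = \{1,\dots,n\} \times \naturals$ already gives infinite, nested supports whose union is $\naturals \times \naturals = \operatorname{supp}(h_\infty)$, so no auxiliary tail set is needed. Second, the two subtleties you flag --- that the adaptively built stream must genuinely enumerate all of $\operatorname{supp}(h_\infty)$, and that each catching step terminates because the current prefix extends to a full enumeration of $\operatorname{supp}(h_{n_k})$ on which $\Ical$ is assumed to converge --- are precisely the places where such arguments can fail, and you have handled both correctly.
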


In particular, \cite{angluin1980inductive} provides a precise characterization of which classes are identifiable in the limit. Roughly, the condition states that every language $L$ must have a ``tell-tale" finite subset of strings $S \subset L$ such that any other language  $L^{\prime}$ which also contains $S$ cannot be a proper subset of $L$.  Theorem \ref{thm:idenhard} restates this condition in the notation of this paper.

\begin{theorem}[Theorem 1 in \cite{angluin1980inductive}]  \label{thm:idenhard} Let $\Xcal$ be countable and $\Hcal \subseteq \{0, 1\}^{\Xcal}$ be any hypothesis class. Then $\Hcal$ is identifiable in the limit if and only if for every $h \in \Hcal$, there exists $S \subseteq \operatorname{supp}(h)$ such that: 

\begin{itemize}
\item[(i)] $|S| < \infty$.
\item[(ii)]  $\forall h^{\prime} \in \Hcal$, $S \subseteq \operatorname{supp}(h^{\prime}) \implies \operatorname{supp}(h^{\prime}) \not\subset\operatorname{supp}(h) .$
\end{itemize}

\end{theorem}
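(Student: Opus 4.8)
The plan is to prove both implications directly: for sufficiency I would exhibit an explicit identifier built from the tell-tale sets, and for necessity I would construct an adversarial enumeration forcing infinitely many mind changes. Throughout I use that in this binary setting a hypothesis is determined by its support (so $h = h'$ iff $\operatorname{supp}(h) = \operatorname{supp}(h')$) and, as in the Gold--Angluin setup described above, that $\Hcal$ is countable, say $\Hcal = \{h_1, h_2, \dots\}$; for each $h_i$ fix one finite tell-tale $S_i \subseteq \operatorname{supp}(h_i)$ witnessing (i)--(ii).

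For sufficiency, I would define the identifier by a version-space-plus-tell-tale rule: on input $x_{1:t}$, let $X_t = \{x_1,\dots,x_t\}$ and output $h_i$ for the least index $i$ that is a \emph{candidate}, meaning $X_t \subseteq \operatorname{supp}(h_i)$ and $S_i \subseteq X_t$ (output $h_1$ if there is no candidate). The key observation is that ``$X_t \subseteq \operatorname{supp}(h_i)$'' is monotonically decreasing in $t$ while ``$S_i \subseteq X_t$'' is monotonically increasing, so each index is a candidate on a single (possibly empty, possibly infinite) interval of times. Now fix the true $h$ and an enumeration $x_1, x_2, \dots$ of $\operatorname{supp}(h)$, and let $i^{\star}$ be the least index with $\operatorname{supp}(h_{i^{\star}}) = \operatorname{supp}(h)$. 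Then $i^{\star}$ is a candidate for all large $t$, since $S_{i^{\star}}$ is a finite subset of $\operatorname{supp}(h)$ (hence eventually inside $X_t$) and $X_t \subseteq \operatorname{supp}(h)$ always. It remains to kill every $j < i^{\star}$: if $\operatorname{supp}(h) \not\subseteq \operatorname{supp}(h_j)$ then $X_t \not\subseteq \operatorname{supp}(h_j)$ for all large $t$; and if $\operatorname{supp}(h) \subseteq \operatorname{supp}(h_j)$ then $\operatorname{supp}(h) \subsetneq \operatorname{supp}(h_j)$ by minimality of $i^{\star}$, so the tell-tale property of $h_j$ forces $S_j \not\subseteq \operatorname{supp}(h)$, whence $S_j \not\subseteq X_t$ ever. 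Since there are finitely many $j < i^{\star}$, the identifier outputs $h_{i^{\star}} = h$ from some point on.

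For necessity, suppose $\Ical$ identifies $\Hcal$ in the limit and, for contradiction, that some $h$ has no finite tell-tale; negating (ii), every finite $P \subseteq \operatorname{supp}(h)$ admits $g \in \Hcal$ with $P \subseteq \operatorname{supp}(g) \subsetneq \operatorname{supp}(h)$. I would build an enumeration $\rho$ of $\operatorname{supp}(h)$ in stages. Maintaining a finite prefix whose element set lies in $\operatorname{supp}(h)$: at stage $k$, take the current element set $P$, pick $g_k$ with $P \subseteq \operatorname{supp}(g_k) \subsetneq \operatorname{supp}(h)$, and observe that the current prefix extends to a full enumeration of $\operatorname{supp}(g_k)$ (by UUS, $\operatorname{supp}(g_k)$ is infinite); along that enumeration $\Ical$ must converge to $g_k$, so appending finitely many elements of $\operatorname{supp}(g_k)$ produces a longer prefix on which $\Ical$ outputs $g_k \neq h$ (distinct supports). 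Then append the $k$-th element of a fixed enumeration of $\operatorname{supp}(h)$ to guarantee coverage, and iterate. The resulting $\rho$ enumerates $\operatorname{supp}(h)$ yet $\Ical$ outputs a hypothesis other than $h$ at infinitely many prefixes, contradicting identifiability.

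I expect the necessity construction to be the main obstacle: one must simultaneously (a) keep every appended element inside $\operatorname{supp}(h)$ so the stage-wise choice of $g_k$ from the failed tell-tale condition is legitimate, (b) force a genuine mind change away from $h$ at each stage, leveraging that identification along an enumeration of $\operatorname{supp}(g_k)$ compels $\Ical$ to eventually print $g_k$, and (c) still exhaust $\operatorname{supp}(h)$ in the limit. Juggling these three requirements, together with the remark that distinct hypotheses have distinct supports so $g_k \neq h$, is the delicate part; the sufficiency direction is comparatively routine once one has the right algorithm, reducing to the two monotonicity observations plus a single application of the impostor's tell-tale property.
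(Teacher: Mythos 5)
The paper states this result as a restatement of Angluin's Theorem~1 and does not prove it, so there is no in-paper proof to compare against; your proposal is a correct reconstruction of Angluin's original argument, and the two directions are in good shape.

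On sufficiency, your version-space-plus-tell-tale identifier (output the least-indexed $h_i$ with $X_t \subseteq \operatorname{supp}(h_i)$ and $S_i \subseteq X_t$) is exactly Angluin's ``identification by enumeration.'' The two monotonicity observations are sound, $i^{\star}$ is eventually and permanently a candidate, and your case split on $j < i^{\star}$ is the right one: if $\operatorname{supp}(h) \not\subseteq \operatorname{supp}(h_j)$ the version space eventually excludes $j$, while if $\operatorname{supp}(h) \subsetneq \operatorname{supp}(h_j)$ the tell-tale of $h_j$ applied to $h' = h$ forces $S_j \not\subseteq \operatorname{supp}(h)$, so $S_j \not\subseteq X_t$ ever. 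Since a binary hypothesis is determined by its support, $h_{i^{\star}} = h$, which is what Definition~\ref{def:ident} demands.

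On necessity, the staged fooling construction is the standard one. The two facts that make it go through, and which you invoke correctly, are (a) the negated tell-tale condition hands you, for \emph{any} finite $P \subseteq \operatorname{supp}(h)$, some $g \in \Hcal$ with $P \subseteq \operatorname{supp}(g) \subsetneq \operatorname{supp}(h)$, so the choice of $g_k$ is legitimate as the accumulated prefix grows; and (b) because $g_k \in \Hcal$, identification in the limit compels $\Ical$ to stabilize on $g_k$ along any enumeration of $\operatorname{supp}(g_k)$ extending the current prefix, so you can cut it off after a finite block at which $\Ical$ outputs $g_k \neq h$. Interleaving one element of a fixed enumeration of $\operatorname{supp}(h)$ per stage ensures $\rho$ covers all of $\operatorname{supp}(h)$, and every appended element lies in $\operatorname{supp}(h)$ because $\operatorname{supp}(g_k) \subsetneq \operatorname{supp}(h)$, so $\rho$ is a genuine enumeration of $\operatorname{supp}(h)$ on which $\Ical$ changes its mind infinitely often. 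This is the intended contradiction.

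Two minor points worth making explicit. First, the theorem as printed does not say $\Hcal$ is countable, but you correctly read countability in: Angluin's result is about indexed families, and your sufficiency argument relies on fixing an enumeration $\Hcal = \{h_1, h_2, \dots\}$. Second, the theorem is stated without repeating the $\operatorname{UUS}$ hypothesis, but it is implicit since Definition~\ref{def:ident} is only formulated for $\operatorname{UUS}$ classes; this matters for the necessity direction, where you use that $\operatorname{supp}(g_k)$ admits an infinite enumeration extending the current prefix. Neither point is a gap in your reasoning, just assumptions that deserve to be stated up front.
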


On the other hand, \cite{kleinberg2024language} recently show that this is not the case for generatability in the limit -- all \emph{countable} $\Hcal$ that satisfy the UUS property are generatable in the limit! 

\begin{theorem} [Theorem 4.1 in \cite{kleinberg2024language}] \label{thm:gencount} Let $\Xcal$ be countable and $\Hcal \subseteq \{0, 1\}^{\Xcal}$. If $\Hcal$ is countable and satisfies the $\operatorname{UUS}$ property, then $\Hcal$ is generatable in the limit. 
\end{theorem}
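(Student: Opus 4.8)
The plan is to exhibit an explicit generator, in the spirit of \cite{kleinberg2024language}, driven by an enumeration of $\Hcal$. Since $\Hcal$ is countable, fix an enumeration $\Hcal = \{h_1, h_2, \dots\}$; since $\Xcal$ is countable (and infinite, as $\operatorname{UUS}$ forces whenever $\Hcal \neq \emptyset$), fix an enumeration of $\Xcal$ too, so that ``the first element of a nonempty subset of $\Xcal$'' is well defined. On input $x_{1:t}$, write $S_t := \{x_1, \dots, x_t\}$ and let $C_t := \{\, i \le t : S_t \subseteq \operatorname{supp}(h_i)\,\}$ be the indices, among the first $t$ hypotheses, consistent with the positive examples seen so far. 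If $C_t = \emptyset$, output the first element of $\Xcal \setminus S_t$. Otherwise list $C_t = \{j_1 < j_2 < \dots < j_k\}$, let $m_t$ be the largest $m \in [k]$ with $\big(\bigcap_{\ell=1}^{m}\operatorname{supp}(h_{j_\ell})\big)\setminus S_t \neq \emptyset$ (such an $m$ exists because the $m=1$ set is $\operatorname{supp}(h_{j_1})\setminus S_t$, infinite by $\operatorname{UUS}$), and output the first element of $\big(\bigcap_{\ell=1}^{m_t}\operatorname{supp}(h_{j_\ell})\big)\setminus S_t$. In words: I would generate from the intersection of a \emph{maximal consistent prefix}, ordered by enumeration index.

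For the analysis, fix the adversary's $h \in \Hcal$, let $z$ be an index with $h_z = h$, and write $K := \operatorname{supp}(h)$; fix any enumeration $x_1, x_2, \dots$ of $K$. Two observations drive the proof: (a) if $K \not\subseteq \operatorname{supp}(h_i)$ then $h_i$ is \emph{eventually inconsistent}, since a witness $x \in K \setminus \operatorname{supp}(h_i)$ must appear in the stream, putting $i \notin C_t$ for all large $t$; and (b) if $K \subseteq \operatorname{supp}(h_i)$ then $h_i$ is consistent at \emph{every} round, as every revealed example lies in $K$. Applying (a) to the finitely many $i \le z$ with $K \not\subseteq \operatorname{supp}(h_i)$, I let $t^\star$ be the maximum of $z$ and the finitely many rounds at which their witnesses appear. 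Then for all $t \ge t^\star$ one has $C_t \cap [z] = V := \{\, i \le z : K \subseteq \operatorname{supp}(h_i)\,\}$, and since $z \in V$ both $C_t \neq \emptyset$ and $\max(C_t \cap [z]) = z$.

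Now fix $t \ge t^\star$, list $C_t = \{j_1 < \dots < j_k\}$, and let $p$ be the position of $z$, so $j_p = z$ and $\{j_1, \dots, j_p\} = C_t \cap [z] = V$. Then $\bigcap_{\ell=1}^{p}\operatorname{supp}(h_{j_\ell}) = \bigcap_{i \in V}\operatorname{supp}(h_i) \supseteq K$, which is infinite by $\operatorname{UUS}$; deleting the finite set $S_t$ keeps it nonempty, so $m_t \ge p$. Hence the output lies in $\big(\bigcap_{\ell=1}^{m_t}\operatorname{supp}(h_{j_\ell})\big)\setminus S_t \subseteq \operatorname{supp}(h_{j_p})\setminus S_t = \operatorname{supp}(h)\setminus\{x_1,\dots,x_t\}$, exactly the requirement of Definition \ref{def:geninlim} (with $t^\star$ depending on $h$ and the enumeration, as permitted). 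The genuinely non-obvious step — the ``hard part'' — is the decision to generate from a maximal consistent \emph{prefix} rather than from the full version space: enlarging the set of hypotheses used can only shrink the intersection (risking emptiness once $S_t$ is removed), while using too few risks landing outside $K$; the inequality $m_t \ge p$ is precisely the assertion that the maximal prefix always reaches past the true index $z$, which is what makes the scheme correct.

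Finally, I would note that the statement is purely information-theoretic — a generator is merely a function of $x_{1:t}$ — so no decidability is required; but, exactly as in \cite{kleinberg2024language}, the generator can be implemented using only membership queries on $\Hcal$ (equivalently, an $\operatorname{ERM}$ oracle; cf.\ Remark \ref{rem:erm}): membership $i \in C_t$ is a single oracle call, and the prefix search against the finite subclasses $\{h_{j_1},\dots,h_{j_m}\}$ can be handled by dovetailing the sweep over $\Xcal$ with the enumeration of $\Hcal$.
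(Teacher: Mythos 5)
Your proof is correct, and it gives a self-contained, direct construction in the Gold/Kleinberg--Mullainathan style: maintain the set $C_t$ of consistent hypotheses among the first $t$ indices, take the longest \emph{prefix} (by enumeration order) whose supports still have unexhausted intersection, and output from that intersection. The analysis is sound: observation (a) handles the finitely many $i \le z$ that are eventually ruled out, observation (b) guarantees $z$ stays in $C_t$ once $t \ge z$, and the key inequality $m_t \ge p$ follows because $\bigcap_{i \in V}\operatorname{supp}(h_i) \supseteq K$ is infinite by UUS, so subtracting the finite $S_t$ leaves it nonempty. The inclusion $\bigcap_{\ell \le m_t}\operatorname{supp}(h_{j_\ell}) \subseteq \operatorname{supp}(h_{j_p}) = K$ then closes the argument, and $t^\star$ depends only on $h$ and the chosen enumeration, exactly as Definition~\ref{def:geninlim} permits.

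The paper does not reprove Theorem~\ref{thm:gencount} directly; it cites it and instead establishes the strictly stronger Corollary~\ref{cor:countnonunif} (countable $\Rightarrow$ non-uniformly generatable), which it derives modularly by writing $\Hcal = \bigcup_n \{h_1,\dots,h_n\}$, invoking Theorem~\ref{thm:genfinite} to get $\operatorname{C}(\{h_1,\dots,h_n\}) < \infty$, and then applying the characterization Theorem~\ref{thm:nonunifgen}. A second route in Appendix~\ref{app:weaksuff} (Theorem~\ref{thm:altweaksuff} with Algorithm~\ref{alg:gen}) uses the EUC property and, as the paper notes, also recovers Theorem~\ref{thm:gencount}. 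Your construction is essentially Algorithm~\ref{alg:gen} specialized to $\Hcal_n = \{h_1,\dots,h_n\}$, with the cosmetic difference that you threshold on nonemptiness of $\bigl(\bigcap_{\ell \le m}\operatorname{supp}(h_{j_\ell})\bigr)\setminus S_t$ rather than on infiniteness of the closure; the proof that $m_t$ reaches the true index $z$ shows why the weaker criterion suffices. What the paper's route buys is the stronger quantitative conclusion (a per-hypothesis sample-complexity bound that is uniform over streams) and reuse of the Closure-dimension machinery; what yours buys is elementarity and independence from the rest of the paper's apparatus, at the cost of proving only the weakest of the three generatability notions.
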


In addition, they also prove that finite hypothesis classes satisfy the stronger notion of uniform generatability. 

\begin{theorem} [Theorem 2.2 in \cite{kleinberg2024language}] 
\label{thm:genfinite}
Let $\Xcal$ be countable and $\Hcal \subseteq \{0, 1\}^{\Xcal}$. If $\Hcal$ is finite and satisfies the $\operatorname{UUS}$ property, then $\Hcal$ is \emph{uniformly} generatable. 
\end{theorem}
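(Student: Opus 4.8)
The plan is to exhibit a single generator $\Gcal$ and a single threshold $d^{\star}\in\naturals$, both depending only on $\Hcal$, and show the pair witnesses Definition~\ref{def:unifgen}. Fix once and for all an enumeration $e_1,e_2,\dots$ of the countable set $\Xcal$ (which is infinite, since UUS forces $|\operatorname{supp}(h)|=\infty$ for any $h\in\Hcal$). Define the \emph{closure generator} $\Gcal$ by: on input $x_{1:t}$, if $\langle x_{1:t}\rangle_{\Hcal}\neq\bot$ and $\langle x_{1:t}\rangle_{\Hcal}\setminus\{x_1,\dots,x_t\}\neq\emptyset$, output $e_j$ for the least index $j$ with $e_j\in\langle x_{1:t}\rangle_{\Hcal}\setminus\{x_1,\dots,x_t\}$; otherwise output $e_j$ for the least $j$ with $e_j\notin\{x_1,\dots,x_t\}$. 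This is a well-defined map $\Xcal^{\star}\to\Xcal$.

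\textbf{Step 1 (the threshold is finite).} Set
$$d^{\star} := 1 + \max\bigl\{\, |\langle x_{1:n}\rangle_{\Hcal}| \;:\; n\in\naturals,\ x_1,\dots,x_n\in\Xcal,\ \langle x_{1:n}\rangle_{\Hcal}\neq\bot,\ |\langle x_{1:n}\rangle_{\Hcal}|<\infty \,\bigr\},$$
with the convention $\max\emptyset = 0$. The point is that whenever $\langle x_{1:n}\rangle_{\Hcal}\neq\bot$ it equals $\bigcap_{h\in\Hcal(x_{1:n})}\operatorname{supp}(h)$, which depends on $x_{1:n}$ only through the version space $\Hcal(x_{1:n})\subseteq\Hcal$. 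Since $\Hcal$ is finite there are at most $2^{|\Hcal|}$ such version spaces, hence only finitely many possible closures, hence only finitely many finite values the cardinality $|\langle x_{1:n}\rangle_{\Hcal}|$ can take; the maximum over a finite set of naturals is finite, so $d^{\star}<\infty$. (This $d^{\star}$ is essentially $\operatorname{C}(\Hcal)+1$; the finiteness of the Closure dimension for finite classes is exactly the content of this step.)

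\textbf{Step 2 (once $d^{\star}$ distinct positives are seen, the closure is infinite).} Let $h\in\Hcal$ and let $x_1,x_2,\dots$ be any sequence with $\{x_1,x_2,\dots\}\subseteq\operatorname{supp}(h)$, and suppose $t^{\star}\in\naturals$ satisfies $|\{x_1,\dots,x_{t^{\star}}\}|=d^{\star}$. Fix any $s\geq t^{\star}$. Since $\{x_1,\dots,x_s\}\subseteq\operatorname{supp}(h)$ we have $h\in\Hcal(x_{1:s})$, so $\Hcal(x_{1:s})\neq\emptyset$, $\langle x_{1:s}\rangle_{\Hcal}\neq\bot$, and $\langle x_{1:s}\rangle_{\Hcal}=\bigcap_{h'\in\Hcal(x_{1:s})}\operatorname{supp}(h')\subseteq\operatorname{supp}(h)$. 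I claim $\langle x_{1:s}\rangle_{\Hcal}$ is infinite. If it were finite, then by the definition of $d^{\star}$ we would have $|\langle x_{1:s}\rangle_{\Hcal}|\leq d^{\star}-1$; but every $h'\in\Hcal(x_{1:s})$ satisfies $\{x_1,\dots,x_s\}\subseteq\operatorname{supp}(h')$, so $\{x_1,\dots,x_s\}\subseteq\langle x_{1:s}\rangle_{\Hcal}$, and $\{x_1,\dots,x_s\}\supseteq\{x_1,\dots,x_{t^{\star}}\}$ has at least $d^{\star}$ distinct elements — a contradiction. Hence $\langle x_{1:s}\rangle_{\Hcal}$ is infinite, so $\langle x_{1:s}\rangle_{\Hcal}\setminus\{x_1,\dots,x_s\}\neq\emptyset$. (Morally this is just monotonicity: enlarging the set of observed examples only shrinks the version space, which only enlarges the closure, so "infinite closure" is an absorbing state that is reached as soon as the observed positives outnumber every finite closure.)

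\textbf{Step 3 (conclusion).} By Step~2, for every $s\geq t^{\star}$ the generator $\Gcal$ falls into its first branch, so $\Gcal(x_{1:s})\in\langle x_{1:s}\rangle_{\Hcal}\setminus\{x_1,\dots,x_s\}\subseteq\operatorname{supp}(h)\setminus\{x_1,\dots,x_s\}$, which is precisely what Definition~\ref{def:unifgen} demands. As $\Gcal$ and $d^{\star}$ were chosen depending only on $\Hcal$, this shows $\Hcal$ is uniformly generatable. The main (indeed only nontrivial) obstacle is Step~1 — establishing the uniform finite bound on finite closures — and this is exactly where the finiteness of $\Hcal$ is used.
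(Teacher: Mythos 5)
Your proof is correct, and it is essentially the argument the paper's own machinery would give: the paper does not reprove this theorem (it cites Kleinberg--Mullainathan), but it supplies exactly your two ingredients --- Lemma \ref{lem:clossuff} shows that $\operatorname{C}(\Hcal)<\infty$ is sufficient for uniform generatability, and the observation that finite $\Hcal$ has only finitely many version spaces, hence only finitely many closures, hence $\operatorname{C}(\Hcal)<\infty$, is implicit. Your Steps~2--3 reprove Lemma \ref{lem:clossuff}, and your Step~1 supplies the finite-Closure-dimension fact (your $d^{\star}$ is in fact exactly $\operatorname{C}(\Hcal)+1$: any finite closure contains the distinct examples that generated it, so its cardinality is at least that many, and conversely the elements of a finite closure themselves form a distinct sequence whose closure is that same finite set). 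The only genuinely different route would be the original Kleinberg--Mullainathan argument of tracking the minimal consistent hypothesis; your closure-based argument is cleaner and aligns with the paper's framework, with the same uniform threshold $\operatorname{C}(\Hcal)+1$.
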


Unfortunately, \cite{kleinberg2024language} do not give a full characterization of which classes are uniformly generatable, non-uniformly generatable, and generatable in the limit. 
In this paper, we are interested in closing these gaps. In particular, we are interested in identifying  necessary and sufficient conditions under which a hypothesis class $\Hcal$ is uniformly and non-uniformly generatable. In learning theory, such conditions are often derived in terms of \emph{combinatorial dimensions}, which are mappings
$$\operatorname{dim}: 2^{\{0, 1\}^{\Xcal}} \rightarrow \mathbbm{N} \cup \{0, \infty\}$$
\noindent such that $\operatorname{dim}(\Hcal)$ measures an appropriate notion of expressivity of a class $\Hcal$. For example, the PAC learnability of a binary hypothesis class is fully characterized by the finiteness of the VC dimension \citep{vapnik1974theory}. Similarly, the online learnability of a binary hypothesis class is characterized by the finiteness of its Littlestone dimension \citep{Littlestone1987LearningQW}. Are there analogous dimensions that characterize uniform,  non-uniform generatability, and generatability in the limit? These questions are the main focus of this paper. 


\section{Characterizations of Generatability} \label{sec:charunifgen}

In this section, we provide a characterization of which classes are uniformly and non-uniformly generatable, as well as, a weaker sufficiency condition for generatability in the limit. We start with characterizing uniform generation.

\subsection{Uniform Generatability}
  In learning theory, it is often the case that the most ``obvious" necessary condition is also sufficient. To that end, we seek a combinatorial dimension of $\Hcal$  whose infiniteness implies that $\Hcal$ is \emph{not uniformly generatable}.  By inverting Definition \ref{def:unifgen}, we have that $\Hcal$ is \emph{not} uniformly generatable if for every generator $\Gcal$ and every $d \in \mathbbm{N}$, there exists a $h^{\star} \in \Hcal$ and a sequence $(x_i)_{i \in \mathbbm{N}}$ with $\{x_1, x_2, \dots \} \subseteq \operatorname{supp}(h^{\star})$ such that for every time point $t \in \mathbbm{N}$ where $|\{x_1, \dots, x_t\}| = d$, there exists a $s \geq t$ such that $\Gcal(x_{1:s}) \notin \operatorname{supp}(h^{\star}) \setminus \{x_1, \dots, x_s\}.$ So, our candidate dimension should satisfy the property that when it is infinite, we can find arbitrarily large sequences of examples after which any generator is guaranteed to make a mistake. With this in mind, we are ready to present the Closure dimension, whose finiteness satisfies exactly this property.

\begin{definition}[Closure dimension] \label{def:gem} The \emph{Closure dimension} of $\Hcal$, denoted $\operatorname{C}(\Hcal)$, is the largest natural number $d \in \mathbbm{N}$ for which there exists \emph{distinct} $x_1, \dots, x_d \in \Xcal$ such that $\langle x_1, \dots, x_d \rangle_{\Hcal} \neq \bot$ and $|\langle x_1, \dots, x_d \rangle_{\Hcal}| < \infty.$  If this is true for arbitrarily large $d \in \mathbbm{N}$, then we say that $\operatorname{C}(\Hcal) = \infty.$ On the other hand, if this is not true for $d = 1$, we say that $\operatorname{C}(\Hcal) = 0.$
\end{definition}


\vspace{5pt}

The following lemma shows that the finiteness of $\operatorname{C}(\Hcal)$ is necessary for uniform generatability.  The high-level idea is that the adversary can force the learner to make a mistake at time point $t$, if there are no common positive examples amongst those hypotheses that contain $x_1, \dots, x_t$ in their support.  The finiteness of $\operatorname{C}(\Hcal)$ guarantees the existence of such a $t \in \mathbbm{N}$ and $x_1, \dots, x_t$.

\begin{lemma}[Necessity in Theorem \ref{thm:unifgen}] \label{lem:closnecc}
Let $\Xcal$ be countable and $\Hcal \subseteq \{0, 1\}^{\Xcal}$ be any  class satisfying the $\operatorname{UUS}$ property. If $\operatorname{C}(\Hcal) = \infty$, then $\Hcal$ is \emph{not} uniformly generatable. 
\end{lemma}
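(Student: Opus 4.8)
The plan is to prove the contrapositive in the strongest possible form: assuming $\operatorname{C}(\Hcal) = \infty$, I will show that \emph{no} generator $\Gcal$ and \emph{no} bound $d^{\star} \in \naturals$ can satisfy Definition \ref{def:unifgen}. Fix an arbitrary candidate pair $(\Gcal, d^{\star})$. Since $\operatorname{C}(\Hcal) = \infty$, I may pick $d := d^{\star}$ (or any integer $\geq d^{\star}$, but $d^\star$ suffices) and obtain distinct examples $x_1, \dots, x_{d^{\star}} \in \Xcal$ with $\langle x_1, \dots, x_{d^{\star}} \rangle_{\Hcal} \neq \bot$ and $|\langle x_1, \dots, x_{d^{\star}} \rangle_{\Hcal}| < \infty$. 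Write $F := \langle x_1, \dots, x_{d^{\star}} \rangle_{\Hcal}$, a finite set that contains $\{x_1, \dots, x_{d^{\star}}\}$ (every $h$ in the version space has these in its support, so they lie in the intersection).

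The key observation is that $F$ being finite means the version space $\Hcal(x_{1:d^{\star}})$ has ``no agreement'' beyond a finite set, so the adversary can enumerate \emph{all} of $F$ and then force $\Gcal$ to step outside the support of some consistent hypothesis. Concretely: first I would feed $\Gcal$ the examples $x_1, \dots, x_{d^{\star}}$, and then continue the stream by enumerating the remaining elements of $F$ (finitely many), reaching some time $t_0$ at which $\{x_1, \dots, x_{t_0}\} = F$ and $|\{x_1,\dots,x_{d^\star}\}| = d^{\star}$ already held at time $d^\star$. Now consider the prediction $\hat{x} := \Gcal(x_{1:t_0})$. If $\hat{x} \notin F$, then since $F = \bigcap_{h \in \Hcal(x_{1:d^\star})} \operatorname{supp}(h)$, there is some $h^{\star} \in \Hcal(x_{1:d^\star})$ with $\hat{x} \notin \operatorname{supp}(h^\star)$; this $h^\star$ is a legitimate choice for the adversary (its support contains $x_1,\dots,x_{t_0}$, which all lie in $F \subseteq \operatorname{supp}(h^\star)$), and $\Gcal$ has made a mistake at time $t_0 \geq d^{\star}$. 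If instead $\hat{x} \in F$, then $\hat{x} \in \{x_1, \dots, x_{t_0}\}$, so $\Gcal(x_{1:t_0}) \notin \Xcal \setminus \{x_1,\dots,x_{t_0}\}$ — again a mistake, and we can pick any $h^\star \in \Hcal(x_{1:d^\star})$ (nonempty since $F \neq \bot$; and by UUS its support is infinite so the adversary can legally continue the enumeration afterward). Either way we exhibit $h^\star$ and a valid stream witnessing that $(\Gcal, d^\star)$ fails Definition \ref{def:unifgen}. Since $(\Gcal, d^\star)$ was arbitrary, $\Hcal$ is not uniformly generatable.

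The main subtlety — not a deep obstacle, but the point requiring care — is matching the exact quantifier structure of Definition \ref{def:unifgen}: the adversary must commit to a single hypothesis $h^\star$ whose support contains the \emph{entire} stream it plays, and the ``mistake'' must occur at some $s \geq t^\star$ where $t^\star$ is the first time $d^\star$ distinct examples have appeared. Taking $t^\star = d^\star$ and $s = t_0$ handles this, provided I am careful that $F \subseteq \operatorname{supp}(h^\star)$ for the chosen $h^\star$ (immediate from $h^\star \in \Hcal(x_{1:d^\star})$ and the definition of the closure as an intersection of supports) and that the UUS property lets the adversary extend the finite prefix to a full valid infinite stream inside $\operatorname{supp}(h^\star)$ (e.g. by enumerating $\operatorname{supp}(h^\star)$ after time $t_0$, which does not affect the already-forced mistake at $t_0$). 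A second minor point is ensuring $x_1, \dots, x_{d^\star}$ are genuinely in $F$: this holds because each is in the support of every hypothesis of the version space by definition of $\Hcal(x_{1:d^\star})$, hence in the intersection.
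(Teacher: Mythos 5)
Your proof is correct and follows essentially the same route as the paper's: feed the generator the entire finite closure of a witnessing tuple, then observe that its output is either already-seen or excluded from the support of some consistent $h^{\star}$. One tiny point you gloss over but the paper spells out: $\operatorname{C}(\Hcal)=\infty$ directly gives a witnessing tuple of some size $d' \geq d^{\star}$, and one then needs the (easy) monotonicity observation $\Hcal(x_{1:d'})\subseteq\Hcal(x_{1:d^{\star}})$, hence $\langle x_{1:d^{\star}}\rangle_{\Hcal}\subseteq\langle x_{1:d'}\rangle_{\Hcal}$, to pass to a witnessing tuple of size exactly $d^{\star}$.
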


\begin{proof} Let $\Gcal$ be any generator and suppose $\operatorname{C}(\Hcal) = \infty$. We need to show that for every $d \in \mathbbm{N}$, there exists a $h^{\star} \in \Hcal$ and a sequence $(x_i)_{i \in \mathbbm{N}}$ with $\{x_1, x_2, \dots \} \subseteq \operatorname{supp}(h^{\star})$ such that for every time point $t \in \mathbbm{N}$ where $|\{x_1, \dots, x_t\}| = d$, there exists $s \geq t$ such that $\Gcal(x_{1:s}) \notin \operatorname{supp}(h^{\star}) \setminus \{x_1, \dots, x_s\}.$ 

To that end, fix a $d \in \mathbbm{N}$. Since $\operatorname{C}(\Hcal) = \infty$, we know that there exists some $d^{\star} \geq d$ and distinct $z_1, \dots, z_{d^{\star} }$ such that $\Hcal(z_1, \dots, z_{d^{\star}}) \neq \bot$ and $|\langle z_1, \dots, z_{d^{\star}}\rangle_{\Hcal}| < \infty.$  Since $\Hcal(z_{1:d^{\star}}) \subseteq \Hcal(z_{1:d})$, we also know that $|\langle z_1, \dots, z_d \rangle_{\Hcal}| < \infty.$

Let $p := |\langle z_1, \dots, z_d \rangle_{\Hcal}|.$ Note that for every $x \in \Xcal \setminus \langle z_1, \dots, z_d \rangle_{\Hcal}$, there exists a $h \in \Hcal(\langle z_1, \dots, z_d \rangle_{\Hcal})$ such that $x \notin \operatorname{supp}(h)$. Let $\hat{x}_{p} = \Gcal(\langle z_1, \dots, z_d \rangle_{\Hcal})$ denote the prediction of $\Gcal$ when given as input $\langle z_1, \dots, z_d \rangle_{\Hcal}$ sorted in its natural order. Without loss of generality suppose that $\hat{x}_{p} \notin \langle z_1, \dots, z_d \rangle_{\Hcal}$. Then, using the previous observation, there exists $h^{\star} \in \Hcal(\langle z_1, \dots, z_d \rangle_{\Hcal})$ such that $\hat{x}_{p} \notin \operatorname{supp}(h^{\star}) \setminus \langle z_1, \dots, z_d \rangle_{\Hcal}$.  Pick this $h^{\star}$ and consider the stream $x_1, x_2, \dots$ by sorting $\langle z_1, \dots, z_d \rangle_{\Hcal}$ in its natural ordering and then appending the stream $x_{p+1},x_{p+2},\dots \subseteq \operatorname{supp}(h^{\star})$ such that  $\langle z_1, \dots, z_d \rangle_{\Hcal} \cap \bigcup^{\infty}_{i = p+1} \{x_i\} =  \emptyset$. 

It remains to show that for every time point $t \in \mathbbm{N}$ where $|\{x_1, \dots, x_t\}| = d$, there exists $s \geq t$ such that $\Gcal(x_{1:s}) \notin \operatorname{supp}(h^{\star}) \setminus \{x_1, \dots, x_s\}.$  By definition, we know that $x_1, \dots, x_d$ are distinct. Thus, when $t = d$, we have that $|x_1, \dots, x_t| = d.$ Moreover, this is the only such time point. Accordingly, it suffices to show that there exists  $s \geq t$ such that $\Gcal(x_{1:s}) \notin \operatorname{supp}(h^{\star}) \setminus \{x_1, \dots, x_s\}.$  However, by definition, we know that  $\Gcal(x_{1:p}) \notin \operatorname{supp}(h^{\star}) \setminus \{x_1, \dots, x_{p}\}.$  Since $p \geq d$ and $d \in \mathbbm{N}$ was chosen arbitrarily, our proof is complete.
\end{proof}

The proof of Lemma \ref{lem:closnecc} also shows that the Closure dimension provides a quantitative lower bound on the optimal uniform generation sample complexity. Namely, for any class $\Hcal$ and generator $\Gcal$, we have that $d_{\Gcal} \geq \operatorname{C}(\Hcal).$ Next, we move to the sufficiency condition. Namely, the following lemma shows that the finiteness of $\operatorname{C}(\Hcal)$ is also sufficient for uniform generatability. The main idea is that if $\operatorname{C}(\Hcal) = d$, then one only needs to observe $d+1$ distinct examples before one can identify an infinite ``core" set $S \subseteq \Xcal$ that lies in the support of the hypothesis chosen by the adversary. The generator can then just play from the set $S$ for the remaining rounds. 

\begin{lemma}[Sufficiency in Theorem \ref{thm:unifgen}] \label{lem:clossuff}
Let $\Xcal$ be countable and $\Hcal \subseteq \{0, 1\}^{\Xcal}$ be any  class satisfying the $\operatorname{UUS}$ property. When $\operatorname{C}(\Hcal) < \infty$, there exists a generator $\Gcal$, such that for every $h \in \Hcal$ and any sequence $(x_i)_{i \in \mathbbm{N}}$ with $\{x_1, x_2, \dots \} \subseteq \operatorname{supp}(h)$, if there exists a $t \in \mathbbm{N}$ such that $|\{x_1, \dots, x_{t}\}| = \operatorname{C}(\Hcal) + 1$, then  $\Gcal(x_{1:s}) \in  \operatorname{supp}(h) \setminus \{x_1, \dots, x_s\}$ for all $s \geq t$.
\end{lemma}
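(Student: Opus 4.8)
The plan is to exhibit an explicit generator and show that the finiteness of the Closure dimension forces the closure of every version space induced by sufficiently many distinct positive examples to be \emph{infinite}; since that closure is always contained in the support of whatever hypothesis the adversary chose, it is a safe, inexhaustible pool to generate from. Throughout, write $d := \operatorname{C}(\Hcal)$, which is finite by hypothesis, and fix once and for all an enumeration of the countable set $\Xcal$.

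\textbf{Step 1: the key structural fact.} I would first prove that for every finite $Z \subseteq \Xcal$ with $|Z| \geq d+1$ and $\Hcal(Z) \neq \emptyset$, the closure $\langle Z \rangle_{\Hcal}$ is infinite. Pick any $d+1$ distinct elements $w_1, \dots, w_{d+1}$ of $Z$. Since $\{w_1,\dots,w_{d+1}\} \subseteq Z$ we have $\Hcal(Z) \subseteq \Hcal(w_{1:d+1})$, so $\Hcal(w_{1:d+1}) \neq \emptyset$ and hence $\langle w_1,\dots,w_{d+1}\rangle_{\Hcal} \neq \bot$. Because $d+1 > \operatorname{C}(\Hcal)$, Definition \ref{def:gem} forbids a non-$\bot$ closure of $d+1$ distinct elements from being finite, so $\langle w_1,\dots,w_{d+1}\rangle_{\Hcal}$ is infinite. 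Finally, intersecting supports over the larger version space $\Hcal(w_{1:d+1}) \supseteq \Hcal(Z)$ gives a smaller set, i.e.\ $\langle w_1,\dots,w_{d+1}\rangle_{\Hcal} \subseteq \langle Z\rangle_{\Hcal}$, so $\langle Z\rangle_{\Hcal}$ is infinite too. This is exactly the monotonicity of version spaces already exploited in the proof of Lemma \ref{lem:closnecc}.

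\textbf{Step 2: the generator, and Step 3: correctness.} Define $\Gcal$ on input $x_{1:s}$ by: set $Z = \{x_1,\dots,x_s\}$; if $|Z| \leq d$ or $\Hcal(Z) = \emptyset$, output an arbitrary fallback element of $\Xcal$ (under UUS, $\Xcal$ is infinite, so such an element outside $Z$ exists); otherwise output the first element of $\langle Z\rangle_{\Hcal} \setminus Z$ in the fixed enumeration of $\Xcal$. By Step 1 this is well-defined, since $\langle Z\rangle_{\Hcal}$ is then infinite and $Z$ finite. (For computability one never materializes the infinite set $\langle Z\rangle_{\Hcal}$; instead one scans the enumeration of $\Xcal$ and applies the ERM-based closure-membership test of Remark \ref{rem:erm} until the first admissible element is found.) Now fix $h \in \Hcal$ and a stream $(x_i)_{i\in\naturals}$ with $\{x_1,x_2,\dots\} \subseteq \operatorname{supp}(h)$, and suppose $|\{x_1,\dots,x_t\}| = d+1$ for some $t$. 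For every $s \geq t$ the set of distinct revealed examples is nondecreasing in $s$, so $|\{x_1,\dots,x_s\}| \geq d+1$, and $h \in \Hcal(\{x_1,\dots,x_s\})$ since each $x_i \in \operatorname{supp}(h)$, so $\Hcal(\{x_1,\dots,x_s\}) \neq \emptyset$. Hence $\Gcal(x_{1:s})$ is the first element of $\langle\{x_1,\dots,x_s\}\rangle_{\Hcal} \setminus \{x_1,\dots,x_s\}$, which exists by Step 1. Since $\langle\{x_1,\dots,x_s\}\rangle_{\Hcal} = \bigcap_{g\in\Hcal(\{x_1,\dots,x_s\})}\operatorname{supp}(g) \subseteq \operatorname{supp}(h)$, we get $\Gcal(x_{1:s}) \in \operatorname{supp}(h)\setminus\{x_1,\dots,x_s\}$, which is what the lemma demands.

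\textbf{Main obstacle.} There is no real difficulty here; the only point needing care is Step 1 — confirming that the closure of a version space only shrinks as more consistent examples arrive, so that once it is infinite it stays infinite — together with the bookkeeping observation that the count of distinct revealed examples is monotone in time, so the hypothesis-independent threshold $d+1$, once crossed, is never un-crossed.
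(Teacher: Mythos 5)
Your proof is correct and takes essentially the same approach as the paper: both exploit the fact that $\operatorname{C}(\Hcal) = d < \infty$ forces the closure of any $d+1$ distinct consistent examples to be infinite, and then have the generator draw from that closure once $d+1$ distinct examples have appeared. The only cosmetic difference is that the paper freezes the closure $\langle x_1,\dots,x_{t^\star}\rangle_{\Hcal}$ at the first round $t^\star$ where $d+1$ distinct examples are observed and plays from it thereafter, whereas you recompute $\langle\{x_1,\dots,x_s\}\rangle_{\Hcal}$ at each round $s$; both variants are sound by the monotonicity observation you spell out in Step 1.
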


\begin{proof} Let $0 \leq d < \infty$ and suppose $\operatorname{C}(\Hcal) = d$. Then,
for every distinct sequence of $d+1$ examples $x_1, \dots, x_{d+1}$ such that $\langle x_{1:d+1} \rangle_{\Hcal} \neq \bot$, we have that $|\langle x_1, \dots, x_{d+1} \rangle_{\Hcal}| = \infty.$ Consider the following generator $\Gcal$. Until $d+1$ distinct examples are observed, $\Gcal$ plays any $\hat{x}_s$. Suppose on round $t^{\star}$, $\Gcal$ observes $d+1$ distinct examples. Then, $\Gcal$ plays any $\hat{x}_s \in \langle x_1, \dots, x_{t^{\star}} \rangle_{\Hcal} \setminus \{x_1, \dots, x_s\}$ for all $s \geq t^{\star}$ . Let $h^{\star}$ be the hypothesis chosen by the adversary. It suffices to show that $\hat{x}_s \in \operatorname{supp}(h^{\star}) \setminus \{x_1, \dots, x_s\}$ for all $s \geq t^{\star}.$ However, this just follows from the fact that $|\langle x_1, \dots, x_{t^{\star}} \rangle_{\Hcal}| = \infty$ and $\langle x_1, \dots, x_{t^{\star}} \rangle_{\Hcal} \subseteq \operatorname{supp}(h^{\star})$. In particular, $|\langle x_1, \dots, x_{t^{\star}} \rangle_{\Hcal}| = \infty$ ensures that $\hat{x}_s$ is well-defined and $\langle x_1, \dots, x_{t^{\star}} \rangle_{\Hcal} \subseteq \operatorname{supp}(h^{\star})$ ensures that it always lies in $\operatorname{supp}(h^{\star}) \setminus \{x_1, \dots, x_s\}$.
\end{proof}

\begin{remark}
The generator in Lemma \ref{lem:clossuff} can be efficiently implemented given access to the following max-min oracle $\mathcal{O}_{\emph{max-min}}: 2^{\{0, 1\}^{\Xcal}} \times \Xcal^{\star} \rightarrow \Xcal$. Given a hypothesis class $\Hcal \subseteq \{0, 1\}^{\Xcal}$ and a finite sequence of examples $x_1, \dots, x_t$, $\mathcal{O}_{\emph{max-min}}$ returns 

$$\argmax_{x \in \Xcal\setminus\{x_1, \dots, x_t\}} \,  \min_{h \in \Hcal} \sum_{i=1}^{t} \mathbbm{1}\{h(x_i) \neq 1\} + \mathbbm{1}\{h(x) \neq 0\}.$$

The inner minimization is simply the output of an ERM oracle $\Ocal_{\emph{ERM}}$, whch given $\Hcal$ and the sample $S \in (\Xcal \times \{0, 1\})^{\star}$, outputs the minimal empirical loss on $S$ amongst all $h \in \Hcal$. Thus, the output of $\mathcal{O}_{\emph{max-min}}$ can be equivalently written as: 

$$\mathcal{O}_{\emph{max-min}}(\Hcal, x_{1:t}) = \argmax_{x \in \Xcal\setminus\{x_1, \dots, x_t\}}  \, \Ocal_{\emph{ERM}}(\Hcal, \{(x_1, 1), \dots, (x_t, 1), (x, 0)\}).$$

In fact, the generator in Lemma \ref{lem:clossuff} can be implemented with a single call to the max-min oracle on every round. To see why, suppose that $t \geq \operatorname{C}(\Hcal) + 1.$ Then, since $|\langle x_1, \dots, x_t \rangle_{\Hcal}| = \infty$, it must be the case that $|\langle x_1, \dots, x_t \rangle_{\Hcal} \setminus \{x_1, \dots, x_t\}| \neq \emptyset$ and for every $x \in \langle x_1, \dots, x_t \rangle_{\Hcal} \setminus \{x_1, \dots, x_t\}$, we have that 

$$\Ocal_{\emph{ERM}}(\Hcal, \{(x_1, 1), \dots, (x_t, 1), (x, 0)\}) \geq 1.$$
Accordingly,
$$\max_{x \in \Xcal\setminus\{x_1, \dots, x_t\}} \,  \Ocal_{\emph{ERM}}(\Hcal, \{(x_1, 1), \dots, (x_t, 1), (x, 0)\}) \geq 1$$
and therefore $\mathcal{O}_{\emph{max-min}}(\Hcal, x_{1:t})$ returns an element $\hat{x}_t \in \Xcal\setminus \{x_1, \dots, x_t\}$ such that:
$$\Ocal_{\emph{ERM}}(\Hcal, \{(x_1, 1), \dots, (x_t, 1), (\hat{x}_t, 0)\}) \geq 1 .$$
But, such an $\hat{x}_t$ must lie in $\langle x_1, \dots, x_t \rangle_{\Hcal}$, completing the proof. As a concluding remark, note that the max-min oracle should remind the reader of the min-max objective/two-player game used to motivate Generative Adversarial Networks (see Equation 1 in \cite{goodfellow2014generative}). In particular, for our min-max oracle, one can think of the minimizer/\emph{ERM} oracle as the discriminator and the outer maximizer as the generator. 
\end{remark}

Composing Lemmas \ref{lem:closnecc} and \ref{lem:clossuff} gives a characterization of uniform generatability.

\begin{theorem}[Characterization of Uniform Generatability] \label{thm:unifgen} Let $\Xcal$ be countable and $\Hcal \subseteq \{0, 1\}^{\Xcal}$ satisfy the $\operatorname{UUS}$ property. The following statements are equivalent. 

\begin{itemize}
\item[(i)] $\Hcal$ is uniformly generatable.
\item[(ii)] $\operatorname{C}(\Hcal) < \infty$.
\end{itemize}
\end{theorem}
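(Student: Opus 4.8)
The plan is straightforward: Theorem~\ref{thm:unifgen} is the equivalence $(i) \Leftrightarrow (ii)$, and both directions have already been done as separate lemmas, so the ``proof'' is really just a composition argument. Concretely, I would first establish the contrapositive of $(i) \Rightarrow (ii)$: if $\operatorname{C}(\Hcal) = \infty$, then Lemma~\ref{lem:closnecc} directly gives that $\Hcal$ is not uniformly generatable. Hence uniform generatability implies $\operatorname{C}(\Hcal) < \infty$, which is the direction $(i) \Rightarrow (ii)$.

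For the converse $(ii) \Rightarrow (i)$, I would invoke Lemma~\ref{lem:clossuff}. Assuming $\operatorname{C}(\Hcal) < \infty$, that lemma produces a generator $\Gcal$ with the property that for every $h \in \Hcal$ and every sequence $(x_i)_{i \in \naturals}$ with $\{x_1, x_2, \dots\} \subseteq \operatorname{supp}(h)$, whenever there is a $t \in \naturals$ with $|\{x_1, \dots, x_t\}| = \operatorname{C}(\Hcal)+1$, we get $\Gcal(x_{1:s}) \in \operatorname{supp}(h) \setminus \{x_1, \dots, x_s\}$ for all $s \geq t$. To match Definition~\ref{def:unifgen} verbatim, I would simply set $d^{\star} := \operatorname{C}(\Hcal) + 1 \in \naturals$ (this is a legitimate natural number precisely because $\operatorname{C}(\Hcal) < \infty$), and the generator $\Gcal$ from Lemma~\ref{lem:clossuff} together with this $d^{\star}$ witnesses uniform generatability. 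This closes the loop.

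There is essentially no obstacle here since the real content lives in the two lemmas; the only thing to be slightly careful about is the bookkeeping of the quantifier ``if there exists $t^{\star} \in \naturals$ such that $|\{x_1, \dots, x_{t^{\star}}\}| = d^{\star}$'' — I should note that Lemma~\ref{lem:clossuff} is stated with exactly this conditional form (it only promises correctness \emph{after} $\operatorname{C}(\Hcal)+1$ distinct examples appear), so the translation to Definition~\ref{def:unifgen} with $d^{\star} = \operatorname{C}(\Hcal)+1$ is immediate and requires no extra argument. One could also remark in passing that this pins down the optimal uniform generation sample complexity to lie between $\operatorname{C}(\Hcal)$ and $\operatorname{C}(\Hcal)+1$, combining the lower bound from the remark after Lemma~\ref{lem:closnecc} with the upper bound from Lemma~\ref{lem:clossuff}, though that is a side comment rather than part of the proof of the stated equivalence.

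\begin{proof}
We prove the two implications separately.

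$(i) \Rightarrow (ii)$: We argue by contraposition. Suppose $\operatorname{C}(\Hcal) = \infty$. Then Lemma~\ref{lem:closnecc} shows that $\Hcal$ is not uniformly generatable. Hence, if $\Hcal$ is uniformly generatable, then $\operatorname{C}(\Hcal) < \infty$.

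$(ii) \Rightarrow (i)$: Suppose $\operatorname{C}(\Hcal) < \infty$, and let $d^{\star} := \operatorname{C}(\Hcal) + 1 \in \naturals$. By Lemma~\ref{lem:clossuff}, there exists a generator $\Gcal$ such that for every $h \in \Hcal$ and any sequence $(x_i)_{i \in \naturals}$ with $\{x_1, x_2, \dots\} \subseteq \operatorname{supp}(h)$, if there exists $t^{\star} \in \naturals$ with $|\{x_1, \dots, x_{t^{\star}}\}| = d^{\star}$, then $\Gcal(x_{1:s}) \in \operatorname{supp}(h) \setminus \{x_1, \dots, x_s\}$ for all $s \geq t^{\star}$. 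This is precisely the condition in Definition~\ref{def:unifgen}, with the witnessing constant $d^{\star} = \operatorname{C}(\Hcal) + 1$. Therefore $\Hcal$ is uniformly generatable.
\end{proof}
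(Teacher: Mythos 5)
Your proof is correct and matches the paper's approach exactly: the paper also obtains Theorem~\ref{thm:unifgen} by composing Lemma~\ref{lem:closnecc} (necessity, via contraposition) with Lemma~\ref{lem:clossuff} (sufficiency, with witnessing constant $d^{\star}=\operatorname{C}(\Hcal)+1$). Your extra remark on the quantifier bookkeeping and the sample-complexity bounds is accurate but, as you note, not part of the equivalence itself.
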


We highlight that the Closure dimension not only provides a qualitative characterization of uniform generatability, but also a quantitative characterization -- the optimal uniform generation sample complexity is exactly $\Theta(\operatorname{C}(\Hcal))$. \cite{kleinberg2024language} proved that all countable classes are generatable in the limit. Are there uncountable classes that are generatable in the limit? What about uniformly generatable? Our next result proves the existence of \emph{uncountably} infinite hypothesis classes that are uniformly generatable. This provides an  improvement over \cite{kleinberg2024language} in two ways. First, it shows that finiteness is not necessary for uniformly generatability. Second, it shows that the countableness of $\Hcal$ is not necessary for generatability in the limit.

\begin{lemma}\label{lem:uncountunifgen} Let $\Xcal$ be countable. There exists a class $\Hcal \subseteq \{0, 1\}^{\Xcal}$  that is uncountably large,  satisfies the $\operatorname{UUS}$ property, and is uniformly generatable. 
\end{lemma}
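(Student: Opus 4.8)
The plan is to exhibit an explicit uncountable family $\Hcal$ with $\operatorname{C}(\Hcal)$ finite (in fact $\operatorname{C}(\Hcal) \le 1$), so that Theorem~\ref{thm:unifgen} immediately yields uniform generatability, and to arrange things so that every hypothesis has infinite support (the UUS property). Since $\Xcal$ is countable, fix a bijection with $\mathbbm{N}$, and split $\mathbbm{N}$ into a ``core'' part and a ``tag'' part: write $\Xcal = A \sqcup B$ where $A = \{a_1, a_2, \dots\}$ is a fixed countably infinite set and $B = \{b_1, b_2, \dots\}$ is another countably infinite set. For each subset $T \subseteq \mathbbm{N}$, define $h_T$ by $\operatorname{supp}(h_T) = A \cup \{b_i : i \in T\}$. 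Then $\Hcal := \{h_T : T \subseteq \mathbbm{N}\}$ is uncountable (it is indexed by the power set of $\mathbbm{N}$), and every $h_T$ has $\operatorname{supp}(h_T) \supseteq A$, hence infinite support, so UUS holds.

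The key computation is to show $\operatorname{C}(\Hcal) \le 1$, i.e., that for any single example $x \in \Xcal$ with $\Hcal(x) \neq \bot$ we have $|\langle x \rangle_{\Hcal}| = \infty$. There are two cases. If $x = a_j \in A$, then every $h_T$ contains $a_j$ in its support, so $\Hcal(x) = \Hcal$ and $\langle x \rangle_{\Hcal} = \bigcap_{T} \operatorname{supp}(h_T) = A$ (the $b_i$'s all get excluded by taking $T$ avoiding $i$), which is infinite. If $x = b_i \in B$, then $\Hcal(x) = \{h_T : i \in T\}$, and $\bigcap_{i \in T} \operatorname{supp}(h_T) = A \cup \{b_i\}$, again infinite. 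So in all cases $\langle x \rangle_{\Hcal}$ is either $\bot$-free and infinite; hence no single example witnesses the Closure dimension, giving $\operatorname{C}(\Hcal) \le 1 < \infty$. (One should also double-check $\operatorname{C}(\Hcal) \ge 1$ is irrelevant — finiteness is all we need — but it is at least $1$ since e.g. $\langle b_i \rangle_\Hcal \ne \bot$, so $\operatorname{C}(\Hcal) = 1$.)

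Having established $\operatorname{C}(\Hcal) = 1 < \infty$, Theorem~\ref{thm:unifgen} (or directly Lemma~\ref{lem:clossuff}) gives that $\Hcal$ is uniformly generatable, and in fact a generator succeeds after seeing $\operatorname{C}(\Hcal) + 1 = 2$ distinct positive examples: once two distinct examples $x_1, x_2$ are seen, $\langle x_1, x_2 \rangle_{\Hcal}$ is infinite and contained in the true support, so the generator just enumerates fresh elements of it. This completes the proof. I do not anticipate a genuine obstacle here; the only mild subtlety is making sure the constructed class really is uncountable while still being a subset of $\{0,1\}^{\Xcal}$ for countable $\Xcal$ — which is fine, since $|\{0,1\}^{\Xcal}| = 2^{\aleph_0}$ — and being careful that the $\bot$ case never arises in the Closure-dimension computation (it does not, because every $h_T$ shares the infinite core $A$, so any consistent-looking singleton is genuinely realized).
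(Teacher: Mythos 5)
Your construction is essentially the paper's construction: the paper takes $\Xcal = \mathbbm{Z}$ with shared infinite core $\mathbbm{Z}_{\leq 0}$ and lets the variable part range over $2^{\mathbbm{N}}$, which is exactly your $A$/$B$ split under different names, and your verification that every singleton closure is infinite is the same computation. The only slip is in your closing parenthetical: $\langle b_i \rangle_{\Hcal} \neq \bot$ alone does \emph{not} give $\operatorname{C}(\Hcal) \geq 1$, since a witness for the Closure dimension must have its closure both $\neq \bot$ \emph{and} finite; as every singleton closure in your class is infinite, in fact $\operatorname{C}(\Hcal) = 0$ (matching the paper), though this is immaterial because only $\operatorname{C}(\Hcal) < \infty$ was needed.
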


\begin{proof} Let $\Xcal = \mathbbm{Z}$ and $\Hcal = \{x \mapsto \mathbbm{1}\{x \in A \text{ or } x \leq 0\}: A \in  2^\mathbbm{N}\}$. It is not hard to see that $\Hcal$ satisfies the UUS property. Moreover, since $2^{\mathbbm{N}}$ is uncountably large, so is $\Hcal$. Finally, to see that $\Hcal$ is uniformly generatable, note that for every $x \in \mathbbm{Z}$, we have that  $\langle x \rangle_{\Hcal} = \mathbbm{Z}_{\leq 0}$. Thus, $\operatorname{C}(\Hcal) = 0$ and $\Hcal$ is trivially uniformly generatable. 
\end{proof}



\subsection{Non-uniform Generatability}

We next move to characterize non-uniform generatability. Similar to non-uniform PAC and online learnability, we show that our characterization of uniform generatability leads to a characterization of non-uniform generatability.

\begin{theorem}[Characterization of Non-uniform Generatability] \label{thm:nonunifgen} Let $\Xcal$ be countable and $\Hcal \subseteq \{0, 1\}^{\Xcal}$ satisfy the $\operatorname{UUS}$ property. The following statements are equivalent. 

\begin{itemize}
\item[(i)] $\Hcal$ is non-uniformly generatable.
\item[(ii)] There exists a \emph{non-decreasing} sequence of classes $\Hcal_1 \subseteq \Hcal_2 \subseteq \dots$ such that $\Hcal =\bigcup_{i\in\naturals} \Hcal_i$ and $\operatorname{C}(\Hcal_n)<\infty$ for every $n\in\naturals.$
\end{itemize}
\end{theorem}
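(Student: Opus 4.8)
The plan is to prove the two implications separately, following the template familiar from non-uniform PAC and online learnability: the direction $(ii)\Rightarrow(i)$ assembles the per-level uniform generators of Lemma \ref{lem:clossuff} into a single generator that adaptively selects a level, and the direction $(i)\Rightarrow(ii)$ decomposes $\Hcal$ according to the generation sample complexity of a witnessing non-uniform generator. A small fact I would record first is that the Closure dimension is monotone under inclusion (if $\Hcal\subseteq\Hcal'$ then $\operatorname{C}(\Hcal)\le\operatorname{C}(\Hcal')$, since enlarging the class only enlarges every version space, hence only shrinks every closure while keeping it non-$\bot$); this makes $\operatorname{C}(\Hcal_n)$ non-decreasing in $n$, so $b_n:=\operatorname{C}(\Hcal_n)+1$ is a well-behaved non-decreasing sequence of finite numbers.

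\textbf{Sufficiency, $(ii)\Rightarrow(i)$.} Given the decomposition, I would define the generator $\Gcal$ as follows. On round $s$ it has seen $x_1,\dots,x_s$; let $d_s:=|\{x_1,\dots,x_s\}|$ and let $n_s$ be the \emph{largest} index $n$ with $n\le d_s$, $b_n\le d_s$, and $\Hcal_n(x_{1:s})\neq\emptyset$, if one exists (otherwise $\Gcal$ plays arbitrarily). When $n_s$ is defined, $\Gcal$ plays any $\hat x_s\in\langle x_{1:s}\rangle_{\Hcal_{n_s}}\setminus\{x_1,\dots,x_s\}$. To verify Definition \ref{def:generability}, fix $h\in\Hcal$, set $n^\star:=\min\{n: h\in\Hcal_n\}$ (finite since $\Hcal=\bigcup_n\Hcal_n$), and set $d^\star:=\max(n^\star,b_{n^\star})$. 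For any stream with $\{x_1,x_2,\dots\}\subseteq\operatorname{supp}(h)$, suppose the distinct count reaches $d^\star$ at some round $t^\star$; then for every $s\ge t^\star$ we have $d_s\ge d^\star$, so $n^\star$ is an admissible index on round $s$ (as $n^\star\le d^\star\le d_s$, $b_{n^\star}\le d^\star\le d_s$, and $h\in\Hcal_{n^\star}(x_{1:s})$), whence $n_s\ge n^\star$. Because $n_s\ge n^\star$ we get $h\in\Hcal_{n_s}$, so $\langle x_{1:s}\rangle_{\Hcal_{n_s}}\subseteq\operatorname{supp}(h)$; and because $d_s\ge b_{n_s}=\operatorname{C}(\Hcal_{n_s})+1$ while $\langle x_{1:s}\rangle_{\Hcal_{n_s}}\neq\bot$, the definition of the Closure dimension forces $|\langle x_{1:s}\rangle_{\Hcal_{n_s}}|=\infty$ (a finite non-$\bot$ closure on more than $\operatorname{C}(\Hcal_{n_s})$ distinct examples would contradict maximality). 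Hence $\hat x_s$ is well-defined and $\hat x_s\in\operatorname{supp}(h)\setminus\{x_1,\dots,x_s\}$, as required; this step is really Lemma \ref{lem:clossuff} invoked at the adaptively chosen level $n_s$.

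\textbf{Necessity, $(i)\Rightarrow(ii)$.} Let $\Gcal$ witness non-uniform generatability of $\Hcal$, and for each $h\in\Hcal$ let $d^\star(h)\in\naturals$ be the smallest valid threshold in Definition \ref{def:generability}. Put $\Hcal_n:=\{h\in\Hcal: d^\star(h)\le n\}$; then $\Hcal_1\subseteq\Hcal_2\subseteq\cdots$ and $\bigcup_n\Hcal_n=\Hcal$, so it suffices to show $\operatorname{C}(\Hcal_n)<\infty$ for every $n$, which I would do by showing $\operatorname{C}(\Hcal_n)\le n-1$. Suppose instead $\operatorname{C}(\Hcal_n)\ge n$; truncating a witnessing sequence we obtain distinct $z_1,\dots,z_n$ with $\langle z_{1:n}\rangle_{\Hcal_n}\neq\bot$ and $p:=|\langle z_{1:n}\rangle_{\Hcal_n}|<\infty$, and $p\ge n$ since $\{z_1,\dots,z_n\}\subseteq\langle z_{1:n}\rangle_{\Hcal_n}$. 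Now repeat the construction in the proof of Lemma \ref{lem:closnecc}: present to $\Gcal$ the stream whose first $p$ terms enumerate $\langle z_{1:n}\rangle_{\Hcal_n}$, let $\hat x_p:=\Gcal(x_{1:p})$, and choose $h^\star\in\Hcal_n(z_{1:n})$ with $\hat x_p\notin\operatorname{supp}(h^\star)\setminus\langle z_{1:n}\rangle_{\Hcal_n}$ (possible because the closure is exactly $\bigcap_{g\in\Hcal_n(z_{1:n})}\operatorname{supp}(g)$; and if $\hat x_p$ already lies in the closure then it is simply a repeat of one of $x_1,\dots,x_p$). Extend the stream past round $p$ inside $\operatorname{supp}(h^\star)$. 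Since $h^\star\in\Hcal_n$ we have $d^\star(h^\star)\le n\le p$, and since the first $p$ terms are distinct, the distinct count equals $d^\star(h^\star)$ at round $t^\star:=d^\star(h^\star)\le p$; the defining property of $d^\star(h^\star)$ then demands that $\Gcal$ be correct on every round $s\ge t^\star$, contradicting the mistake at round $p\ge t^\star$. Hence $\operatorname{C}(\Hcal_n)\le n-1<\infty$, completing the characterization.

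\textbf{Where the difficulty lies.} The necessity direction and the bookkeeping around distinct counts are essentially routine once one unpacks the proof of Lemma \ref{lem:closnecc}. The delicate step is the generator in the sufficiency direction: with no feedback it cannot probe a candidate level, so it must commit to a level based only on $d_s$ and the current version space. The point that makes this work — and the thing I expect to be easiest to get wrong — is that taking the \emph{largest} admissible index guarantees $n_s\ge n^\star$, which is exactly what forces $\langle x_{1:s}\rangle_{\Hcal_{n_s}}\subseteq\operatorname{supp}(h)$; a smaller level could point the generator at a closure that misses $\operatorname{supp}(h)$ entirely, while the cap $n_s\le d_s$ is what keeps ``largest admissible index'' from being vacuous.
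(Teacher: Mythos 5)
Your proof is correct, and it is essentially the same two-sided argument as the paper's: the necessity direction uses the identical stratification $\Hcal_n=\{h:d^\star(h)\le n\}$, and the sufficiency direction builds a level-selecting generator out of the per-level uniform generators of Lemma~\ref{lem:clossuff}. The differences are matters of presentation worth noting. In the necessity direction, the paper merely observes that the witnessing $\Gcal$ is a uniform generator for each $\Hcal_n$ and invokes Theorem~\ref{thm:unifgen}; you instead re-run the Lemma~\ref{lem:closnecc} adversary inside $\Hcal_n$ and extract the sharper quantitative fact $\operatorname{C}(\Hcal_n)\le n-1$, which the paper's argument also yields implicitly via $d_\Gcal\ge\operatorname{C}(\Hcal_n)$ but does not state. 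In the sufficiency direction, the paper splits into two cases according to whether $\sup_n d_{\Gcal_n}$ is finite or infinite; your construction folds both cases into one by (a) observing that $\operatorname{C}(\Hcal_n)$ is monotone along the nested sequence $\Hcal_1\subseteq\Hcal_2\subseteq\cdots$ (a true and useful remark), and (b) adding the cap $n\le d_s$ to the admissible index set, which plays exactly the role of the paper's ``use $\Gcal_t$'' move in its second case. Your resulting generator is the paper's generator with $\Gcal_n$ specialized to the closure-based generator of Lemma~\ref{lem:clossuff}, so the arguments are equivalent in substance, with yours a bit more self-contained and slightly more quantitative. One small point worth making explicit if this were written up: in the necessity argument, the case $\hat x_p\in\langle z_{1:n}\rangle_{\Hcal_n}$ is a mistake not because $\hat x_p\notin\operatorname{supp}(h^\star)$ but because $\hat x_p$ is a repeat of one of $x_1,\dots,x_p$ and hence lands outside $\operatorname{supp}(h^\star)\setminus\{x_1,\dots,x_p\}$; you flag this parenthetically and it is correct, but it deserves a full sentence since it is the only way that branch of the case split closes.
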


\begin{remark}
    One might wonder whether we can drop ``non-decreasing" in statement (ii) of Theorem \ref{thm:nonunifgen} and write $\Hcal$ as the countable union of uniformly generatable classes $\Hcal_1, \Hcal_2, \dots$. However, this is provably \emph{false} as we will show in Lemma \ref{lem:notnonunifgen} -- while such a condition is necessary, it is \emph{not} sufficient. This differs from non-uniform PAC/online learnability whose characterization \emph{can} be written in terms of the countable union of (uniformly) PAC/online learnable classes \citep{ShwartzDavid, lu2023non}. This is one of the many differences between generation and prediction which we expand more on in Section \ref{sec:genvpred}.
\end{remark}

\cite{kleinberg2024language} showed that every countable hypothesis class is generatable in the limit. However, we can use Theorem \ref{thm:nonunifgen} to show that every countable hypothesis class is actually non-uniformly generatable.  This is a stronger result as non-uniform generation implies generation in the limit. We note that recent and independent work by \cite{charikar2024exploring} (see Theorem 1) also establish Corollary \ref{cor:countnonunif}.

\begin{corollary}[Countable Classes are Non-uniformly Generatable] \label{cor:countnonunif}
Let $\Xcal$ be countable and $\Hcal\subseteq \{0,1\}^{\Xcal}$ be any hypothesis class that satisfies the \emph{UUS} property. If $\Hcal$ is countable, then $\Hcal$ is non-uniformly generatable.
\end{corollary}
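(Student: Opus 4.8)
The plan is to reduce immediately to the characterization of non-uniform generatability in Theorem \ref{thm:nonunifgen}. Since $\Hcal$ is countable, write $\Hcal = \{h_1, h_2, \dots\}$ and define $\Hcal_n := \{h_1, \dots, h_n\}$ for each $n \in \naturals$. By construction, $\Hcal_1 \subseteq \Hcal_2 \subseteq \dots$ is non-decreasing and $\bigcup_{n \in \naturals} \Hcal_n = \Hcal$. It then remains only to verify that $\operatorname{C}(\Hcal_n) < \infty$ for every $n$, after which Theorem \ref{thm:nonunifgen} gives that $\Hcal$ is non-uniformly generatable.

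To check $\operatorname{C}(\Hcal_n) < \infty$, first note that each $\Hcal_n$ is a finite subset of $\Hcal$ and hence still satisfies the $\operatorname{UUS}$ property. One can now finish in two ways. The quick route: Theorem \ref{thm:genfinite} tells us that any finite class satisfying $\operatorname{UUS}$ is uniformly generatable, and Theorem \ref{thm:unifgen} then yields $\operatorname{C}(\Hcal_n) < \infty$. Alternatively, one can argue directly: for any distinct $x_1, \dots, x_d$ with $\langle x_1, \dots, x_d \rangle_{\Hcal_n} \neq \bot$, every $x_i$ lies in $\operatorname{supp}(h)$ for all $h \in \Hcal_n(x_{1:d})$, so $\{x_1, \dots, x_d\} \subseteq \langle x_1, \dots, x_d \rangle_{\Hcal_n}$ and thus $d \leq |\langle x_1, \dots, x_d \rangle_{\Hcal_n}|$. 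Since $\langle x_1, \dots, x_d \rangle_{\Hcal_n} = \bigcap_{h \in \Hcal_n(x_{1:d})} \operatorname{supp}(h)$ is one of the at most $2^n$ possible intersections of subfamilies of $\Hcal_n$, whenever it is finite its cardinality is at most $M := \max\{\, |\bigcap_{h \in \Gcal} \operatorname{supp}(h)| : \Gcal \subseteq \Hcal_n,\ |\bigcap_{h \in \Gcal} \operatorname{supp}(h)| < \infty \,\}$, a finite quantity. Hence $\operatorname{C}(\Hcal_n) \leq M < \infty$.

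I do not anticipate a real obstacle here; the only point requiring a line of care is the verification that finiteness of $\Hcal_n$ forces finiteness of $\operatorname{C}(\Hcal_n)$ (handled above), and the observation that subclasses of a $\operatorname{UUS}$ class remain $\operatorname{UUS}$, which is needed to legitimately invoke Theorems \ref{thm:genfinite} and \ref{thm:nonunifgen}. If $\Hcal$ is itself finite the sequence eventually stabilizes, which is still a valid instance of statement (ii) of Theorem \ref{thm:nonunifgen}, so no separate case analysis is needed.
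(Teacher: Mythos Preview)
Your proposal is correct and follows essentially the same argument as the paper: enumerate $\Hcal$, take initial segments $\Hcal_n$, use finiteness of $\Hcal_n$ (via Theorem~\ref{thm:genfinite}, and implicitly Theorem~\ref{thm:unifgen}) to get $\operatorname{C}(\Hcal_n)<\infty$, and conclude by Theorem~\ref{thm:nonunifgen}. Your additional direct bound $\operatorname{C}(\Hcal_n)\le M$ via the finitely many possible intersections is a valid self-contained alternative that bypasses Theorems~\ref{thm:genfinite} and~\ref{thm:unifgen}, but it is supplementary rather than a different route.
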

\begin{proof} (of Corollary \ref{cor:countnonunif})
    Suppose $\Hcal$ is a countable hypothesis class satisfying the UUS property. Consider an arbitrary enumeration $h_1, h_2, \dots$ of $\Hcal$. Let $\Hcal_n=\{h_1, \dots, h_n\}$ for all $ n\in\naturals$. Then, $\Hcal_1, \Hcal_2, \dots$ is a non-decreasing sequence of classes such that $\Hcal = \bigcup_{n \in \naturals} \Hcal_n.$ Moreover, since for every $n \in \naturals$, we have that $|\Hcal_{n}| = n < \infty$, Theorem \ref{thm:genfinite} gives that $\Hcal_n$ is uniformly generatable, completing the proof. 
\end{proof}

We now prove Theorem \ref{thm:nonunifgen} across two lemmas, starting with the necessity direction.

\begin{lemma}[Necessity in Theorem \ref{thm:nonunifgen}]
Let $\Xcal$ be countable and $\Hcal \subseteq \{0, 1\}^{\Xcal}$ be any  class satisfying the $\operatorname{UUS}$ property. If $\Hcal$  is non-uniformly generatable, then there exists a sequence of \emph{non-decreasing, uniformly generatable} classes $\Hcal_1 \subseteq \Hcal_2 \subseteq \cdots$ such that $\Hcal =\bigcup_{n\in\naturals} \Hcal_n$.
\end{lemma}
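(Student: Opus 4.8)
The plan is to stratify $\Hcal$ according to how many distinct positive examples each hypothesis forces the witnessing non-uniform generator to see before it becomes perfect. Fix a generator $\Gcal$ witnessing non-uniform generatability of $\Hcal$. For each $h \in \Hcal$, let $d^\star(h) \in \naturals$ be the smallest integer $d$ such that, for every sequence $x_1, x_2, \dots$ with $\{x_1, x_2, \dots\} \subseteq \operatorname{supp}(h)$, the existence of a $t^\star$ with $|\{x_1,\dots,x_{t^\star}\}| = d$ implies $\Gcal(x_{1:s}) \in \operatorname{supp}(h) \setminus \{x_1,\dots,x_s\}$ for all $s \ge t^\star$; such a $d^\star(h)$ exists precisely by the definition of non-uniform generatability. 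Then set $\Hcal_n := \{h \in \Hcal : d^\star(h) \le n\}$ for each $n \in \naturals$.

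First I would record the easy structural facts: $(\Hcal_n)_{n \in \naturals}$ is non-decreasing, $\bigcup_{n \in \naturals}\Hcal_n = \Hcal$ since $d^\star(h) < \infty$ for every $h$, and each $\Hcal_n$ inherits the $\operatorname{UUS}$ property from $\Hcal$ because $\Hcal_n \subseteq \Hcal$. It remains to show that each $\Hcal_n$ is uniformly generatable, and the natural candidate witness is the very same $\Gcal$, now paired with uniform generation sample complexity $d^\star = n$; this is legitimate because a generator is merely a map $\Xcal^\star \to \Xcal$ and does not reference the underlying class.

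The main step is then to verify that $(\Gcal, n)$ satisfies Definition \ref{def:unifgen} for $\Hcal_n$. Take any $h \in \Hcal_n$ and any sequence $x_1, x_2, \dots$ with $\{x_1, x_2, \dots\} \subseteq \operatorname{supp}(h)$ for which there is a $t^\star$ with $|\{x_1, \dots, x_{t^\star}\}| = n$. The function $t \mapsto |\{x_1, \dots, x_t\}|$ equals $1$ at $t = 1$, is non-decreasing, and increases by at most $1$ per step; since $d^\star(h) \le n$ (and $d^\star(h) \ge 1$) and this function attains the value $n$ at $t^\star$, it must attain the value $d^\star(h)$ at some $t' \le t^\star$. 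Applying the defining property of $d^\star(h)$ to this sequence at time $t'$ gives $\Gcal(x_{1:s}) \in \operatorname{supp}(h) \setminus \{x_1,\dots,x_s\}$ for all $s \ge t'$, hence in particular for all $s \ge t^\star$. This is exactly the requirement for uniform generatability of $\Hcal_n$ with sample complexity $n$, so $\Hcal_n$ is uniformly generatable (equivalently, $\operatorname{C}(\Hcal_n) < \infty$ by Theorem \ref{thm:unifgen}), completing the proof.

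I expect the only genuine subtlety to be this ``intermediate value'' observation that the distinct-count function cannot jump over $d^\star(h)$ on its way up to $n$; everything else is bookkeeping. One should also dispatch the degenerate cases: $\Hcal_n$ may be empty for small $n$, which is trivially uniformly generatable, and sequences that never reach $n$ distinct examples impose no constraint under Definition \ref{def:unifgen}.
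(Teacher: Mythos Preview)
Your proposal is correct and follows essentially the same approach as the paper: stratify $\Hcal$ by the minimal sample complexity $d_h$ that the witnessing non-uniform generator $\Gcal$ needs for each $h$, set $\Hcal_n = \{h : d_h \le n\}$, and observe that $\Gcal$ itself uniformly generates each $\Hcal_n$. You are more careful than the paper in spelling out the ``intermediate value'' step (that the distinct-count function must hit $d^\star(h)$ on its way to $n$), which the paper leaves implicit.
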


\begin{proof} 
Suppose $\Hcal$ is non-uniformly generatable and $\Gcal$ is a non-uniform generator for $\Hcal$. For every $h\in\Hcal$, let $d_h\in\naturals$ be the smallest natural number such that for any sequence $x_1, x_2, \dots$ with $\{x_1,x_2,\dots\}\subset \operatorname{supp}(h)$, if there exists a $t\in\naturals$ such that $|\{x_1,\dots,x_t\}|=d_h$, then $\Gcal(x_{1:s})\in\operatorname{supp}(h)\setminus\{x_1,\dots,x_s\}$ for all $s\ge t$. Let $\Hcal_n:=\{h\in\Hcal:d_h \leq n\}$ for all $n\in\naturals$. Then, by the definition of $\Gcal$, we know for every $n\in\naturals$, $\Gcal$ is a uniform generator for $\Hcal_n$, and therefore $\Hcal_n$ is uniformly generatable.  The proof is complete after noting that $\Hcal_1 \subseteq \Hcal_2 \subseteq \cdots$ and $\Hcal=\bigcup_{n\in\naturals}\Hcal_n$.
\end{proof}

The next lemma shows that the condition in Theorem \ref{thm:nonunifgen} is sufficient. Our proof is constructive and by a reduction -- given uniform generators $\Gcal_1, \Gcal_2, \dots$ for $\Hcal_1, \Hcal_2, \dots$ and their uniform generation sample complexities $d_{\Gcal_1}, d_{\Gcal_2}, \dots$ \footnote{One actually only needs an upper bound on the uniform generation sample complexities.}, we construct a non-uniform generator $\Gcal$ for $\Hcal = \bigcup_{n \in \naturals} \Hcal_n.$ This aligns with existing sufficiency proofs for non-uniform PAC and online learning \citep{ShwartzDavid, lu2023non}, which are also through reductions.

\begin{lemma}[Sufficiency in Theorem \ref{thm:nonunifgen}]
Let $\Xcal$ be countable and $\Hcal \subseteq \{0, 1\}^{\Xcal}$ be any class satisfying the $\operatorname{UUS}$ property. If there exists a sequence of \emph{non-decreasing, uniformly generatable}  classes $\Hcal_1 \subseteq \Hcal_2 \subseteq \dots$ such that $\Hcal = \bigcup_{n \in \naturals} \Hcal_n$, then $\Hcal$ is non-uniformly generatable. 
\end{lemma}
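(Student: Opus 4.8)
The plan is to prove this (the sufficiency direction of Theorem~\ref{thm:nonunifgen}) by an explicit reduction, building one non-uniform generator for $\Hcal$ out of the uniform generators for the $\Hcal_n$. Since each $\Hcal_n$ is uniformly generatable, Definition~\ref{def:unifgen} hands us a generator $\Gcal_n$ and a threshold $d_n \in \naturals$ with the property that, whenever $h \in \Hcal_n$ and $\{x_1, x_2, \dots\} \subseteq \operatorname{supp}(h)$, if the stream ever attains exactly $d_n$ distinct examples then $\Gcal_n(x_{1:s}) \in \operatorname{supp}(h) \setminus \{x_1, \dots, x_s\}$ for every subsequent $s$. The first step is a harmless normalization: replacing $d_n$ by $\max_{i \le n} d_i$ I may assume $d_1 \le d_2 \le \cdots$, because enlarging a uniform threshold preserves the guarantee (attaining more than $d_n$ distinct examples means the count passed through exactly $d_n$ at an earlier round, from which $\Gcal_n$ is already correct onward).

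The generator $\Gcal$ I would build acts greedily in the index. On input $x_{1:t}$, let $k := |\{x_1, \dots, x_t\}|$. If $d_1 > k$, output any element of $\Xcal \setminus \{x_1, \dots, x_t\}$ (this set is infinite since UUS forces $\Xcal$ infinite). Otherwise set $n_t := \max\{ n \in [k] : d_n \le k\}$, a maximum over a finite nonempty set (it contains $1$), and output $\Gcal_{n_t}(x_{1:t})$, falling back to an arbitrary element of $\Xcal \setminus \{x_1, \dots, x_t\}$ in the (irrelevant) case that $\Gcal_{n_t}(x_{1:t})$ happens to land in $\{x_1,\dots,x_t\}$. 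The crucial design point --- and the reason the ``non-decreasing'' hypothesis in (ii) cannot be dropped --- is that $\Gcal$ picks the \emph{largest} index the data currently licenses: running a uniform generator at an index that is too large is harmless, since nestedness gives $h \in \Hcal_n$ for all $n \ge n(h)$, whereas running one at an index whose class omits $h$ can fail forever, because the adversary's stream need not enumerate all of $\operatorname{supp}(h)$ and so may never ``refute'' the wrong class (compare the remark following Theorem~\ref{thm:nonunifgen} and Lemma~\ref{lem:notnonunifgen}).

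To verify correctness I would fix $h \in \Hcal$, set $m := \min\{ n : h \in \Hcal_n\}$ (well-defined as $\Hcal = \bigcup_n \Hcal_n$), and declare the per-hypothesis threshold $d^\star := \max(d_m, m)$. Given any stream with $\{x_1, x_2, \dots\} \subseteq \operatorname{supp}(h)$ that attains exactly $d^\star$ distinct examples at some round $t^\star$, and any $s \ge t^\star$, write $k := |\{x_1, \dots, x_s\}| \ge d^\star = \max(d_m, m)$. Then $d_1 \le d_m \le k$, so $\Gcal$ plays $\Gcal_{n_s}(x_{1:s})$; since $m \le k$ and $d_m \le k$, the index $m$ is in $\{ n \in [k] : d_n \le k\}$, hence $n_s \ge m$ and therefore $h \in \Hcal_m \subseteq \Hcal_{n_s}$. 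Also $d_{n_s} \le k$, so the (monotone, unit-step) count of distinct examples passed through exactly $d_{n_s}$ at some round $\le s$; combining this with $h \in \Hcal_{n_s}$ and $\{x_1,x_2,\dots\} \subseteq \operatorname{supp}(h)$, Definition~\ref{def:unifgen} applied to $\Gcal_{n_s}$ gives $\Gcal_{n_s}(x_{1:s}) \in \operatorname{supp}(h) \setminus \{x_1, \dots, x_s\}$, i.e. $\Gcal(x_{1:s})$ is correct. This holds for every $h$, so $\Gcal$ witnesses non-uniform generatability.

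I do not expect a serious obstacle here; the reduction is routine once one commits to the ``largest justified index'' rule. The single point that demands care is that the uniform thresholds $d_n$ need not grow with $n$: consequently the index must be capped (I cap by the number of distinct examples seen) so that the maximum is well-defined \emph{and} so that $\Gcal$ is eventually allowed to reach index $m$, and correspondingly the per-hypothesis threshold must be $\max(d_m, m)$ rather than $d_m$. The conceptual heart of the argument --- which is also exactly the obstruction to using an arbitrary countable union of uniformly generatable classes --- is the asymmetry already noted: overshooting the index is safe under nesting, undershooting is not.
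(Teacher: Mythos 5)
Your proof is correct and follows the same reduction as the paper's: build a single non-uniform generator by, at each round, dispatching to the uniform generator $\Gcal_n$ with the largest index $n$ that the currently observed distinct-example count licenses, and exploit nestedness to argue that overshooting the index is harmless. The paper's version splits into two cases according to whether $\sup_n d_{\Gcal_n}$ is finite or infinite (and only in a subsequent remark makes explicit the monotone normalization of the thresholds that its ``$\max\{n\in\naturals : d_{\Gcal_n}\le d_t\}$'' needs to be well-defined); you sidestep both issues at once by normalizing $d_n\mapsto\max_{i\le n}d_i$ up front and capping the index search to $[k]$ where $k$ is the current distinct count, which keeps the $\arg\max$ over a finite nonempty set regardless of whether the thresholds are bounded, at the modest cost of using the per-hypothesis threshold $d^\star=\max(d_m,m)$ rather than $d_m$ alone. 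This is a slightly tighter and more uniform presentation of the same argument; the verification that $n_s\ge m$, $h\in\Hcal_{n_s}$, and $d_{n_s}\le k$ jointly force correctness of $\Gcal_{n_s}(x_{1:s})$ is sound.
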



\begin{proof}
Suppose $\Hcal \subseteq \{0, 1\}^{\Xcal}$ is a class satisfying the $\operatorname{UUS}$ property for which there exists a sequence of non-decreasing, uniformly generatable classes $\Hcal_1 \subseteq \Hcal_2 \subseteq \dots$ with  $\Hcal = \bigcup_{n = 1}^{\infty} \Hcal_n$. By definition of uniform generatability, for every $n \in \naturals$, there exists a uniform generator $\Gcal_n$ for $\Hcal_n$. Let $d_{\Gcal_n}$ denote the uniform generation sample complexity of $\Gcal_n$ with respect to $\Hcal_{n}$. We consider two cases: $\sup_{n\in\naturals}d_{\Gcal_n}=\infty$ and $\sup_{n\in\naturals}d_{\Gcal_n}<\infty$.

In the first case, consider the following generator $\Gcal$. Fix $t \in \mathbbm{N}$ and consider any sequence $\{x_1,\dots,x_t\}$ such that $|\Hcal(x_1, \dots, x_t)| \geq 1$. Let  $d_t := |\{x_1,\dots,x_t\}|$ be the number of unique examples. $\Gcal$ first computes 
$$n_t = \max\{n \in \mathbbm{N} : d_{\Gcal_n} \leq d_t\}\cup\{0\}.$$

If $n_t = 0$,   $\Gcal$ plays any $\hat{x}_t \in \Xcal$. If $n_t > 0$, $\Gcal$ uses $\Gcal_{n_t}$ to generate new instances, which means 
$$\Gcal(x_{1},\dots,x_t)=\Gcal_{n_t}(x_1,\dots,x_t).$$


We now prove that such a $\Gcal$ is a non-uniform generator for $\Hcal$. To that end, 
let $h^{\star}$ be the hypothesis chosen by the adversary and suppose that $h^{\star}$ belongs to $\Hcal_{n^{\star}}$. We show that for every $t \in \naturals$ and $\{x_1,\dots,x_t\} \subseteq \operatorname{supp}(h^{\star})$ such that $d_t := |\{x_1,\dots,x_t\}| \geq d_{\Gcal_{n^{\star}}}$,  we have $\Gcal(x_{1:t}) \in \operatorname{supp}(h^{\star})\setminus\{x_1,\dots,x_t\}$. By definition, $\Gcal$ first computes 
$$n_t = \max\{n \in \mathbbm{N} : d_{\Gcal_n} \leq d_t\} \cup \{0\}.$$

Note that $n_t\ge n^{\star}$ since $d_{\Gcal_{n^{\star}}} \leq d_t$. Thus, $|\Hcal_{n_t}(x_{1:t})| \geq 1$ since $h^{\star} \in \Hcal_{n_t}$. Accordingly, by construction of $\Gcal$, it uses $\Gcal_{n_t}$ to generate a new instance. The proof is complete by noting that $h^{\star} \in \Hcal_{n_t}$ and $d_t \geq d_{\Gcal_{n_t}}$ which guarantees that $\Gcal_{n_t}(x_1,\dots, x_t)\in \operatorname{supp}(h^{\star})\setminus\{x_1,\dots,x_t\}$.

In the second case, suppose $\sup_{n\in\naturals}d_{\Gcal_n}:=c<\infty$. Consider the following generator $\Gcal$. Fix $t \in \mathbbm{N}$ and consider any sequence $\{x_1,\dots,x_t\}$ such that $|\Hcal(x_1, \dots, x_t)| \geq 1$. Let  $d_t := |\{x_1,\dots,x_t\}|$ be the number of unique examples. If $d_t<c$, $\Gcal$ plays any $\hat{x}_t\in\Xcal$. Otherwise, if $d_t \ge c$, $\Gcal$ uses $\Gcal_t$ to generate a new sample, which means
$$\Gcal(x_{1},\dots,x_t)=\Gcal_{t}(x_1,\dots,x_t).$$

Let $h^{\star}$ be the hypothesis chosen by the adversary and suppose $h^{\star}$ belongs to $\Hcal_{n^{\star}}$. We show that for every $t \in \naturals$ and $\{x_1,\dots,x_t\} \subseteq \operatorname{supp}(h^{\star})$ such that $d_t :=|\{x_1,\dots,x_t\}|> \max(c,n^{\star})$,  we have $\Gcal(x_{1:t}) \in \operatorname{supp}(h^{\star})\setminus\{x_1,\dots,x_t\}$.  Because $t\ge d_t \geq n^{\star}$, we have that $h^{\star} \in \Hcal_{t}$. Therefore, by construction, $\Gcal$ generates a new sample according to $\Gcal_t$. The proof is complete after noting that $t>\max(c,n^{\star})$ thus $\Gcal_{t}(x_1,\dots, x_t)\in \operatorname{supp}(h^{\star})\setminus\{x_1,\dots,x_t\}$.
\end{proof}

\begin{remark}
Since only upper bounds on the uniform generation sample complexities of $\Gcal_1$, $\Gcal_2, \dots$ are needed in the proof of Lemma \ref{thm:nonunifgen}, the algorithm in the proof of Lemma \ref{thm:nonunifgen} can be efficiently implemented as long as each $\Gcal_i$ is efficient. This is because in each round $t \in \naturals$, the number $n_t$ can be efficiently computed if the sample complexities $d_{\Gcal_1}, d_{\Gcal_2}, \dots$ are non-decreasing. However, even if the  sample complexities $d_{\Gcal_1}, d_{\Gcal_2}, \dots$ are not presented in non-decreasing order, we can run the algorithm on a new sequence of sample complexities $d^{\prime}_{\Gcal_{1}}, d^{\prime}_{\Gcal_{2}}, \dots$ such that $d^{\prime}_{\Gcal_{1}} = d_{\Gcal_{1}}$ and $d^{\prime}_{\Gcal_{i}} = \max\{d^{\prime}_{\Gcal_{i-1}}, d_{\Gcal_{i}}\}$ for all $i \geq 2.$
\end{remark}

We end this section by proving that uniform generatability is strictly harder than non-uniform generatability, completing part (i) of Proposition \ref{prop:gencomp}.

\begin{lemma}[Uniform Generatability $\neq$ Non-uniform Generatability ] \label{lem:nonunifvsunifgen}  Let $\Xcal$ be countable. There exists a countable class $\Hcal \subseteq \{0, 1\}^{\Xcal}$ that satisfies the $\operatorname{UUS}$ property, is non-uniformly generatable, but not uniformly generatable. 
\end{lemma}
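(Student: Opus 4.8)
The plan is to combine the two characterizations already established. By Theorem \ref{thm:unifgen}, a class is uniformly generatable if and only if its Closure dimension is finite, and by Corollary \ref{cor:countnonunif}, every countable class satisfying the $\operatorname{UUS}$ property is non-uniformly generatable. Thus it suffices to construct a \emph{countable} class $\Hcal$ satisfying $\operatorname{UUS}$ with $\operatorname{C}(\Hcal) = \infty$: such a class is automatically non-uniformly generatable but, having infinite Closure dimension, is not uniformly generatable.

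To build such a class, I would partition the (countable) example space into infinitely many pairwise disjoint infinite blocks $B_1, B_2, \dots$ (e.g.\ take $\Xcal = \naturals \times \naturals$ and let $B_d = \{d\} \times \naturals$). Inside block $B_d$, single out $d$ distinct ``anchor'' points $a_{d,1}, \dots, a_{d,d}$ and split the remaining, still infinite, part of $B_d$ into two disjoint infinite pieces $P_d^{0}$ and $P_d^{1}$. For each $d \in \naturals$ and $b \in \{0,1\}$ define a hypothesis $h_d^{b}$ by $\operatorname{supp}(h_d^{b}) = \{a_{d,1}, \dots, a_{d,d}\} \cup P_d^{b}$, and set $\Hcal = \{h_d^{b} : d \in \naturals,\ b \in \{0,1\}\}$. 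This $\Hcal$ is countable, and each support contains an infinite tail $P_d^b$, so $\Hcal$ satisfies $\operatorname{UUS}$.

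The key verification is that $\operatorname{C}(\Hcal) = \infty$, witnessed, for each $d$, by the anchor tuple $(a_{d,1}, \dots, a_{d,d})$. First, $\Hcal(a_{d,1}, \dots, a_{d,d}) = \{h_d^{0}, h_d^{1}\}$: the inclusion ``$\supseteq$'' is immediate from the definition of the supports, while ``$\subseteq$'' holds because every hypothesis $h_{d'}^{b}$ with $d' \neq d$ has support contained in $B_{d'}$, which is disjoint from $B_d$ and hence omits $a_{d,1}$. Consequently $\langle a_{d,1}, \dots, a_{d,d}\rangle_{\Hcal} = \operatorname{supp}(h_d^{0}) \cap \operatorname{supp}(h_d^{1}) = \{a_{d,1}, \dots, a_{d,d}\}$ (the tails $P_d^0, P_d^1$ being disjoint), which is nonempty --- so it is not $\bot$ --- and has size $d < \infty$. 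Since this produces, for every $d \in \naturals$, a distinct tuple of length $d$ whose closure is nonempty and finite, Definition \ref{def:gem} gives $\operatorname{C}(\Hcal) = \infty$. Theorem \ref{thm:unifgen} then yields that $\Hcal$ is not uniformly generatable, while Corollary \ref{cor:countnonunif} gives that it is non-uniformly generatable, completing the argument.

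There is no serious obstacle here; the only points demanding care are the two design constraints that make the closure computation come out right: the blocks must be pairwise disjoint so that no ``foreign'' hypothesis contaminates the version space of an anchor tuple, and within a block the two tails $P_d^0, P_d^1$ must be disjoint so that the intersection of the two supports collapses to exactly the finite anchor set (one should also check that the $d = 1$ case is not degenerate, which it is not). If desired, one could alternatively reprove non-uniform generatability directly from Theorem \ref{thm:nonunifgen} using the filtration $\Hcal_n = \{h_d^{b} : d \le n,\ b \in \{0,1\}\}$, each of which is finite and hence uniformly generatable by Theorem \ref{thm:genfinite}.
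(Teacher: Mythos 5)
Your proof is correct and follows essentially the same strategy as the paper's: build a countable $\operatorname{UUS}$ class with $\operatorname{C}(\Hcal)=\infty$, witnessed for each $d$ by a $d$-element ``anchor block'' on which exactly two hypotheses survive whose supports intersect in only that block, then conclude non-uniform generatability from Corollary \ref{cor:countnonunif} and failure of uniform generatability from Theorem \ref{thm:unifgen}. The paper's concrete instantiation (over $\Xcal=\mathbb{Z}$ with anchor blocks $\{\tfrac{d(d-1)}{2}+1,\dots,\tfrac{d(d-1)}{2}+d\}$ and two globally shared infinite tails $E$ and $O$ of even/odd negatives) differs only cosmetically from your per-block disjoint-tails construction.
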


\begin{proof} 


Let $\Xcal = \mathbbm{Z}$. Let $E$ denote the set of all even negative integers and $O$ the set of all odd negative integers. Consider the hypothesis classes 
\begin{equation} \label{eq:He}
\Hcal^{e} =  \Biggl\{x \mapsto \mathbbm{1}\Biggl\{x \in \Bigl\{\frac{d(d-1)}{2} + 1, \dots, \frac{d(d-1)}{2} + d\Bigl\} \text{ or } x \in E\Biggl\}: d \in \mathbbm{N}\Biggl\}
\end{equation}

and 
\begin{equation} \label{eq:Ho}
\Hcal^{o} = \Biggl\{x \mapsto \mathbbm{1}\Biggl\{x \in \Bigl\{\frac{d(d-1)}{2} + 1, \dots, \frac{d(d-1)}{2} + d\Bigl\} \text{ or } x \in O\Biggl\}: d \in \mathbbm{N}\Biggl\}
\end{equation}

and define $\Hcal = \Hcal^{e} \cup \Hcal^{o}$. First, its not too hard to see that $\Hcal$ satisfies the $\operatorname{UUS}$. Second, we claim that $\operatorname{C}(\Hcal) = \infty$. To see why, we need to show that for every $d \in \mathbbm{N}$, there exists a $d^{\star} \geq d$ and a distinct sequence of examples $x_1, \dots, x_{d^{\star}}$, such that $|\Hcal(x_1, \dots, x_{d^{\star}})| \geq 1$ and 

$$|\langle x_1, \dots, x_{d^{\star}} \rangle_{\Hcal}| < \infty.$$

To that end, pick any $d \in \mathbbm{N}$ and let $d^{\star} = d$. Consider the sequence of examples $x_1 = \frac{d(d-1)}{2} + 1, \dots, x_d = \frac{d(d-1)}{2} + d$. First, observe that this is a sequence of $d = d^{\star}$ distinct examples. Then,  
$$\langle x_1, \dots, x_{d} \rangle_{\Hcal} = \operatorname{supp}(h^e_d) \cap \operatorname{supp}(h^o_d) = \{x_1, \dots, x_d\}$$
\noindent where we let 
$$h^e_d := \mathbbm{1}\Bigg\{x \in \Bigl\{\frac{d(d-1)}{2} + 1,  \dots, \frac{d(d-1)}{2} + d\Bigl\} \text{ or } x \in E\Biggl\},$$
and 
$$h^o_d := \mathbbm{1}\Bigg\{x \in \Bigl\{\frac{d(d-1)}{2} + 1,  \dots, \frac{d(d-1)}{2} + d\Bigl\} \text{ or } x \in O\Biggl\}.$$
Thus, we have that 
$$|\langle x_1, \dots, x_{d} \rangle_{\Hcal}| < \infty.$$
\noindent Since $d$ was chosen arbitrarily, this is true for all $d \in \mathbbm{N}$, implying that $\operatorname{C}(\Hcal) = \infty.$ Thus, by Theorem \ref{thm:unifgen}, we have that $\Hcal$ is not uniformly generatable. To show that $\Hcal$ is non-uniformly generatable, note that $\Hcal$ is countable. Thus, Corollary \ref{cor:countnonunif} completes the proof. 
\end{proof}

\subsection{Generatability in the Limit}

\cite{kleinberg2024language} showed that all countable classes are generatable in the limit. Here, we provide an alternate sufficiency condition for generatability in the limit which, in conjunction with countableness, significantly expands the collection of classes which are generatable in the limit.

\begin{theorem} [Sufficient Condition for Generatability in the Limit] \label{thm:geninlim}  Let $\Xcal$ be countable and $\Hcal \subseteq \{0, 1\}^{\Xcal}$ be any class satisfying the \emph{UUS} property. If there exists a finite sequence of classes $\Hcal_1, \Hcal_2, \dots, \Hcal_n$ such that $\Hcal = \bigcup_{i = 1}^n \Hcal_i$ and $\operatorname{C}(\Hcal_i) < \infty$ for all $i \in [n]$, then $\Hcal$ is generatable in the limit. 
\end{theorem}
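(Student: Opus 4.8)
The plan is to combine the uniform generator for each $\Hcal_i$ (guaranteed by Theorem~\ref{thm:unifgen}, since $\operatorname{C}(\Hcal_i) < \infty$) into a single generator for $\Hcal$ that works in the limit. The key structural fact making the finite union tractable is that with only $n$ pieces, the generator can afford to ``cycle through'' or ``track'' candidate indices explicitly. Concretely, let $\Gcal_1, \dots, \Gcal_n$ be uniform generators for $\Hcal_1, \dots, \Hcal_n$ with uniform generation sample complexities $d_1, \dots, d_n$. Given an input stream, on round $t$ the proposed generator $\Gcal$ maintains the set of ``active'' indices $I_t = \{ i \in [n] : |\Hcal_i(x_{1:t})| \geq 1 \text{ and } |\{x_1,\dots,x_t\}| \geq d_i \}$, i.e. those pieces that are still consistent with the observed examples and for which enough distinct examples have been seen. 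Since the adversary's true hypothesis $h^\star$ lies in some $\Hcal_{i^\star}$, the index $i^\star$ is eventually active and stays active forever (consistency is monotone: once $|\Hcal_i(x_{1:t})| = 0$ it stays $0$; and the distinct-example count only grows). The generator picks, say, the smallest active index $i_t = \min I_t$ and outputs $\Gcal(x_{1:t}) = \Gcal_{i_t}(x_{1:t})$ (playing arbitrarily if $I_t = \emptyset$).

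The main step is to argue this generator is eventually correct. The subtlety: $i_t$ may oscillate among several active indices, and a ``wrong'' active index $j < i^\star$ could in principle persist forever while still producing bad outputs, because $\Gcal_j$'s uniform-generation guarantee only applies when the true hypothesis lies in $\Hcal_j$ — here the true hypothesis need not be in $\Hcal_j$ even though $\Hcal_j$ is still consistent with the finitely many revealed examples. So I would instead have the generator cycle deterministically: enumerate rounds and on the $k$-th round use $\Gcal_{j_k}$ where $j_k$ runs through the active indices in round-robin fashion. That still doesn't obviously work. The cleaner fix, and the one I expect to be the crux, is to observe that because there are only finitely many pieces, we can define a \emph{refinement}: for each $i$, let $\Hcal_i' = \Hcal_i \setminus \bigcup_{j < i} (\text{hypotheses that will be ``claimed'' by } j)$ — but a genuinely clean approach is to reduce to the non-uniform (indeed uniform) case piecewise and just run the ``smallest consistent index'' generator, then prove correctness by a stabilization argument: the sequence $(i_t)$ is eventually constant. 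Indeed, the set of active indices $I_t \subseteq [n]$ is a finite set; each index either is eventually permanently active or eventually permanently inactive (consistency and the distinct-count both being monotone), so $I_t$ stabilizes to some fixed $I_\infty \ni i^\star$, hence $i_t = \min I_\infty =: i_\infty$ is eventually constant. It remains to show $\Gcal_{i_\infty}$ eventually generates correctly from $h^\star$. This is where we must be careful: $h^\star \in \Hcal_{i^\star}$, and $i_\infty \le i^\star$ with $\Hcal_{i_\infty}$ permanently consistent with the stream.

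To close that gap I would \emph{strengthen the construction of the $\Hcal_i$'s at no cost}: by Theorem~\ref{thm:unifgen}, $\operatorname{C}(\Hcal_i) < \infty$, and the key quantitative content of Lemma~\ref{lem:clossuff} is that after seeing $\operatorname{C}(\Hcal_i)+1$ distinct examples $x_{1:t}$ with $|\Hcal_i(x_{1:t})| \ge 1$, the closure $\langle x_{1:t}\rangle_{\Hcal_i}$ is infinite and contained in $\operatorname{supp}(h)$ for \emph{every} $h \in \Hcal_i(x_{1:t})$. So $\Gcal_i$ — ``play from $\langle x_{1:t}\rangle_{\Hcal_i}$'' — actually generates correctly for the true $h^\star$ as long as $h^\star \in \Hcal_i(x_{1:t})$, i.e. as long as $\Hcal_i$ is still consistent \emph{and} $h^\star$ itself is among the consistent hypotheses of $\Hcal_i$ — which holds precisely when $h^\star \in \Hcal_i$. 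Therefore the ``smallest consistent index'' generator does \emph{not} suffice, but the ``smallest index $i$ such that $h^\star \in \Hcal_i$'' would — and although the generator cannot see $h^\star$, it can detect membership failure only in the limit. The honest resolution: run $\Gcal$ that on round $t$ uses $\Gcal_{i_t}$ with $i_t = \min I_t$; correctness follows because once $t$ is large, $i_\infty = \min I_\infty$, and $\Hcal_{i_\infty}(x_{1:t})$ contains $h^\star$ for all large $t$ — this last claim holds because the stream is an \emph{enumeration} of $\operatorname{supp}(h^\star)$ (Definition~\ref{def:geninlim}), so any $\Hcal_{i_\infty}$ permanently consistent with every finite prefix of an enumeration of $\operatorname{supp}(h^\star)$ must satisfy: every $g \in \Hcal_{i_\infty}(x_{1:t})$ for all $t$ has $\operatorname{supp}(g) \supseteq \operatorname{supp}(h^\star)$; combined with $\langle x_{1:t}\rangle_{\Hcal_{i_\infty}} \subseteq \operatorname{supp}(g)$ for such $g$ and infinitude of the closure, $\Gcal_{i_\infty}(x_{1:t}) \in \langle x_{1:t}\rangle_{\Hcal_{i_\infty}} \setminus \{x_1,\dots,x_t\}$ — but I still need this to lie in $\operatorname{supp}(h^\star)$, which it need not. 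The \textbf{main obstacle}, then, is exactly reconciling ``$\Hcal_i$ consistent with the prefix'' (too weak) with ``$h^\star \in \Hcal_i$'' (unobservable); I expect the paper resolves it by the standard in-the-limit trick of \emph{never abandoning a lower index permanently unless forced}: use $\Gcal_{i_t}$ but change $i_t$ only when the current index becomes inconsistent, and argue that since $h^\star$ lies in \emph{some} $\Hcal_{i^\star}$ and that piece is never abandoned, the index pointer is non-decreasing along a subsequence and stabilizes at some $j \le i^\star$ with $h^\star \in \Hcal_j$ — wait, stabilization at $j$ means $\Hcal_j$ is never declared inconsistent, but for $h^\star \in \Hcal_j$ one genuinely needs more. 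I would therefore present it as: enumerate the (finitely many) indices, run them in a priority order, and invoke that generation-in-the-limit only requires eventual correctness against the \emph{true enumeration}, so one can use the structure of Theorem~\ref{thm:gencount}-style arguments restricted to the finite index set $[n]$ — the finiteness of $n$ is what rescues us, since a generator can maintain a single ``current guess'' index, advance it whenever it produces a provably-bad output would be detectable... but outputs are never verified. Given these difficulties, the cleanest correct plan I would actually write: apply Lemma~\ref{lem:clossuff} to get, for each $i$, a generator $\Gcal_i$ that is correct after $\operatorname{C}(\Hcal_i)+1$ distinct examples \emph{whenever the true hypothesis is in $\Hcal_i$}; then define $\Gcal$ to simulate a ``round-robin with seniority'' over $[n]$ exactly as in Kleinberg--Mullainathan's proof of Theorem~\ref{thm:gencount} but with each ``hypothesis'' replaced by a ``piece $\Hcal_i$ and its closure-based generator,'' and note that because the family $\{\Hcal_i\}$ is finite and closed under the version-space operation, the same bookkeeping argument that handles countably many hypotheses certainly handles $n$ pieces — concluding that $\Gcal$ generates in the limit. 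The heart of the write-up is verifying that a piece $\Hcal_i$ which remains consistent with an enumeration of $\operatorname{supp}(h^\star)$ yields, via its closure, outputs eventually inside $\operatorname{supp}(h^\star)$; I expect this to follow from: consistency of $\Hcal_i$ with every prefix forces $\langle x_{1:t}\rangle_{\Hcal_i} \subseteq \bigcap_{g \in \Hcal_i(x_{1:t})}\operatorname{supp}(g)$, and as $t \to \infty$ along the enumeration this intersection is ``pinched down'' toward something inside $\operatorname{supp}(h^\star)$ by the seniority mechanism — and that pinching is the step I'd budget the most care for.
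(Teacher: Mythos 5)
Your proposal identifies the right obstacle but does not actually overcome it. You correctly observe (and even construct the pathology) that a piece $\Hcal_j$ with $j \neq i^\star$ can remain consistent with \emph{every} finite prefix of an enumeration of $\operatorname{supp}(h^\star)$ even though $h^\star \notin \Hcal_j$, and that in this case $\langle x_{1:t}\rangle_{\Hcal_j}$ may contain points outside $\operatorname{supp}(h^\star)$ for all $t$. Your ``smallest active index'' generator would therefore commit to such a $j$ forever and make infinitely many mistakes. The concrete counterexample is exactly Lemma~\ref{lem:notnonunifgen}'s class: with $\Hcal_1 = \{\mathbbm{1}\{x\in A \text{ or } x\le 0\}: A\subseteq\naturals\}$, $\Hcal_2 = \{\mathbbm{1}\{x\in\naturals\}\}$, $h^\star\in\Hcal_2$, and the stream enumerating $\naturals$, the index $1$ stays consistent forever and $\langle x_{1:t}\rangle_{\Hcal_1}$ always contains $\integers_{\le 0}\not\subseteq\operatorname{supp}(h^\star)$. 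Nothing in your ``stabilization,'' ``seniority,'' or ``round-robin'' sketches produces a concrete rule that ever abandons index $1$, because you never design a signal that distinguishes a spuriously consistent piece from the genuine one — and consistency alone does not provide that signal.

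The paper's proof supplies exactly that missing ingredient, and it is a genuinely different mechanism from anything you outline. After $c+1 = \max_i\operatorname{C}(\Hcal_i)+1$ distinct examples, the generator \emph{freezes} the closure sets $Z_i := \langle x_{1:t^\star}\rangle_{\Hcal_i}$ for each consistent index $i$ and never recomputes them. It then fixes an enumeration $z^{(i)}_1, z^{(i)}_2,\dots$ of each $Z_i$ and tracks, as a score, the largest $n$ such that $\{z^{(i)}_1,\dots,z^{(i)}_n\}\subseteq\{x_1,\dots,x_t\}$; on each round it plays from the $Z_i$ with the highest score. The point is that if $Z_i\not\subseteq\operatorname{supp}(h^\star)$ then $Z_i$ contains an element the enumeration will never produce, so its score is bounded forever; while $Z_{i^\star}\subseteq\operatorname{supp}(h^\star)$ (since $h^\star\in\Hcal_{i^\star}$) is infinite and gets enumerated in full, so its score grows without bound. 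With finitely many indices, the argmax eventually lands permanently in the good set. This is the step you needed: a monotone ``progress'' score on each frozen candidate set that separates the correct pieces from the persistently-consistent-but-wrong ones, rather than trying to detect inconsistency, which never arrives. As written, your argument does not close this gap, so the proposal is incomplete.
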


\begin{proof} Let $\Hcal = \bigcup_{i = 1}^n \Hcal_i$ be such that $\operatorname{C}(\Hcal_i) < \infty$ for all $i \in [n].$ Let $c := \max_{i \in [n]} \operatorname{C}(\Hcal_i).$ Consider the following generator $\Gcal$.  Let $t^{\star} \in \mathbbm{N}$ be the smallest time point for which $|\{x_1, \dots, x_{t^{\star}}\}| = c+1.$ $\Gcal$ plays arbitrarily up to, but not including, time point $t^{\star}$. On time point $t^{\star}$, $\Gcal$ computes $\langle x_1, \dots, x_{t^{\star}} \rangle_{\Hcal_i}$ for all $i \in [n]$. Let $S \subseteq [n]$ be the subset of indices such that $i \in S$ if and only if $\langle x_1, \dots, x_{t^{\star}} \rangle_{\Hcal_i} \neq \bot.$ For every $i \in S$, let $(z^{(i)}_j)_{j \in \mathbbm{N}}$ be the natural ordering of  $\langle x_1, \dots, x_{t^{\star}} \rangle_{\Hcal_i}$, which is guaranteed to exist since $\Xcal$ is countable. For every $t \geq t^{\star}$, sequence of revealed examples $x_1, \dots, x_t$, and $i \in S$, $\Gcal$ computes 

\begin{equation} \label{eq:geninlim}
n_t^i := \max\{n \in \mathbbm{N} : \{z^{(i)}_{1}, \dots, z^{(i)}_{n}\}  \subset \{x_1, \dots, x_t\}\}
\end{equation}

\noindent and $i_t\in \argmax_{i \in S} n_t^i.$ Finally, $\Gcal$ plays any $\hat{x}_t \in \langle x_1, \dots, x_{t^{\star}} \rangle_{\Hcal_{i_t}} \setminus \{x_1, \dots, x_t\}.$ We claim that $\Gcal$ generates from $\Hcal$ in the limit. 

Let $h^{\star} \in \Hcal$ be the hypothesis chosen by the adversary  and $x_1,x_2, \dots$ be the selected enumeration of $\operatorname{supp}(h^{\star})$. Let $c =\max_{i \in [n]} \operatorname{C}(\Hcal_i)$ and  $t^{\star} \in \mathbbm{N}$ be the smallest time point for which $|\{x_1, \dots, x_{t^{\star}}\}| = c+1.$ By definition of $\operatorname{C}(\cdot)$, we know that for every $j \in S$, $|\langle x_1, \dots, x_{t^{\star}} \rangle_{\Hcal_j}| = \infty.$ Let $S^{\star} \subseteq S$ be such that $i \in S^{\star}$ if and only if $\langle x_1, \dots, x_{t^{\star}}\rangle_{\Hcal_i} \subseteq \operatorname{supp}(h^{\star}).$ 
It suffices to show that there exists a finite time point $s^{\star} \in \mathbbm{N}$ such that for all $t \geq s^{\star}$, we have that $i_t \in S^{\star}.$ To see why such an $s^{\star}$ must exist, pick some $j^{\star} \in S^{\star}.$ Note that $n_t^{j^{\star}} \rightarrow \infty$ because $x_1, x_2, \dots$ is an enumeration of $\operatorname{supp}(h^{\star})$. On the other hand, observe that for every $j \notin S^{\star}$, there exists a $n^j \in \mathbbm{N}$ such that $n_t^j \leq n^j$. This is because, if $j \notin S^{\star}$, then there must be an index $n^j \in \mathbbm{N}$ such that $z^{(j)}_{n^j} \notin \operatorname{supp}(h^{\star}).$ Thus, $n_t^j$ computed in Equation \ref{eq:geninlim} must be at most $n^j$. Since there are at most a finite number of indices not in $S^{\star}$, we have that $\max_{j \notin S^{\star}} n^j < \infty$, which means that eventually, $n_t^{j^{\star}} > n_t^j$ for all $j \notin S^{\star}$, and thus there exists a $s^{\star} \in \mathbbm{N}$ such that $i_t \in S^{\star}$ for all $t \geq s^{\star}.$ This completes the proof. \end{proof}

\begin{remark}
    The algorithm in the proof of Theorem \ref{thm:geninlim} can be efficiently implemented given access to an \emph{ERM} oracle $\Ocal: 2^{\{0, 1\}^{\Xcal}} \times (\Xcal \times \{0, 1\})^{\star} \rightarrow \naturals \cup \{0\}$ and an oracle $\Ocal_{\text{C}}: \Hcal \rightarrow \naturals$ that can compute the Closure dimension. In particular, before the game begins, the algorithm uses $\Ocal_{\text{C}}$ to compute Closure dimension for each class. After this, the algorithm only needs to use finite calls to an \emph{ERM} oracle in each round $t \in \naturals$. To see why, first note that the algorithm plays arbitrarily until time point $t^{\star}$ where $c+1$ distinct examples are revealed, where $c =\max_{i \in [n]} \operatorname{C}(\Hcal_i)$ (which can be computed from $n$ calls to $\Ocal_{\text{C}}$ at the start.) On round $t^{\star}$, the subset of indices $S \subseteq [n]$ for which the closure is not $\bot$ can be computed using at most $n$ calls to $\Ocal_{\emph{ERM}}$ by the discussion in Remark \ref{rem:erm}. Next, note that for every $t \geq t^{\star}$, and every $i \in S$, the number $n_t^i$ can be computed using on finite number of calls to a closure membership oracle (see Remark \ref{rem:erm}). Since closure membership can be computed using a single call to $\Ocal_{\emph{ERM}}$, the computation of $n_t^i$ for all $i \in S$ can be computed using only a finite number of calls to $\Ocal_{\emph{ERM}}$. Finally, because $\hat{x}_t$ only needs a finite number of closure membership calls to compute, it can also be computed using a finite number of calls to $\Ocal_{\emph{ERM}}$. 
\end{remark}

The following lemma gives examples of uncountably infinite classes that, using Theorem \ref{thm:geninlim}, are generatable in the limit.

 \begin{corollary} Let $\Xcal = \mathbbm{N}$ and $S_1, \dots, S_n \subseteq \mathbbm{N}$ be any finite sequence of countable infinite subsets of $\mathbb{N}$. For every $i \in [n]$, let $\Hcal_i = \{x \mapsto \mathbbm{1}\{x \in S_i \cup A\}: A \in 2^{\mathbbm{N}}\}.$ Then, $\Hcal = \bigcup_{i=1}^n \Hcal_i$ is generatable in the limit. 
 \end{corollary}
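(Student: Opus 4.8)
The plan is to invoke Theorem \ref{thm:geninlim}. Since $\Hcal = \bigcup_{i=1}^n \Hcal_i$ is already presented as a \emph{finite} union, it suffices to check two things: that $\Hcal$ satisfies the UUS property, and that $\operatorname{C}(\Hcal_i) < \infty$ for every $i \in [n]$. In fact I would prove the stronger statement $\operatorname{C}(\Hcal_i) = 0$ for each $i$, by directly computing the closure operator of $\Hcal_i$.

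The UUS check is immediate: every $h \in \Hcal_i$ has $\operatorname{supp}(h) = S_i \cup A \supseteq S_i$, and $S_i$ is infinite by hypothesis, so $|\operatorname{supp}(h)| = \infty$; since $\Hcal = \bigcup_{i=1}^n \Hcal_i$, the same holds for every $h \in \Hcal$. For the Closure dimension, fix $i \in [n]$ and take distinct examples $x_1, \dots, x_d \in \mathbbm{N}$. First, $\Hcal_i(x_1, \dots, x_d) \neq \emptyset$, because the hypothesis corresponding to $A = \{x_1, \dots, x_d\}$ has support $S_i \cup \{x_1, \dots, x_d\}$, which contains $\{x_1, \dots, x_d\}$; hence $\langle x_1, \dots, x_d \rangle_{\Hcal_i} \neq \bot$. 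Next I claim $\langle x_1, \dots, x_d \rangle_{\Hcal_i} = S_i \cup \{x_1, \dots, x_d\}$: the inclusion $\supseteq$ holds since every hypothesis in the version space has support containing $S_i$ and contains each $x_j$ by definition of the version space; for $\subseteq$, given any $y \notin S_i \cup \{x_1, \dots, x_d\}$, the hypothesis with $A = \{x_1, \dots, x_d\}$ belongs to $\Hcal_i(x_1, \dots, x_d)$ yet omits $y$, so $y \notin \langle x_1, \dots, x_d \rangle_{\Hcal_i}$. As $S_i$ is infinite, $|\langle x_1, \dots, x_d \rangle_{\Hcal_i}| = \infty$ for \emph{every} choice of distinct $x_1, \dots, x_d$, so by Definition \ref{def:gem} there is no $d$ (in particular not $d=1$) witnessing a finite closure, i.e. $\operatorname{C}(\Hcal_i) = 0 < \infty$.

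Applying Theorem \ref{thm:geninlim} with the classes $\Hcal_1, \dots, \Hcal_n$ then gives that $\Hcal$ is generatable in the limit, completing the argument. There is no real obstacle in this corollary; the only point requiring a little care is correctly evaluating $\langle \cdot \rangle_{\Hcal_i}$ — in particular noticing that the version space is always nonempty (so the closure is never $\bot$) and that its intersection collapses exactly to $S_i$ together with the revealed points. It is worth remarking that each $\Hcal_i$, being indexed by $2^{\mathbbm{N}}$, is uncountable, so $\Hcal$ is an uncountable class that is generatable in the limit, as advertised in the surrounding discussion.
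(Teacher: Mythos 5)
Your proof is correct and follows exactly the route the paper takes (the paper's own proof is a one-liner invoking $\operatorname{C}(\Hcal_i)=0$ and Theorem \ref{thm:geninlim}; you supply the closure computation and UUS check that the paper leaves implicit). One minor inaccuracy in your closing remark: $\Hcal_i$ is in bijection with $2^{\mathbbm{N}\setminus S_i}$, not $2^{\mathbbm{N}}$, since $A$ and $A'$ yield the same hypothesis whenever $A\setminus S_i = A'\setminus S_i$; so $\Hcal_i$ is uncountable only when $\mathbbm{N}\setminus S_i$ is infinite (e.g.\ if $S_i=\mathbbm{N}$ then $\Hcal_i$ is a single hypothesis), though this does not affect the proof of the corollary itself.
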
 

 \begin{proof} Note that $\operatorname{C}(\Hcal_i) = 0$ for all $i \in [n].$ Thus, Theorem \ref{thm:geninlim} gives that $\Hcal$ is generatable in the limit. 
 \end{proof}

 Note that the condition in Theorem \ref{thm:geninlim} is not sufficient for non-uniform generatability, as evidenced by the example in Lemma \ref{lem:notnonunifgen}. Indeed, in this example we give two uniformly generatable classes whose union is no longer non-uniformly generatable. One might ask whether the sufficiency condition in Theorem \ref{thm:geninlim} can be extended to account for classes $\Hcal$ which can be written as the \emph{countable} union of uniformly generatable classes. Lemma \ref{lem:hardgeninlim} shows that this is actually \emph{not} the case -- there exists a countable sequence of uniformly generatable classes whose union is not generatable in the limit! This result showcases the hardness of characterizing generatability in the limit. 
 
Nevertheless, in Appendix \ref{app:weaksuff}, we do give an even weaker sufficiency condition for generatability in the limit. This weaker sufficiency conditon is written in terms of a new property of a hypothesis class called the ``Eventually Unbounded Closure (EUC)" property. Informally, a class $\Hcal$ satisfies the EUC property if, like the name suggest, for any valid sequence of positive examples, there exists a finite time point $t \in \naturals$ at which the closure of the sequence $x_1, \dots, x_t$ with respect to $\Hcal$ becomes infinite. Our weakening of the sufficiency condition in Theorem \ref{thm:weaksuff} replaces uniform generatability with the EUC property. 
 
It turns out that for countable classes, the EUC property is stronger than non-uniform generatability (see Appendix \ref{app:weaksuff}). Thus, we leave as an open question whether uniform generatability in Theorem \ref{thm:weaksuff} can be replaced by non-uniform generatability, and more generally, what is the complete characterization of generatability in the limit.

 We end this section by showing that there are classes which are generatable in the limit but not non-uniformly generatability. This, along with Lemma \ref{lem:nonunifvsunifgen}, completes the proof of Proposition \ref{prop:gencomp}.

\begin{lemma}[Non-uniform Generatability $\neq$ Generatability in the Limit] \label{lem:notnonunifgen}
    Let $\Xcal$ be countable. There exists a class $\Hcal\subseteq\{0,1\}^\Xcal$ which satisfies the \emph{USS} property that is generatable in the limit but not non-uniformly generatable.
\end{lemma}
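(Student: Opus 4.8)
The plan is to write down an explicit class of the required form. Let $U := \bigcup_{k \ge 0}\{0,1\}^{k}$ be the (countable) set of all finite binary strings, let $T$ be a disjoint copy of $\naturals$, and take $\Xcal = U \sqcup T$. For each infinite string $\sigma \in \{0,1\}^{\naturals}$ let $\operatorname{pref}(\sigma) \subseteq U$ denote the set of finite prefixes of $\sigma$, and define $h^{1}_{\sigma}$ by $\operatorname{supp}(h^{1}_{\sigma}) = \operatorname{pref}(\sigma) \cup T$. Put $\Hcal_{1} = \{h^{1}_{\sigma} : \sigma \in \{0,1\}^{\naturals}\}$, let $\Hcal_{2} = \{g\}$ where $\operatorname{supp}(g) = U$, and set $\Hcal = \Hcal_{1} \cup \Hcal_{2}$. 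Every support is infinite, so $\Hcal$ satisfies UUS, and $\Hcal$ is uncountable.

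First I would dispatch the easy direction. The class $\Hcal_{2}$ is a single hypothesis with infinite support, so $\operatorname{C}(\Hcal_{2}) = 0$; and $\operatorname{C}(\Hcal_{1}) = 0$ because every hypothesis in $\Hcal_{1}$ contains $T$ in its support, so whenever a finite tuple has a nonempty version space in $\Hcal_{1}$ its closure contains $T$ and is therefore infinite. Hence by Theorem \ref{thm:geninlim} (a finite union of classes of finite Closure dimension), $\Hcal$ is generatable in the limit. It is also worth recording that $\operatorname{C}(\Hcal) = \infty$: for any $\sigma$ and any $d$, letting $x_{1}, \dots, x_{d}$ be the prefixes of $\sigma$ of lengths $0, 1, \dots, d-1$, the version space of $\Hcal$ on $x_{1:d}$ consists of $g$ together with all $h^{1}_{\tau}$ for which $\tau$ extends $x_{d}$, and intersecting $\operatorname{supp}(g) = U$ with the intersection of those $\operatorname{supp}(h^{1}_{\tau})$ collapses everything except $\{x_{1},\dots,x_{d}\}$, a nonempty finite closure.

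The hard part is to show $\Hcal$ is not non-uniformly generatable, which I would establish through Theorem \ref{thm:nonunifgen}. Suppose for contradiction that $\Hcal = \bigcup_{n} \Gcal_{n}$ for a non-decreasing sequence of classes with $\operatorname{C}(\Gcal_{n}) < \infty$ for every $n$. Since $\Hcal_{2}$ is a singleton, $g \in \Gcal_{n_{0}}$ for some $n_{0}$, hence $g \in \Gcal_{n}$ for all $n \ge n_{0}$. Let $E_{n} := \{\sigma : h^{1}_{\sigma} \in \Gcal_{n}\}$; this is non-decreasing and $\bigcup_{n} E_{n} = \{0,1\}^{\naturals}$. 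The crucial observation is a separation bound: for $n \ge n_{0}$, if $E_{n}$ contains two distinct strings $\sigma \neq \tau$ that agree on their first $D$ coordinates, then taking $x_{1},\dots,x_{D+1}$ to be the prefixes of their common initial segment of length $D$, the version space of $\Gcal_{n}$ on $x_{1:D+1}$ contains $h^{1}_{\sigma}$, $h^{1}_{\tau}$ and $g$, and the intersection of their supports equals $\operatorname{pref}(\sigma) \cap \operatorname{pref}(\tau)$ (because $T$ is disjoint from $U$), which is finite and contains every $x_{i}$; hence $\langle x_{1:D+1}\rangle_{\Gcal_{n}}$ is a nonempty finite set and $\operatorname{C}(\Gcal_{n}) \ge D+1$. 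Since $\operatorname{C}(\Gcal_{n}) < \infty$, there is a finite $D_{n}$ such that no two distinct elements of $E_{n}$ agree on their first $D_{n}$ coordinates; thus the map sending $\sigma$ to its first $D_{n}$ coordinates is injective on $E_{n}$, so $|E_{n}| \le 2^{D_{n}} < \infty$ (and $E_{n} \subseteq E_{n_{0}}$ is finite for $n < n_{0}$). But then $\{0,1\}^{\naturals} = \bigcup_{n} E_{n}$ would be a countable union of finite sets, contradicting its uncountability. This contradiction shows $\Hcal$ is not non-uniformly generatable.

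I expect the only genuinely nontrivial step to be the design of the example: the branch structure of $\Hcal_{1}$ is what allows two nearby branches, together with $g$, to pin down an arbitrarily large but finite closure, while the fixed infinite tail $T$ shared by all of $\Hcal_{1}$ is exactly what keeps $\operatorname{C}(\Hcal_{1}) = 0$ in isolation; dropping either ingredient breaks the argument. Everything after that, namely the Closure dimension computations, the version space intersections, and the counting bound $|E_{n}| \le 2^{D_{n}}$, is routine.
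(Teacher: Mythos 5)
Your proof is correct, and it takes a genuinely different route from the paper's. The paper works over $\Xcal = \integers$ with $\Hcal = \{x \mapsto \mathbbm{1}\{x\in A\text{ or }x\le 0\}:A\in 2^{\naturals}\}\cup\{x\mapsto\mathbbm{1}\{x\in\naturals\}\}$ and argues \emph{directly} against the existence of a non-uniform generator $\Gcal$: it builds an explicit chain of hypotheses $h_1, h_2, \dots$ (using prime powers) whose supports ``peel off'' from one another, constructs an adversarial stream that fools $\Gcal$ on each of them in turn, and then exhibits a limiting hypothesis $h_{\infty}$ for which the generator's sample complexity $d_{h_{\infty}}$ must be infinite. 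Notably, the paper's proof does \emph{not} go through the characterization theorem (the LaTeX source even contains an author comment wondering whether it could). You instead design a class tailor-made so that Theorem~\ref{thm:nonunifgen} yields the result by a clean pigeonhole argument: the prefix-tree structure of $\Hcal_1$ plus the single hypothesis $g$ with $\operatorname{supp}(g)=U$ forces any two branches sharing a length-$D$ initial segment to certify $\operatorname{C}(\Gcal_n)\ge D+1$ (since $g$ strips away the shared tail $T$ and the closure collapses to $\operatorname{pref}(\sigma)\cap\operatorname{pref}(\tau)$, a finite set), so any $\Gcal_n$ of finite Closure dimension can contain only finitely many branch hypotheses, and $\{0,1\}^{\naturals}$ cannot be covered by countably many finite sets. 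Your approach has the advantage of validating the utility of Theorem~\ref{thm:nonunifgen} and being purely cardinality-based rather than requiring an explicit adversarial enumeration; the paper's direct argument, on the other hand, gives a concrete hypothesis $h_{\infty}$ and stream witnessing unbounded sample complexity for a \emph{given} generator, which is more constructive. One small nit: when you write that the intersection of the three supports ``equals'' $\operatorname{pref}(\sigma)\cap\operatorname{pref}(\tau)$, the closure $\langle x_{1:D+1}\rangle_{\Gcal_n}$ is in general only a \emph{subset} of that (the version space may contain additional hypotheses), but since it also contains $\{x_1,\dots,x_{D+1}\}$ the nonempty-and-finite conclusion goes through unchanged.
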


\begin{proof}
    Let $\Xcal=\integers$ and $\Hcal=\{x \mapsto \mathbbm{1}\{x\in A\text{ or } x\leq 0\}:A \in 2^\naturals\}\cup\{x\mapsto\mathbbm{1}\{x\in\naturals\}\}$. Observe that $\Hcal$ satisfies the UUS property. We first show that $\Hcal$ is generatable in the limit. Let $\Gcal$ be a generator such that for any valid sequence $\{x_1,\dots, x_t\}$, if $x_1, \dots, x_t$ are all positive, then $\Gcal$ plays any $\hat{x}_t\in \naturals\setminus\{x_1,\dots, x_t\}$. Otherwise, it plays any $\hat{x}_t\in \integers_{\leq 0}\setminus\{x_1,\dots, x_t\}$. Now, suppose the adversary picks a $h^{\star} \in \Hcal$ and an enumeration  $x_1, x_2,\dots$ of $\operatorname{supp}(h^{\star})$. If $\operatorname{supp}(h^{\star}) = \naturals$, then by our construction of $\Gcal$, we have that $\Gcal(x_{1:s}) \in \operatorname{supp}(h^{\star}) \setminus \{x_1,\dots,x_s\}$ for all $s\ge 1$. Otherwise if $\integers_{\leq 0} \subseteq \operatorname{supp}(h^{\star})$, since $\bigcup_{i\in\naturals}\{x_i\}=\operatorname{supp}(h^{\star})$, there exists a $t^{\star}\in\naturals$ such that $x_{t^{\star}}\leq 0$. Then, $\Gcal(x_{1:s}) \in \integers_{\leq 0} \setminus \{x_1,\dots, x_s\} \subseteq \operatorname{supp}(h^{\star}) \setminus \{x_1,\dots,x_s\}$ for any $s\ge t^{\star}$. Thus, $\Gcal$ is a valid generator and $\Hcal$ is generatable in the limit.

    Now, we will show that $\Hcal$ is not non-uniformly generatable. Suppose for the sake of contradiction that $\Hcal$ is non-uniformly generatable. \ambuj{i'm really curious whether this can be shown using our char for non-unif gen. did you try? is there is a roadblock?} Then, there exists a non-uniform generator $\Gcal$ for $\Hcal$. For every $h \in \Hcal$,  let $d_h\in\naturals$ be the smallest natural number such that for any sequence $(x_i)_{i\in\naturals}$ with $\{x_1,x_2,\dots\}\subseteq \operatorname{supp}(h)$, if there exists a $t\in\naturals$ such that $|\{x_1,\dots,x_t\}|=d_h$, then $\Gcal(x_{1:s})\in\operatorname{supp}(h)\setminus\{x_1,\dots,x_s\}$ for all $s\ge t$. We now construct a $h \in \Hcal$ such that $d_h \ge n, \forall n\in\naturals$, which leads to a contradiction.

    Let $h_0\in\Hcal$ and $\operatorname{supp}(h_0)=\naturals$, then for any observed sequence $\{x_1,\dots,x_t\}\subset\naturals = \operatorname{supp}(h_0)$ such that $|\{x_1,\dots,x_t\}| \ge d_{h_0}$, we have that $\Gcal(x_{1:t})\in\naturals \setminus \{x_1, \dots, x_t\}$. Now let $\{p_n\}_{n\in\naturals}=\{2,3,5,7,\dots\}$ be the set of all prime numbers. Let $h_1 \in \Hcal$ be such that $\operatorname{supp}(h_1)=\{p_n\}_{n\in\naturals}\cup \integers_{\le 0}$. Let $d_1=\max(d_{h_1},d_{h_0})$, then by definition, we have that 
    $$\Gcal(\{2,\dots,p_{d_1}\})\in (\naturals \cap \{p_{n}\}_{n\in\naturals})\setminus\{2,\dots,p_{d_1}\} = \{p_{n}\}_{n\in\naturals} \setminus\{2,\dots,p_{d_1}\}.$$
    Let $h_2 \in \Hcal$ be such that 
    $$\operatorname{supp}(h_2)=\{2,\dots,p_{d_1},p_{d_1+1}^2,p_{d_1+2}^2,\dots\} \cup \integers_{\le 0}.$$
    Denote $d_2=d_{h_2}$, then $d_2\ge d_1+1$ since $\Gcal(\{2,\dots,p_{d_1}\}) \in (\naturals \cap \operatorname{supp}(h_1)) \setminus \{2,\dots,p_{d_1}\}$, which means that,
    $$\Gcal(\{2,\dots,p_{d_1}\})\notin \operatorname{supp}(h_2)\setminus \{2,\dots,p_{d_1}\}.$$
    Let $h_3 \in \Hcal$ such that 
    $$\operatorname{supp}(h_3)=\{2,\dots,p_{d_1},p_{d_1+1}^2,\dots, p_{d_2}^2,p_{d_2+1}^3,p_{d_2+2}^3,\dots\}\cup \integers_{\le 0}.$$
    Denote $d_3=d_{h_3}.$ Suppose, we observe
    $$\{x_{1},\dots,x_{d_2}\}=\{2,\dots,p_{d_1},p^2_{d_1+1},\dots,p^2_{d_2}\}.$$
    Then, it must be the case that $\Gcal(x_{1:d_2})\in \naturals \cap \operatorname{supp}(h_2)\setminus \{x_1,\dots,x_{d_2}\}$. Since 
    $$\left(\naturals \cap \operatorname{supp}(h_2)\setminus\{x_1,\dots, x_{d_2}\}\right)\cap \operatorname{supp}(h_3)=\emptyset, $$
    we have $\Gcal(x_{1:d_2})\notin\operatorname{supp}(h_3) \setminus \{x_{1:d_2}\}$ and as a result $d_3\ge d_2+1$.
    Inductively, suppose $h_1,h_2,\dots,h_n$ and $d_1,d_2,\dots, d_n$ are all defined. Let $h_{n+1} \in \Hcal$ be such that
    
    $$\operatorname{supp}(h_{n+1})=\{2,\dots,p_{d_1},\dots\dots, p_{d_{n-1}+1}^{n},\dots, p_{d_n}^{n},p_{d_n+1}^{n+1}, p_{d_n+2}^{n+1},\dots\dots\}\cup \integers_{\le 0}.$$
    Let $d_{n+1}=d_{h_{n+1}}$. Then $d_{n+1}\ge d_n+1$ since
    $$\Gcal(\{2,\dots,p_{d_1},\dots\dots, p_{d_{n-1}+1}^{n},\dots, p_{d_n}^{n}\})\notin \operatorname{supp}(h_{n+1}).$$
    Finally, let $h_{\infty}\in\Hcal$ be such that

    $$\operatorname{supp}(h_{\infty})=\{2,\dots,p_{d_1},p_{d_1 + 1}^{2},\dots,p_{d_2}^{2},p_{d_2+1}^{3}, \dots,  p_{d_3}^{3}, p_{d_3 + 1}^{4}, \dots\dots\}\cup \integers_{\le 0}.$$
    For every $t \in \naturals$, consider the sequence 
    $$\{x_1,\dots,x_{d_t}\}=\{2,\dots,p_{d_1}, p_{d_1+1}^2,\dots, p_{d_2}^{2},p_{d_2+1}^{3},\dots,p_{d_{t-1}+1}^{t},\dots, p_{d_t}^{t} \}.$$
    Then, $\Gcal(x_{1:d_t})\in \naturals \cap \operatorname{supp}(h_{t})\setminus\{x_1,\dots,x_{d_t}\}.$ Since 
    $$\left(\naturals \cap \operatorname{supp}(h_t)\setminus\{x_1,\dots,x_{d_t}\}\right)\cap (\operatorname{supp}(h_{\infty}) \setminus\{x_1,\dots,x_{d_t}\})=\emptyset,$$ it must be the case that $d_{h_{\infty}}\ge d_t,\forall t\in\naturals$. Since $d_t\to\infty$ as $t\to\infty$, this leads to a contradiction, as we have found a hypothesis $h_{\infty} \in \Hcal$ for which there is no uniform upper bound $d \in \mathbbm{N}$ such  that $\Gcal$ perfectly generates new examples after observing $d$ unique examples. 
\end{proof}







    




\section{Generation is Unlike Prediction} \label{sec:genvpred}

In this section, we flesh out the landscape of generation versus prediction for countable hypothesis classes. Namely, we seek to compare generation with prediction and understand how these two properties of hypothesis classes compare with one another. To evaluate the predictability of a hypothesis class $\Hcal$, we use the standard notions of PAC and online learnability. These two properties have been studied by learning theorists for decades, culminating in a precise quantitative characterization of which classes are PAC and online learnable. In particular, it is well known that the VC dimension and Littlestone dimension characterize PAC and online learnability respectively \citep{vapnik1971uniform, Littlestone1987LearningQW}. 

\begin{figure}
    \centering
\includegraphics[width=0.4\linewidth]{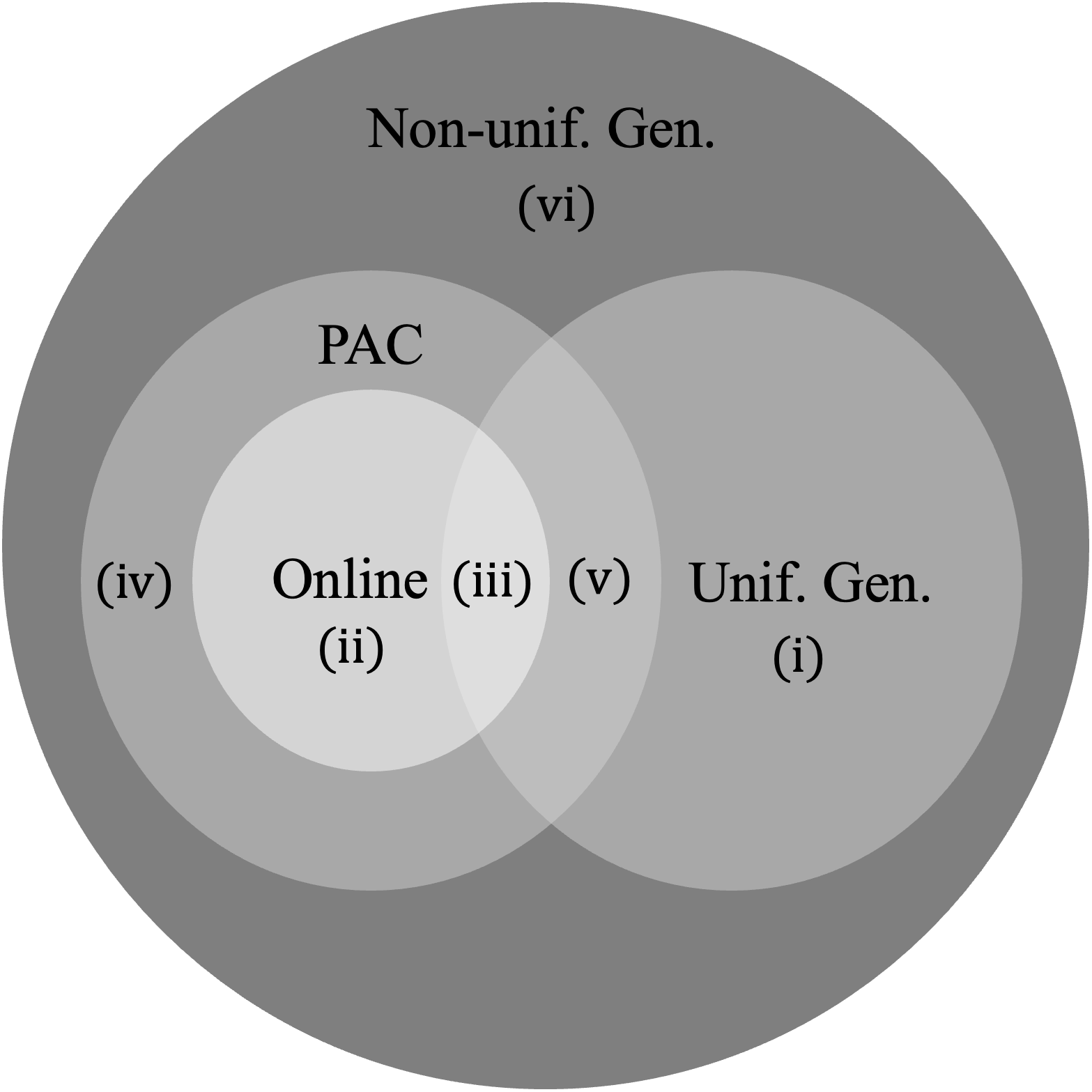}
    \caption{Landscape of Generatability vs. Predictability for countable classes. (i-vi) map to the items in Theorem \ref{thm:genvpred}.}
    \label{fig:pacvsgen}
\end{figure}

Our main result is essentially Figure \ref{fig:pacvsgen} which captures the landscape of generatability and predictability. Its written form is  Theorem \ref{thm:genvpred}, whose proof is in Appendix \ref{app:genvpred}.

\begin{theorem}[Generation v. Prediction] \label{thm:genvpred} Let $\Xcal$ be countable. The following statements are true.

\begin{itemize}
\item[(i)] There exists a countable class $\Hcal \subseteq \{0, 1\}^{\Xcal}$ which is \emph{uniformly generatable} but not \emph{PAC learnable}.
\item[(ii)] There exists a countable class $\Hcal\subseteq \{0, 1\}^{\Xcal}$ which is \emph{online learnable} but not \emph{uniformly generatable}.
\item[(iii)] There exists a countable class $\Hcal\subseteq \{0, 1\}^{\Xcal}$ which is \emph{online learnable} and \emph{uniformly generatable}.
\item[(iv)] There exists a countable class $\Hcal\subseteq \{0, 1\}^{\Xcal}$ that is \emph{PAC learnable} but neither \emph{online learnable} nor \emph{uniformly generatable}.
\item[(v)] There exists a countable class $\Hcal\subseteq \{0, 1\}^{\Xcal}$ that is \emph{PAC learnable} and \emph{uniformly generatable}, but not \emph{online learnable}.
\item[(vi)] There exists a countable class $\Hcal\subseteq \{0, 1\}^{\Xcal}$ that is neither \emph{PAC learnable} nor \emph{uniformly generatable}.
\end{itemize}
\end{theorem}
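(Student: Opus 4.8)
The plan is to construct, for each of the six items, an explicit countable class over a convenient countable space (typically $\Xcal=\naturals$ or $\Xcal=\integers$), and then verify the relevant generation property using the Closure-dimension characterizations (Theorems \ref{thm:unifgen}, \ref{thm:nonunifgen}) while verifying the relevant prediction property using the VC and Littlestone dimensions. The key observation driving the constructions is that uniform generatability (finite Closure dimension) and predictability (finite VC/Littlestone dimension) are governed by \emph{orthogonal} combinatorial features: the Closure dimension cares about whether small sets of positive examples pin down a \emph{finite} common core of positive examples, whereas VC/Littlestone care about shattering. So the engine of the proof is to mix-and-match two "gadgets": a gadget that makes the Closure dimension small (e.g.\ every hypothesis shares a common \emph{infinite} tail, like $\integers_{\le 0}$, so that $\langle x\rangle_\Hcal$ is always infinite and $\operatorname{C}(\Hcal)=0$) and a gadget that makes it large (e.g.\ the "disjoint blocks" construction from the proof of Lemma \ref{lem:nonunifvsunifgen}, where the $d$-th hypothesis has a block of exactly $d$ fresh positive examples so that $\operatorname{C}(\Hcal)=\infty$), and independently a gadget controlling VC/Littlestone (thresholds on $\naturals$ give infinite Littlestone but VC dimension $1$ and PAC learnability; a class shattering arbitrarily large sets gives infinite VC).

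Concretely, I would proceed item by item. For (i): take $\Xcal=\naturals$ and $\Hcal=\{x\mapsto\indicator\{x\in A\cup E\}: A\subseteq\naturals\}$ where $E$ is a fixed infinite set, or more simply reuse the idea of Lemma \ref{lem:uncountunifgen} restricted to a countable subfamily that still has infinite VC dimension (e.g.\ all finite modifications of a fixed set with infinite support); then $\operatorname{C}(\Hcal)=0$ so $\Hcal$ is uniformly generatable, while $\Hcal$ shatters arbitrarily long sequences so $\operatorname{VC}(\Hcal)=\infty$ and it is not PAC learnable. For (ii): take thresholds on $\naturals$, $\Hcal=\{x\mapsto\indicator\{x\ge n\}:n\in\naturals\}$ (adjusted to satisfy UUS, which thresholds of this "upward" form already do); this is online learnable iff... actually thresholds have infinite Littlestone dimension, so instead use a class with Littlestone dimension $1$ that still has $\operatorname{C}(\Hcal)=\infty$ — the block construction $\Hcal^e$ from Lemma \ref{lem:nonunifvsunifgen} without the parity partner, i.e.\ $\Hcal=\{x\mapsto\indicator\{x\in B_d\cup E\}:d\in\naturals\}$ where $B_d$ is the $d$-th block; a chain-like class of this form has finite Littlestone dimension while $\operatorname{C}(\Hcal)=\infty$. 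I will need to recompute Closure/Littlestone carefully here since the common tail $E$ pushes the Closure dimension down — so the tail must be \emph{removed} in (ii) and the blocks must be the only positive examples, which then threatens UUS; the fix is to let each hypothesis's support be its block \emph{together with} all of $\integers_{\le 0}$ only for half the hypotheses as in $\Hcal^e$ vs $\Hcal^o$, recreating exactly the $\operatorname{C}=\infty$ argument of Lemma \ref{lem:nonunifvsunifgen}, whose class I can then re-examine for online learnability. For (iii)–(vi): (iii) a finite-support-modification class over a single infinite base set with a threshold structure layered on a separate coordinate block — uniformly generatable via common tail and online learnable via bounded Littlestone; (iv) singletons-style / thresholds give PAC but not online, combined with the $\operatorname{C}=\infty$ block gadget on disjoint coordinates; (v) a class that is PAC learnable (finite VC) and uniformly generatable (common infinite tail) but has infinite Littlestone — e.g.\ the "finite or co-finite initial segment" style class with an attached common tail; (vi) the hardest-to-predict-and-generate object — e.g.\ the parity/block class $\Hcal^e\cup\Hcal^o$ from Lemma \ref{lem:nonunifvsunifgen} already has $\operatorname{C}=\infty$; if it also has infinite VC (it shatters each block since within a block the two parity hypotheses realize... no, only two labelings) I would instead cross it with a separately-supported infinite-VC gadget so that neither PAC nor uniform generation holds.

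The main obstacle I anticipate is item (ii) and, relatedly, pinning down all six classes \emph{simultaneously over a single countable $\Xcal$} while keeping the UUS assumption: UUS forces every support to be infinite, which naturally wants to introduce a shared infinite tail, but a shared infinite tail collapses the Closure dimension to $0$ and thereby makes uniform generation \emph{too easy}, killing the "not uniformly generatable" half of (ii), (iv), and (vi). The resolution, as the proof of Lemma \ref{lem:nonunifvsunifgen} shows, is to use \emph{two} disjoint infinite tails ($E$ and $O$) so that no single element's closure is infinite; I expect each of (ii), (iv), (vi) to require a bespoke two-tail (or block-growing) construction, layered via disjoint coordinates with an independent gadget controlling PAC/online learnability. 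Beyond that, the verifications are routine: computing $\operatorname{VC}$, $\operatorname{L}$, and $\operatorname{C}$ for these explicit classes, and invoking Theorem \ref{thm:unifgen} to convert finite/infinite Closure dimension into the (non-)existence of a uniform generator. I would organize the appendix proof as six short lemmas, one per item, each stating the class and then three one-paragraph computations.
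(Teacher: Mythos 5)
Your overall plan --- build six explicit classes and verify the three combinatorial dimensions $\operatorname{VC}$, $\operatorname{L}$, $\operatorname{C}$ using Theorem~\ref{thm:unifgen} --- is exactly what the paper does, and most of your constructions converge to the paper's (finite subsets of $\naturals$ plus the tail $\integers_{\le 0}$ for (i), thresholds for (v), $\Hcal_{\text{thresh}}\cup\Hcal^e\cup\Hcal^o$ for (iv), singletons-plus-tail for (iii)). Two things need flagging.

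First, a concrete error in your discussion of (ii). You assert that ``the block construction $\Hcal^e$ from Lemma~\ref{lem:nonunifvsunifgen} without the parity partner \dots has finite Littlestone dimension while $\operatorname{C}(\Hcal)=\infty$.'' That second clause is false: $\operatorname{C}(\Hcal^e)=0$. Every hypothesis in $\Hcal^e$ has $E$ (all negative even integers) in its support, so for any singleton $x$ with $\Hcal^e(x)\neq\emptyset$ we get $\langle x\rangle_{\Hcal^e}\supseteq E$, which is infinite. The entire reason the class of Lemma~\ref{lem:nonunifvsunifgen} has infinite Closure dimension is that it contains \emph{both} the $E$-tail and the $O$-tail families, so that the closure of a block $B_d$ under $\Hcal^e\cup\Hcal^o$ collapses to $B_d\cap E\cap O = B_d$ (a finite set). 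You do self-correct a sentence later, but you should not present a claim you already know to be wrong as the first move.

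Second, and more importantly, after you arrive at $\Hcal^e\cup\Hcal^o$ you write ``whose class I can then re-examine for online learnability'' and stop. That re-examination \emph{is} item (ii). The bulk of the paper's proof is a case analysis showing $\operatorname{L}(\Hcal^e\cup\Hcal^o)=2$: first establish that the only two-element sets $\{x_1,x_2\}$ that can be shattered consist of one strictly negative and one strictly positive integer (both positive, or both negative, is impossible because the blocks $A_d$ are disjoint and the two tails $E$, $O$ are disjoint), and then argue no depth-3 Littlestone tree can be shattered because a path with two $1$'s pins down a unique hypothesis. Without this, (ii) is unproved; finiteness of the Littlestone dimension is genuinely the nontrivial half here, and it does not follow from any general principle you invoke.

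A minor note on (vi): your ``cross with a separately-supported infinite-VC gadget'' idea can be made to work, but you would then need to verify that the product/union construction does not blow up the closure --- specifically that $\bigcap_{h_2\in\Hcal_2}\operatorname{supp}(h_2)$ is finite, otherwise the closure on the $\Xcal_1$-side witnesses would pick up an infinite set from the $\Xcal_2$-side and you would lose $\operatorname{C}=\infty$. The paper sidesteps this entirely with the single class $\Hcal=\{x\mapsto\mathbbm{1}\{x\notin A\}: A\subset\naturals,\ |A|<\infty\}$, for which the closure of any finite set $\{x_1,\dots,x_d\}$ is exactly $\{x_1,\dots,x_d\}$ (so $\operatorname{C}=\infty$), and any finite set is shattered (so $\operatorname{VC}=\infty$), with UUS holding since all supports are co-finite. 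This is strictly simpler than crossing two gadgets; you might adopt it.
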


 Theorem \ref{thm:genvpred} (and Figure \ref{fig:pacvsgen}) shows that even amongst countable classes $\Hcal$ (which we know, via Corollary \ref{cor:countnonunif}, are always non-uniformly generatable), uniform generatability and prediction are truly incomparable -- knowing whether a class is PAC or online learnable tells you \emph{nothing} about whether it is uniformly generatable, and vice versa. As such, these are two fundamentally different properties of a hypothesis class.  Perhaps the best evidence of this is the difference in their behavior under unions. PAC and online learnability behave very nicely under unions -- if $\Hcal_1$ and $\Hcal_2$ are PAC/online learnable, then their union $\Hcal_1 \cup \Hcal_2$ is also PAC/online learnable \citep{dudley1978central, alon2020closure}. However, the same cannot be said for generation -- both uniform generation and non-uniform generation are \emph{not} closed under finite unions. 






\begin{lemma} \label{lem:nonunifclos} Let $\Xcal$ be countable. There exists two classes $\Hcal_1, \Hcal_2 \subseteq \{0, 1\}^{\Xcal}$ such that $\operatorname{C}(\Hcal_1) = \operatorname{C}(\Hcal_2) = 0$, but $\Hcal_1 \cup \Hcal_2$ is not non-uniformly generatable. 
\end{lemma}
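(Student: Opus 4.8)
The plan is to exhibit two classes, each with Closure dimension $0$, whose union fails the characterization of non-uniform generatability in Theorem~\ref{thm:nonunifgen}; equivalently, I will build on the counterexample structure already used in Lemma~\ref{lem:notnonunifgen}. Recall that the class $\Hcal$ in the proof of Lemma~\ref{lem:notnonunifgen} was $\{x \mapsto \mathbbm{1}\{x \in A \text{ or } x \leq 0\} : A \in 2^{\naturals}\} \cup \{x \mapsto \mathbbm{1}\{x \in \naturals\}\}$ over $\Xcal = \integers$, and it was shown there \emph{not} to be non-uniformly generatable. So the entire task is to write this $\Hcal$ as a union $\Hcal_1 \cup \Hcal_2$ where $\operatorname{C}(\Hcal_1) = \operatorname{C}(\Hcal_2) = 0$.

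First I would set $\Hcal_1 = \{x \mapsto \mathbbm{1}\{x \in A \text{ or } x \leq 0\} : A \in 2^{\naturals}\}$. As observed in the proof of Lemma~\ref{lem:uncountunifgen}, for every single example $x \in \integers$ we have $\langle x \rangle_{\Hcal_1} \subseteq \integers_{\leq 0}$ (if $x \leq 0$ then $\langle x\rangle_{\Hcal_1} = \integers_{\leq 0}$, which is infinite; if $x \in \naturals$ then already $\langle x\rangle_{\Hcal_1} = \integers_{\leq 0}$ too, since the hypothesis with $A = \{x\}$ and the one with $A = \{x\} \cup \{y\}$ for any other $y$ are both present), so no singleton has finite nonempty closure, giving $\operatorname{C}(\Hcal_1) = 0$. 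Then I would handle the leftover hypothesis $h^{\naturals} : x \mapsto \mathbbm{1}\{x \in \naturals\}$: since a single hypothesis does not by itself have Closure dimension $0$ (for any $x \in \naturals$, $\langle x \rangle_{\{h^{\naturals}\}} = \naturals$ is infinite, which is fine, but we should double-check the $d=1$ condition carefully — actually $\langle x\rangle = \naturals$ is infinite so indeed $\operatorname{C}(\{h^{\naturals}\}) = 0$), I can simply take $\Hcal_2 = \{h^{\naturals}\}$, or to be safe pad it to $\Hcal_2 = \{x \mapsto \mathbbm{1}\{x \in \naturals \cup A\} : A \in 2^{\integers_{\leq 0}}\}$, for which every singleton closure contains $\naturals$ and is hence infinite, so $\operatorname{C}(\Hcal_2) = 0$. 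Either way $\Hcal_1 \cup \Hcal_2 \supseteq \Hcal$ from Lemma~\ref{lem:notnonunifgen} (and with the padded choice one checks the union is exactly a class of the same flavor, still not non-uniformly generatable by the same argument).

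The final step is to invoke Lemma~\ref{lem:notnonunifgen}: the union is not non-uniformly generatable. With the unpadded choice $\Hcal_2 = \{h^{\naturals}\}$ the union is literally the class of Lemma~\ref{lem:notnonunifgen}, so nothing further is needed; with the padded choice one re-runs the diagonalization argument verbatim (the extra hypotheses differing only on $\integers_{\leq 0}$ do not interfere, since the adversary's stream in that argument consists of positive integers $p_1, p_2, \dots$ and the contradiction hypothesis $h_\infty$ is chosen from the $\naturals$-supported family). I expect the main obstacle to be purely bookkeeping: verifying that the $d=1$ clause of the Closure dimension definition is genuinely satisfied (i.e. there is \emph{no} single example whose closure is finite and nonempty) for each of the two pieces, which amounts to noting that in $\Hcal_1$ every closure is contained in the infinite set $\integers_{\leq 0}$ and that it is in fact \emph{equal} to that infinite set, and symmetrically that every closure in $\Hcal_2$ contains the infinite set $\naturals$. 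No deep idea is required beyond recognizing that the Lemma~\ref{lem:notnonunifgen} counterexample already decomposes into two closure-dimension-zero pieces.
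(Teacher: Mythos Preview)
Your proposal is correct and takes essentially the same route as the paper: set $\Hcal_1 = \{x \mapsto \mathbbm{1}\{x \in A \text{ or } x \leq 0\} : A \in 2^{\naturals}\}$ and $\Hcal_2 = \{x \mapsto \mathbbm{1}\{x \in \naturals\}\}$ (your ``unpadded'' choice is exactly what the paper uses), verify $\operatorname{C}(\Hcal_1) = \operatorname{C}(\Hcal_2) = 0$, and invoke Lemma~\ref{lem:notnonunifgen}. (Minor slip: the containment $\langle x \rangle_{\Hcal_1} \subseteq \integers_{\leq 0}$ should read $\supseteq$, since every $h \in \Hcal_1$ contains $\integers_{\leq 0}$ in its support; this does not affect the conclusion that the closure is infinite.)
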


\begin{proof} Let $\Xcal = \mathbbm{Z}$ and consider $\Hcal_1 = \{x \mapsto \mathbbm{1}\{x\in A\text{ or } x\leq 0\}:A\subseteq\naturals\}$ and $\Hcal_2 = \{x \mapsto \mathbbm{1}\{x \in \naturals\}\}.$ Its not hard to see that $\operatorname{C}(\Hcal_1) = \operatorname{C}(\Hcal_2) = 0$. Moreover, by Lemma \ref{lem:notnonunifgen}, we know that $\Hcal = \Hcal_1 \cup \Hcal_2$ is not non-uniformly generatable. 
\end{proof}

Lemma \ref{lem:nonunifclos} actually shows something stronger -- there exists a uniformly generatable class $\Hcal$ and a hypothesis $h : \Xcal \rightarrow \{0, 1\}$ such that $\Hcal \cup \{h\}$ is no longer non-uniformly generatable! Such divergent behavior is not present in PAC and online learnability, where the addition of a single hypothesis never changes the learnability of the class. Surprisingly, a similar, but weaker statement can be said about generatability in the limit. 

 \begin{lemma} \label{lem:hardgeninlim} Let $\Xcal$ be countable. There exists a countable sequence of classes $\Hcal_1, \Hcal_2, \dots$, all satisfying the \emph{UUS} property, such that $\operatorname{C}(\Hcal_i) = 0$ for all $i \in \mathbbm{N}$, but the class $\Hcal = \bigcup_{i \in \mathbbm{N}} \Hcal_i$ is \emph{not} generatable in the limit.
\end{lemma}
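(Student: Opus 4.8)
The plan is to build $\Hcal$ as a countable union $\bigcup_i \Hcal_i$ of uncountable, closure‑dimension‑$0$ classes whose ``safe cores'' are arranged so that no generator can, along the relevant enumerations, ever commit to a correct core; the non‑generatability is then witnessed by a diagonalization argument in the spirit of the $h_{\infty}$ construction in the proof of Lemma \ref{lem:notnonunifgen}. Two structural observations guide the construction. First, by Theorem \ref{thm:gencount} every countable class is generatable in the limit, so $\Hcal$ must be uncountable, and hence all but finitely many $\Hcal_i$ must be uncountable. Second, by Definition \ref{def:gem}, $\operatorname{C}(\Hcal_i)=0$ means exactly that for every example $x$ lying in some support of $\Hcal_i$ the set $\langle x\rangle_{\Hcal_i}=\bigcap_{h\in\Hcal_i,\,h(x)=1}\operatorname{supp}(h)$ is infinite; thus each $\Hcal_i$ must carry, at each point, an infinite core shared by all of its hypotheses through that point. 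The canonical such class is the one in Lemma \ref{lem:uncountunifgen}: all $h$ whose support contains a fixed infinite set. So a natural template is $\Hcal_i\subseteq\{h:T_i\subseteq\operatorname{supp}(h)\}$ for a countable family of infinite cores $T_i\subseteq\Xcal$, after which $\operatorname{C}(\Hcal_i)=0$ (the relevant intersection always contains $T_i$) and the UUS property are immediate, and $\Hcal=\bigcup_i\Hcal_i$ inherits UUS.

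The substance is choosing the cores $\{T_i\}$ (and possibly shrinking each piece below the full $\{h:T_i\subseteq\operatorname{supp}(h)\}$) so that $\Hcal$ fails to be generatable in the limit. What we need is: for every generator $\Gcal$ there exist $h^{\star}\in\Hcal$ and an enumeration of $\operatorname{supp}(h^{\star})$ on which $\Gcal$ errs infinitely often. Following the blueprint of Lemma \ref{lem:notnonunifgen}, I would construct $h^{\star}$ as a ``limit'' hypothesis over infinitely many stages: at stage $k$, having revealed a finite prefix $e_1,\dots,e_{t_k}$ that is still consistent with infinitely many members of $\Hcal$ (so nothing outside the revealed set is forced into \emph{every} consistent support), reveal further examples to force $\Gcal$ to output some $\hat x$, and then extend the partial specification of $\operatorname{supp}(h^{\star})$ so as to exclude $\hat x$ from the future enumeration. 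This extension is legitimate precisely because, by design, no finite revealed prefix pins down an infinite set that must be contained in every consistent support; iterating over all $k$ yields infinitely many mistakes. As usual, the simulate‑then‑commit move is valid because the adversary may inspect $\Gcal$ before fixing $h^{\star}$ and the enumeration.

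The step I expect to be the main obstacle is reconciling the two demands on the cores. To make every $\Hcal_i$ have closure dimension $0$ the cores must be ``large'' (so that intersections of supports stay infinite); but to defeat every limit generator no core may be ``recognizable'' — if finitely many revealed examples ever certified that a particular infinite $T_i\subseteq\operatorname{supp}(h^{\star})$, then $\Gcal$ could thereafter play unrevealed elements of $T_i$ and make only finitely many mistakes, which is exactly why the classes in Theorem \ref{thm:genfinite}, Lemma \ref{lem:uncountunifgen}, and even the union in Lemma \ref{lem:notnonunifgen} stay generatable in the limit. Hence the cores $T_i$ must overlap heavily — each example must lie in cores of many pieces — and, crucially, the limit hypothesis $h^{\star}$ produced by the diagonalization must still land inside some $\Hcal_i$ (genuinely contain some $T_i$) even though that $T_i$ was never identifiable along the enumeration. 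Arranging the family $\{T_i\}$ on a tree‑like scheme so that every finite stage remains ambiguous across infinitely many pieces while the limit of the staged construction always falls into a piece is the delicate part; once that is in place, verifying $\operatorname{C}(\Hcal_i)=0$ for each $i$ and checking that the staged stream is a bona fide enumeration of $\operatorname{supp}(h^{\star})$ is routine.
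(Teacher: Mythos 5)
Your high-level scaffolding matches the paper's: you correctly deduce that the union must be uncountable (else Theorem \ref{thm:gencount} applies), you correctly read $\operatorname{C}(\Hcal_i)=0$ as requiring an infinite common core, you propose the right template $\Hcal_i=\{h: T_i\subseteq\operatorname{supp}(h)\}$ (which is verbatim what the paper uses), and you correctly identify a stagewise diagonalization modelled on Lemma \ref{lem:notnonunifgen} as the mechanism. So the framework is sound.

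However, the actual construction — which you yourself flag as ``the delicate part'' — is not carried out, and the design principle you propose for the cores points in the wrong direction. You argue that ``the cores $T_i$ must overlap heavily — each example must lie in cores of many pieces,'' fearing that otherwise finitely many revealed examples could certify a particular $T_i\subseteq\operatorname{supp}(h^{\star})$. But that fear is unfounded under your own template: because each $\Hcal_i$ allows an \emph{arbitrary} extra set $A$ on top of $T_i$, any finite revealed prefix is already consistent with a hypothesis in \emph{every} $\Hcal_j$ (take $\operatorname{supp}(h)=T_j\cup\{\text{prefix}\}$), so no finite prefix ever certifies any core, regardless of overlap. The paper exploits exactly this: it takes $\Xcal=\mathbb{Q}_+$, $Q_i=\{p/p_i:p\text{ prime}\}$, and $\Hcal_i=\{h:Q_i\subseteq\operatorname{supp}(h)\}$; the $Q_i$ are pairwise \emph{almost disjoint} ($Q_i\cap Q_j=\{1\}$ for $i\neq j$), not heavily overlapping, and this near-disjointness is what makes the adversary's stagewise forcing work — at stage $n$ the generator is driven to emit an element of $Q_n$, which no later stage will ever re-add. (In fact, if you push ``heavy overlap'' too far — e.g.\ a common infinite subset of all the $T_i$ — you get $\operatorname{C}(\Hcal)=0$ and $\Hcal$ becomes trivially generatable, the opposite of what you want.)

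The other genuine subtlety you raise — that the limit hypothesis must actually land in some $\Hcal_i$ even though its core is never identified — is real, and the paper resolves it not with a ``tree-like scheme'' but with a simple seeding trick: at each stage $n$ the adversary slips one new element $p_n/p_1\in Q_1$ into the revealed prefix. The union of the finite prefixes therefore contains all of $Q_1$, so the limit hypothesis $h_1$ with $\operatorname{supp}(h_1)=\bigcup_n\{x_{n,1},\dots,x_{n,t_n}\}$ lies in $\Hcal_1$, while the generator's outputs at times $t_2<t_3<\cdots$ land in $Q_n\setminus\operatorname{supp}(h_1)$ infinitely often. Without this seeding device (or some substitute), your diagonalization leaves open the possibility that the limiting support, being just a union of finite prefixes, contains no $T_i$ at all and hence $h^{\star}\notin\Hcal$. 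So the gap is two-fold: the concrete family $\{T_i\}$ is missing, and the one structural constraint you do impose (heavy overlap, tree-like arrangement) is neither needed nor what the correct construction uses.
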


\begin{proof}
    Let $\Xcal=\mathbb{Q}_+$ be all the positive rational numbers. Let $P=\{p_n\}_{n=1}^\infty$ be the set of prime numbers, indexed in increasing order. For each $i\in\naturals$, define
    \[
    Q_i=\Bigl\{\frac{p}{p_i},p\in P\Bigl\},\quad 
    \Hcal_i=\{x \mapsto\mathbbm{1}\{x\in Q_i\text{ or }x\in A\}:A\in 2^{\mathbb{Q}_+}\}.
    \]
    Note that $\operatorname{C}(\Hcal_i)=0$ for all $i \in \naturals$. We now show that $\Hcal=\bigcup_{i=1}^\infty\Hcal_i$ is not generatable in the limit.

    Suppose $\Hcal$ is generatable in the limit and $\Gcal$ is such a generator, we now prove a contradiction. Let $h_2\in \Hcal_2$ be such that $\operatorname{supp}(h_2)=Q_2\cup\{\frac{3}{2}\}$. Let $\{x_{2,t}\}_{t=1}^\infty$ be an enumeration of $\operatorname{supp}(h_2)$ such that
    \[\{x_{2,1},x_{2,2},x_{2,3},x_{24},\dots\}=\Bigl\{\frac{p_2}{p_1},\frac{p_1}{p_2},\frac{p_2}{p_2},\frac{p_3}{p_2},\dots\Bigl\}.\]
    
    By definition, there exists a time $t_2\ge 4$ such that $\Gcal(x_{21},\dots, x_{2t_2})\in \operatorname{supp}(h_2)\setminus\{x_{21},\dots, x_{2,t_2}\}$, which implies $\Gcal(x_{21},\dots,x_{2,t_2})\in Q_2\setminus\{x_{2,1},\dots, x_{2,t_2}\}$. Note that $t_2\ge 4$ implies that $1\in \{x_{2,1},\dots, x_{2,t_2}\}$. 
    Let $h_3\in \Hcal_3$ be such that $\operatorname{supp}(h_3)=Q_3\cup\{\frac{5}{2}\}\cup\{x_{2,1},\dots, x_{2,t_2}\}$.
     Let $\{x_{3,t}\}_{t=1}^\infty$ be an enumeration of $\operatorname{supp}(h_3)$ such that
     \[\{x_{3,1},\dots,x_{3,t_2}\}=\{x_{2,1},\dots,x_{2,t_2}\}\]
     and 
     \[\{x_{3,t_2+1},x_{3,t_2+2},x_{3,t_2+3},x_{3,t_2+4},\dots\}=\Bigl\{\frac{p_3}{p_1},\frac{p_1}{p_3},\frac{p_2}{p_3},\frac{p_3}{p_3},\dots\Bigl\}.\]
     
     We know there exists a time $t_3\ge t_2+1$ such that $\Gcal(x_{3,1},\dots, x_{3,t_3})\in \operatorname{supp}(h_3)\setminus\{x_{3,1},\dots x_{3,t_3}\}$, which means $\Gcal(x_{3,1},\dots,x_{3,t_3})\in Q_3\setminus\{x_{3,1},\dots x_{3,t_3}\}$.

     Inductively, suppose $h_2,\dots,h_n$ and $t_2,\dots,t_n$ are all defined. Let $h_{n+1}\in \Hcal_{n+1}$ be such that
     \[
     \operatorname{supp}(h_{n+1})=Q_{n+1}\cup\{\frac{p_{n+1}}{p_1}\}\cup\{x_{n,1},\dots,x_{n,t_n}\}.
     \]
     Let $\{x_{n+1,t}\}_{t=1}^\infty$ be an enumeration of $\operatorname{supp}(h_{n+1})$ such that 
     \[\{x_{n+1,1},\dots,x_{n+1,t_n}\}=\{x_{n,1},\dots,x_{n,t_n}\}\]
     and 
     \[
     \{x_{n+1,t_n+1},x_{n+1,t_n+2},x_{n+1,t_n+3},x_{n+1,t_n+4},\dots\}=\Bigl\{\frac{p_n}{p_1},\frac{p_1}{p_n},\frac{p_2}{p_n},\frac{p_3}{p_n},\dots\Bigl\}.
     \]
    By our construction, there exists a time $t_{n+1}\ge t_n+1$ such that $\Gcal(x_{n+1,1},\dots,x_{n+1,t_{n+1}})\in Q_{n+1}\setminus\{x_{n+1,1},\dots, x_{n+1,t_{n+1}}\}$
    Now, let $h_1$ be a hypothesis such that
    \[\operatorname{supp}(h_1)=\bigcup_{i=2}^\infty\{x_{i,1},\dots, x_{i,t_i}\}.\]
    It is clear that $h_1\in \Hcal_1$. Moreover, let $\{x_i\}_{i=1}^\infty$ be an enumeration of $h_1$ such that for all $n \geq 2$,
     \[\{x_1,x_2,\dots,x_{t_n}\}=\{x_{n,1},x_{n,2},\dots,x_{n,t_n}\}.\]
    Then, by our construction, we know that for all $n \geq 2$, $\Gcal(x_1,x_2,\dots,x_{t_n})\in Q_{n}\setminus \operatorname{supp}(h_1)$, which means $h_1$ is not generatable in the limit. 
\end{proof}

We note that Lemma \ref{lem:hardgeninlim} provides two contributions. First, it gives a non-trivial class which is not generatable in the limit. Second, it establishes that even generatability in the limit, the \emph{weakest} definition of generatability, is not very well behaved under (countable) unions. An interesting question is whether generatability in the limit is even closed under \emph{finite} unions. We leave this as an open question in Section \ref{sec:diss}.

\section{Extension to Prompted Generation} \label{sec:promptedgen}

So far, generators are only required to eventually produce new positive examples. This setup does not account for the fact that in many real-life situations, we would like to generate objects with respect to a \emph{prompt}. For example, we may like to generate an image based on a text caption, respond to a specific query from the user, or generate a molecule with particular structural properties. 

To capture a prompted-version of generation and inspired by recent work on multiclass learning \citep{Brukhimetal2022, hanneke2023multiclass}, we let $\Ycal$ be an abstract prompt space and $\Hcal \subseteq \Ycal^{\Xcal}$ be a \emph{multiclass} hypothesis class. Given a hypothesis $h \in \Hcal$ and a prompt $y \in \Ycal$, one should think of the set $\{x \in \Xcal: h(x) = y\}$ as the collection of valid generatable objects for prompt $y$ with respect to hypothesis $h$.  Roughly speaking, if $h \in \Hcal$ captures the true world, and the prompt on round $t$ is $y_t \in \Ycal$, then the goal of the generator should be to output an example $\hat{x}_t$ such that $h(\hat{x}_t) = y_t$.

To handle prompts, we need a slight  modification of the game defined in Section \ref{sec:prelim}. As in the binary case, the adversary first selects a hypothesis $h \in \Hcal \subseteq \Ycal^{\Xcal}$ and a sequence of examples $x_1, x_2, \dots$. But now, it also selects a sequence of prompts $y_1, y_2, \dots.$ In each round $t \in \naturals$, the adversary reveals the tuple  $(x_t, h(x_t), y_t)$ and the goal of the learner is output $\hat{x}_t \in \{x \in \Xcal: h(x) = y_t\} \setminus \{x_1, \dots, x_t\}.$ 


\begin{remark}
Although we call $\Ycal$ a ``prompt" space, we do not place any assumption on $\Ycal$ or its size. Thus, one can place other natural restrictions on generation by choosing $\Ycal$ appropriately. For example, if one takes $\Ycal$ to be a partition of $\Xcal$ and $\Hcal \subseteq \{0, 1\}^{\Xcal}$ to be a binary class, then the corresponding setting can capture generation with some sort of diversity constraint. 
\end{remark}

To make the outlined notion of prompted generatability more formal, we need an extension of the ``support" and UUS property to the prompted setting. For any $h \in \Hcal \subseteq \Ycal^{\Xcal}$ and any $y \in \Ycal$, define the $y$-support of $h$ as:
$$\operatorname{supp}(h, y) := \{x \in \Xcal: h(x) = y\}.$$
Then, the Prompted Uniformly Unbounded Support property just requires that for every $h \in \Hcal$ and any $y \in \Ycal$, the $y$-support of $h$ is unbounded. 

\begin{assumption} [Prompted Uniformly Unbounded Support (PUUS)] A hypothesis class $\Hcal \subseteq \Ycal^{\Xcal}$ satisfies the \emph{Prompted Uniformly Unbounded Support (PUUS)} property if for every $y \in \Ycal$ and $h \in \Hcal$, we have that $|\operatorname{supp}(h, y)| = \infty$. 
\end{assumption}

\begin{remark}
Like the \emph{UUS} property, the \emph{PUUS} property is  only needed for bookkeeping purposes to prevent the adversary from presenting the generator with an impossible task (i.e. generating new examples for prompt $y \in \Ycal$ when no new, examples exist.) One can remove the \emph{PUUS} restriction, but restrict the adversary to choose a prompted sequence so that the generator is always guaranteed the existence of new, unseen examples with the selected prompt $y$. This assumption is also captured in Section 7 of \cite{kleinberg2024language}, where they assume that the adversaries only reveals ``non-trivial" prompts to the generator.
\end{remark}

Next, we extend the definition of Generator to a Prompted Generator. 

\begin{definition}[Prompted Generator] A prompted generator is a map $\Gcal: (\Xcal \times \Ycal \times \Ycal)^{\star} \rightarrow \Xcal$ that takes a finite sequence of tuples $(x_1, h(x_1), y_1), (x_2, h(x_1),  y_2), \dots$ and outputs an example $x$.
\end{definition}

Then, we can define the following notion of prompted uniform generatability. 

\begin{definition}[Prompted Uniform Generatability] \label{def:mcunifgen} Let $\Hcal \subseteq \Ycal^{\Xcal}$  be any hypothesis class satisfying the $\operatorname{PUUS}$ property. Then, $\Hcal$ is prompted \emph{uniformly} generatable, if there exists a prompted generator $\Gcal$ and a number $d^{\star} \in \naturals$, such that for every $h \in \Hcal$, any  sequence $(x_i, y_i)_{i \in \naturals}$, and any $y^{\star} \in \Ycal$, if there exists $t^{\star} \in \naturals$ such that 
$$|\{x_1, \dots, x_{t^{\star}}\} \cap \operatorname{supp}(h, y^{\star})| = d^{\star},$$
\noindent then 
$$\Gcal((x_1, h(x_1), y_1), \dots, (x_s,  h(x_s), y_s)) \in  \operatorname{supp}(h, y_s) \setminus \{x_1, \dots, x_s\}.$$
\noindent for all $s \geq t^{\star}$ where $y_s = y^{\star}.$
\end{definition}

The ``non-uniform" and ``in the limit" definitions follow analogously. 

\begin{definition}[Prompted Non-uniform Generatability] \label{def:mcnonunifgen} Let $\Hcal \subseteq \Ycal^{\Xcal}$  be any hypothesis class satisfying the $\operatorname{PUUS}$ property. Then, $\Hcal$ is prompted \emph{non-uniformly} generatable, if there exists a prompted generator $\Gcal$, such that for every $h \in \Hcal$, there exists a number $d^{\star} \in \naturals$, such that for every sequence $(x_i, y_i)_{i \in \naturals}$, and any $y^{\star} \in \Ycal$, if there exists $t^{\star} \in \naturals$ such that 
$$|\{x_1, \dots, x_{t^{\star}}\} \cap \operatorname{supp}(h, y^{\star})| = d^{\star},$$
\noindent then 
$$\Gcal((x_1, h(x_1), y_1), \dots, (x_s,  h(x_s), y_s)) \in  \operatorname{supp}(h, y_s) \setminus \{x_1, \dots, x_s\}.$$
\noindent for all $s \geq t^{\star}$ where $y_s = y^{\star}.$
\end{definition}

\begin{definition}[Prompted Generatability in the Limit] \label{def:mcgeninlim} Let $\Hcal \subseteq \Ycal^{\Xcal}$  be any hypothesis class satisfying the $\operatorname{PUUS}$ property. Then, $\Hcal$ is prompted generatable in the limit, if there exists a prompted generator $\Gcal$, such that for every $h \in \Hcal$, any sequence $(x_i, y_i)_{i \in \naturals}$, and any $y^{\star} \in \Ycal$, if $\operatorname{supp}(h, y^{\star}) \subseteq \{x_1, x_2, \dots\}$, then there exists $t^{\star} \in \naturals$ such that

$$\Gcal((x_1, h(x_1), y_1), \dots, (x_s,  h(x_s), y_s)) \in  \operatorname{supp}(h, y_s) \setminus \{x_1, \dots, x_s\}.$$
\noindent for all $s \geq t^{\star}$ where $y_s = y^{\star}.$
\end{definition}

Roughly speaking, Definitions \ref{def:mcunifgen},  \ref{def:mcnonunifgen}, and \ref{def:mcgeninlim} state that a class $\Hcal \subseteq \Ycal^{\Xcal}$ is prompted generatable if for any prompt $y \in \Ycal$, after a sufficient number of distinct examples with prompt $y$ are observed, one can generate new examples with prompt $y$. Like in the binary case, generators $\Gcal$ that witness Definitions \ref{def:mcunifgen} and \ref{def:mcnonunifgen} have the following nice property -- for any prompt $y \in \Ycal$, once a sufficient number of distinct examples are observed with prompt $y$, $\Gcal$ can produce new, unseen examples for prompt $y$ auto-regressively and without any form of supervision.

\begin{remark} 
When $\Ycal = \{0, 1\}$, our definition of prompted generatability does \emph{not} reduce to definitions of unprompted generatability in Section \ref{sec:prelim}. The reason is two-fold: (1) in prompted generatability, the Generator can now observe both positive and negative \emph{labeled} examples, and (2) the Generator needs to be able to produce both positive and negative examples. In fact, its not hard to see that when $\Ycal = \{0, 1\}$, prompted uniform/non-uniform generatability is stronger than uniform/non-uniform generatability. Nevertheless, the definitions of uniform and non-uniform generatability from Section \ref{sec:prelim} are a special case of the prompted versions if one restricts the adversary to always pick examples $x_1, x_2, \dots$ such that $h(x_t) = 1$ and prompts $y_1 = 1, y_2 = 1, \dots.$   Observing both positive and negative labeled examples has been previously studied in the context of identifiability \citep{gold1967language, dupont1994regular}.
\end{remark}

\subsection{Comparison to \cite{kleinberg2024language}'s Prompted Generation} \label{sec:pgkm}

Our setting of prompted generation generalizes the model of prompting studied in Section 7 of \cite{kleinberg2024language}. Namely, \cite{kleinberg2024language} consider the following model. Let $\Xcal$ be a suffix space, $\Ycal$ be the prompt space, and $\Zcal \subseteq \{y \circ x: x \in \Xcal, y \in \Ycal\}$ be the space of completed prompts, where $\circ$ denotes the concatenation operator. Let $\Lcal = \{L_1, L_2, \dots \}$ denote a language family defined over $\Zcal$. Before the game begins, the adversary picks language $K \in \Lcal$, a sequence of its completed prompts $z_1, z_2, \dots \in K$, and a sequence of prompts $y_1, y_2, \dots.$ On round $t \in \mathbbm{N}$, the adversary reveals $(z_t, y_t)$, and the goal of the generator is to output $\hat{x}_t \in \Xcal$ such that $y_t \circ \hat{x}_t \in K \setminus \{z_1, \dots, z_t\}$. This model is equivalent to our setting after picking $\Hcal = \{h_L: L \in \Lcal\} \subseteq \Ycal^{\Xcal}$ as the hypothesis class, where $h_L: \Xcal \rightarrow \Ycal$ such that $h_L(x) = y$ if and only if $y \circ x \in L.$ 

That said, our definitions of prompted generatability differ from the notion of ``prompted generation in the limit" in Section 7 of \cite{kleinberg2024language}. We highlight the key differences below. Notice that in all our definitions of prompted generatability, the time point after which perfect generation must occur can be prompt specific.  This means that the generator only needs to perfectly generate with respect to a prompt after it has seen a sufficient number of examples with this prompt. This, however, is not the case for the definition of prompted generation in the limit studied by \cite{kleinberg2024language}. In their model, the generator must eventually perfectly complete prompts it may have never seen in the past. In this sense, our notions of prompted generatability are \emph{weaker}.

\subsection{Characterizations of Prompted Generatability}

To characterize prompted uniform and non-uniform generatability, we extend the Closure dimension to the prompted case. To do so, we need an extension of the closure operator to a \emph{prompted} closure operator $\langle \cdot, \cdot \rangle_{\Hcal}.$ Namely, for every finite sequence of examples $x_1, \dots, x_n$ and prompt $y \in \Ycal$, we define 
$$\Hcal(x_{1:n},y):=\{h\in\Hcal:h(x_i)=y,i=1,\dots,n\}$$
and
$$\langle x_{1:n}, y \rangle_{\Hcal} := \begin{cases}
			\bigcap_{h \in \Hcal(x_{1:n}, y)} \operatorname{supp}(h, y), & \text{if $|\Hcal(x_{1:n}, y)| \geq 1$}\\
            \bot, & \text{if $|\Hcal(x_{1:n}, y)| = 0$}
		 \end{cases}$$

We are now ready to define the Prompted Closure dimension.

\begin{definition}[Prompted Closure dimension] \label{def:gem} The \emph{Prompted Closure dimension} of $\Hcal$, denoted $\operatorname{PC}(\Hcal)$, is the largest number $d \in \naturals$ for which there exists \emph{distinct} $x_1, \dots, x_d \in \Xcal$ and a prompt $y \in \Ycal$ such that $|\langle (x_1, \dots, x_d), y\rangle_{\Hcal}| \neq \bot$ and $|\langle (x_1, \dots, x_d), y\rangle_{\Hcal}| < \infty.$  If this is true for arbitrarily large $d \in \naturals$, then we say that $\operatorname{PC}(\Hcal) = \infty.$ On the other hand, if this is not true for $d = 1$, we say that $\operatorname{PC}(\Hcal) = 0.$
\end{definition}

Using analogous techniques, one can prove that finiteness of the Prompted Closure dimension is both necessary and sufficient for prompted uniform generatability. 

\begin{theorem}[Characterization of Prompted Uniform Generatability] \label{thm:mcunifgen} Let $\Xcal$ and $\Ycal$ be countable. Let $\Hcal \subseteq \Ycal^{\Xcal}$ be any hypothesis class satisfying the $\operatorname{PUUS}$ property. The following statements are equivalent. 

\begin{itemize}
    \item[(i)] $\Hcal$ is prompted uniformly generatable.
    \item[(ii)] $\operatorname{PC}(\Hcal) < \infty.$
\end{itemize}
\end{theorem}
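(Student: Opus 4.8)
The plan is to mirror the proof structure of Theorem~\ref{thm:unifgen} (the binary case), splitting into necessity and sufficiency, with the prompted closure operator $\langle x_{1:n}, y\rangle_{\Hcal}$ playing the role of $\langle x_{1:n}\rangle_{\Hcal}$ and the prompt being fixed throughout each argument.

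\textbf{Necessity: (i) $\implies$ (ii).} I would prove the contrapositive: if $\operatorname{PC}(\Hcal) = \infty$, then $\Hcal$ is not prompted uniformly generatable. Fix an arbitrary prompted generator $\Gcal$ and an arbitrary $d \in \naturals$. Since $\operatorname{PC}(\Hcal) = \infty$, there exist $d^{\star} \geq d$, distinct $z_1, \dots, z_{d^{\star}}$, and a prompt $y^{\star} \in \Ycal$ with $\Hcal(z_{1:d^{\star}}, y^{\star}) \neq \bot$ and $|\langle z_{1:d^{\star}}, y^{\star}\rangle_{\Hcal}| < \infty$; by monotonicity $\Hcal(z_{1:d^{\star}}, y^{\star}) \subseteq \Hcal(z_{1:d}, y^{\star})$, so $p := |\langle z_{1:d}, y^{\star}\rangle_{\Hcal}| < \infty$ as well. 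Now feed $\Gcal$ the tuples corresponding to $z_1, \dots, z_d$ (each with true label $y^{\star}$ and prompt $y^{\star}$), padded out using the finitely many elements of $\langle z_{1:d}, y^{\star}\rangle_{\Hcal}$, and look at $\Gcal$'s output $\hat x$ on the round where it has seen exactly the $d$ distinct examples $z_{1:d}$ plus the padding --- after at most $p$ distinct examples. Exactly as in Lemma~\ref{lem:closnecc}, either $\hat x \notin \langle z_{1:d}, y^{\star}\rangle_{\Hcal}$, in which case there is some $h^{\star} \in \Hcal(z_{1:d}, y^{\star})$ with $h^{\star}(\hat x) \neq y^{\star}$, i.e. $\hat x \notin \operatorname{supp}(h^{\star}, y^{\star})$; or $\hat x$ lies in the already-seen set. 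Either way we can choose such an $h^{\star}$ and complete the stream with elements of $\operatorname{supp}(h^{\star}, y^{\star})$ disjoint from $\langle z_{1:d}, y^{\star}\rangle_{\Hcal}$ (possible since $h^{\star}$ has infinite $y^{\star}$-support by PUUS), keeping all prompts equal to $y^{\star}$. This witnesses that no finite $d^{\star}$ works for $\Gcal$, so $\Hcal$ is not prompted uniformly generatable.

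\textbf{Sufficiency: (ii) $\implies$ (i).} Suppose $\operatorname{PC}(\Hcal) = d < \infty$. I would build a prompted generator $\Gcal$ that, on each round $s$ with current prompt $y_s$, looks at the examples $x_{i_1}, \dots, x_{i_k}$ among $x_1, \dots, x_s$ whose \emph{revealed} label $h(x_{i_j})$ equals $y_s$ (these are exactly the observed elements of $\operatorname{supp}(h, y_s)$). If fewer than $d+1$ distinct such examples have appeared, play arbitrarily; otherwise, play any element of $\langle \{x_{i_1}, \dots, x_{i_k}\}, y_s\rangle_{\Hcal} \setminus \{x_1, \dots, x_s\}$. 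By the definition of $\operatorname{PC}(\Hcal) = d$, once $d+1$ distinct examples with true label $y_s$ have been seen, $|\langle \cdot, y_s\rangle_{\Hcal}| = \infty$, so this set minus the finitely many seen examples is nonempty and $\hat x_s$ is well-defined; and since the true $h^{\star}$ lies in $\Hcal(\{x_{i_1},\dots,x_{i_k}\}, y_s)$, we get $\langle \cdot, y_s\rangle_{\Hcal} \subseteq \operatorname{supp}(h^{\star}, y_s)$, so $\hat x_s \in \operatorname{supp}(h^{\star}, y_s) \setminus \{x_1, \dots, x_s\}$. Taking $d^{\star} = d+1$ in Definition~\ref{def:mcunifgen} finishes this direction; one may also note, as in the binary case, that a prompt-conditioned max-min / ERM oracle suffices to implement $\Gcal$.

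\textbf{Main obstacle.} The arguments are routine adaptations, so the only real care is bookkeeping around prompts: the generator now sees \emph{labeled} tuples, so it can correctly restrict to the sub-sequence of examples with the relevant label, and the adversary's stream is no longer an enumeration but merely contains enough distinct examples with the target prompt $y^{\star}$. I would make sure the necessity construction only ever uses the single prompt $y^{\star}$ (so the generator has no extra information to exploit) and that the sufficiency construction reacts to the prompt $y_s$ actually presented on each round rather than to a global prompt, since perfect generation for prompt $y^{\star}$ is only required on rounds $s$ with $y_s = y^{\star}$ and only after enough $y^{\star}$-labeled distinct examples have appeared. These are the points where a sloppy translation of the binary proof would break, but none of them is a genuine mathematical difficulty.
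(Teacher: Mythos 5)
Your proposal is correct and follows essentially the same route as the paper's proof sketches in Appendix B. For sufficiency, the paper's generator likewise extracts the subset $B_t := \{x_i : p_i = y_t\}$ of observed examples whose revealed label equals the current prompt, and once $|B_t| \geq \operatorname{PC}(\Hcal)+1$ plays from $\langle B_t, y_t\rangle_{\Hcal} \setminus \{x_1,\dots,x_t\}$, which is exactly your construction. For necessity, the paper also works with a single prompt $y^{\star}$ and distinct examples whose prompted closure is finite; the only cosmetic difference is that the paper re-indexes so that $x_1,\dots,x_d$ \emph{is} the full finite closure (using idempotence of $\langle\cdot,y^{\star}\rangle_{\Hcal}$), while you present the first $d$ points and then ``pad'' with the remaining closure elements --- these are the same argument in a different order, and both correctly mirror Lemma~\ref{lem:closnecc}. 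Your bookkeeping remarks (restricting to the sub-sequence with the relevant label, counting only distinct $y^{\star}$-labeled examples, keeping the adversary's prompts fixed at $y^{\star}$) are precisely the points where the translation requires care, and you handle them correctly.
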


Likewise, the same characterization of non-uniform generatability for the binary case also goes through when considering the prompted setting. 

\begin{theorem}[Characterization of Prompted Non-uniform Generatability] \label{thm:mcnonunifgen} Let $\Xcal$ and $\Ycal$ be countable. Let $\Hcal \subseteq \Ycal^{\Xcal}$ be any hypothesis class satisfying the $\operatorname{PUUS}$ property. The following statements are equivalent. 

\begin{itemize}
    \item[(i)] $\Hcal$ is prompted non-uniformly generatable.
    \item[(ii)] There exists a non-decreasing sequence of classes $\Hcal_1 \subseteq \Hcal_2 \subseteq \dots$  such that $\Hcal =\bigcup_{n\in\naturals} \Hcal_n$ and $\operatorname{PC}(\Hcal_n)<\infty$ for every $n\in\naturals.$
\end{itemize}
\end{theorem}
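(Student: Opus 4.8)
The plan is to mirror the proof of Theorem \ref{thm:nonunifgen} (the binary non-uniform characterization), using Theorem \ref{thm:mcunifgen} as the prompted analogue of Theorem \ref{thm:unifgen}. For the direction (i) $\Rightarrow$ (ii): given a prompted non-uniform generator $\Gcal$ for $\Hcal$, for each $h \in \Hcal$ let $d_h \in \naturals$ be the smallest natural number witnessing Definition \ref{def:mcnonunifgen} for $h$, i.e.\ such that for every prompted sequence $(x_i,y_i)_{i\in\naturals}$ and every $y^\star\in\Ycal$, once $|\{x_1,\dots,x_{t^\star}\}\cap\operatorname{supp}(h,y^\star)| = d_h$, the generator is correct on all rounds $s\ge t^\star$ with $y_s=y^\star$. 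Define $\Hcal_n := \{h \in \Hcal : d_h \le n\}$. Then $\Hcal_1\subseteq\Hcal_2\subseteq\cdots$ is non-decreasing with $\Hcal = \bigcup_n \Hcal_n$, and the \emph{same} generator $\Gcal$ witnesses prompted uniform generatability of each $\Hcal_n$ with uniform parameter $d^\star = n$. Hence each $\Hcal_n$ is prompted uniformly generatable, so by Theorem \ref{thm:mcunifgen} we get $\operatorname{PC}(\Hcal_n) < \infty$ for every $n$, giving (ii).

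For the direction (ii) $\Rightarrow$ (i): by Theorem \ref{thm:mcunifgen}, from the non-decreasing decomposition $\Hcal = \bigcup_n \Hcal_n$ with $\operatorname{PC}(\Hcal_n)<\infty$ we obtain prompted uniform generators $\Gcal_n$ for $\Hcal_n$ with uniform generation sample complexities $d_{\Gcal_n}$ (one may take $d_{\Gcal_n} = \operatorname{PC}(\Hcal_n)+1$ by the sufficiency construction underlying Theorem \ref{thm:mcunifgen}, or just an upper bound). Now I would run the identical two-case reduction from the sufficiency proof of Theorem \ref{thm:nonunifgen}, but keyed to a fixed target prompt. Concretely: on round $t$, given the observed tuples $(x_1,h(x_1),y_1),\dots,(x_t,h(x_t),y_t)$, the generator reads off the current prompt $y_t =: y^\star$, computes $d_t := |\{i\le t : h(x_i) = y^\star\}|$ (the number of distinct examples seen so far carrying the active prompt --- one must be slightly careful to count distinct examples, exactly as in the binary proof), selects $n_t = \max\{n : d_{\Gcal_n}\le d_t\}\cup\{0\}$ in the case $\sup_n d_{\Gcal_n}=\infty$ (or $n_t = t$ once $d_t \ge c := \sup_n d_{\Gcal_n}$ in the bounded case), and plays $\Gcal_{n_t}$'s output on the observed prefix. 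The correctness argument is then the same: if the adversary's hypothesis $h^\star$ lies in $\Hcal_{n^\star}$, then for any prompt $y^\star$, once $d_t \ge d_{\Gcal_{n^\star}}$ we have $n_t \ge n^\star$, hence $h^\star \in \Hcal_{n_t}$, the version space $\Hcal_{n_t}(x_{1:t}, y^\star)$ is nonempty, and the prompted uniform guarantee of $\Gcal_{n_t}$ forces $\Gcal_{n_t}$'s output into $\operatorname{supp}(h^\star, y^\star)\setminus\{x_1,\dots,x_t\}$. The witnessing $d^\star$ for $h^\star$ in Definition \ref{def:mcnonunifgen} is then $\max(d_{\Gcal_{n^\star}}, n^\star)$ (or $\max(c,n^\star)$ in the bounded case), and this holds simultaneously across all prompts $y^\star$ because the bound depends only on $h^\star$.

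The main subtlety --- and the step I would be most careful about --- is bookkeeping around the prompt argument: in the binary setting the generator simply counts distinct positive examples, whereas here the relevant count in Definitions \ref{def:mcunifgen}--\ref{def:mcnonunifgen} is the number of distinct seen examples in $\operatorname{supp}(h,y^\star)$ for the \emph{active} prompt $y^\star$, and the generator must track this per-prompt while still using a single uniform complexity $d_{\Gcal_n}$ that works for every prompt (which is exactly what Theorem \ref{thm:mcunifgen}'s construction provides, since $\operatorname{PC}$ takes a supremum over prompts). I do not expect any genuinely new idea beyond this; the whole argument is a faithful transport of the binary proof, with $\langle\cdot\rangle_{\Hcal}$ replaced by $\langle\cdot,\cdot\rangle_{\Hcal}$, $\operatorname{C}$ by $\operatorname{PC}$, $\operatorname{UUS}$ by $\operatorname{PUUS}$, Theorem \ref{thm:unifgen} by Theorem \ref{thm:mcunifgen}, and ``distinct positive examples'' by ``distinct examples with the active prompt.'' One could even state a general lemma --- ``non-uniform $X$-generatability $\iff$ non-decreasing countable union of uniformly $X$-generatable classes'' --- abstracting over whether $X$ is the binary or prompted setting, and invoke it twice; but the direct proof above is short enough that I would just write it out.
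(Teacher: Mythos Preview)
Your proposal is correct. The necessity direction $(\mathrm{i})\Rightarrow(\mathrm{ii})$ matches the paper exactly (the paper omits it, saying it ``follows identically to that of Theorem \ref{thm:nonunifgen}''). For sufficiency $(\mathrm{ii})\Rightarrow(\mathrm{i})$, you take the reduction route: invoke Theorem \ref{thm:mcunifgen} to extract prompted uniform generators $\Gcal_n$ with sample complexities $d_{\Gcal_n}$, and then port the two-case argument from the binary sufficiency proof of Theorem \ref{thm:nonunifgen}, with the count $d_t$ now tracking distinct examples carrying the active prompt. The paper explicitly acknowledges this route is available (``Although one can prove the sufficiency direction \dots\ through a reduction to prompted uniform generation (like we did for Theorem \ref{thm:nonunifgen}) \dots'') but instead opts for a direct construction: rather than black-boxing the $\Gcal_n$, it works with the prompted closure operator itself, setting $n_t = \max\{n:\operatorname{PC}(\Hcal_n)<d_t\}$ and playing from $\langle B_t, y_t\rangle_{\Hcal_{n_t}}\setminus\{x_1,\dots,x_t\}$. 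The two arguments are equivalent in substance (your $d_{\Gcal_n}$ can be taken to be $\operatorname{PC}(\Hcal_n)+1$, at which point the index selection rules coincide); the paper's version simply unrolls the uniform generator into its closure-based implementation, trading modularity for explicitness. Your careful remark about per-prompt bookkeeping and the fact that the bound $d^\star$ depends only on $h^\star$ (not on $y^\star$) is exactly the right subtlety to flag, and both proofs handle it the same way.
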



We highlight that we make no assumptions about the size $\Ycal$ in Theorems \ref{thm:mcunifgen} and \ref{thm:mcnonunifgen} apart from its countableness. That is, these theorems hold even when $\Ycal$ is countably infinite. The proofs of Theorem \ref{thm:mcunifgen} and \ref{thm:mcnonunifgen} are very similar to that of Theorem \ref{thm:unifgen} and \ref{thm:nonunifgen}. For the sake of conciseness, we omit the details in the main text, but provide proof sketches in the Appendix. When $|\Ycal| < \infty$,  we can show that all finite classes are prompted uniformly generatable and all countable classes are prompted non-uniformly generatable.  The latter also implies that all countable classes are prompted generatable in the limit. 

\begin{corollary} \label{thm:mcfiniteclasses} Let $\Xcal$ be countable and $\Ycal$ be finite. Let $\Hcal \subseteq \Ycal^{\Xcal}$  be any hypothesis class satisfying the $\operatorname{PUUS}$ property. The following statements are true. 

\begin{itemize}
\item[(i)] $|\Hcal| < \infty$  $\implies$ $\Hcal$ is prompted uniformly generatable. 
\item[(ii)]  $\Hcal$ is countably infinite $\implies$ $\Hcal$ is prompted non-uniformly generatable. 
\item[(iii)] $\Hcal$ is countably infinite $\implies$ $\Hcal$ is prompted  generatable in the limit. 
\end{itemize}
\end{corollary}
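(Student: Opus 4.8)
The plan is to derive all three items from the characterizations already in hand (Theorems~\ref{thm:mcunifgen} and \ref{thm:mcnonunifgen}), so that the only substantive work is bounding the Prompted Closure dimension of a finite class.

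For item (i), I would show directly that $\operatorname{PC}(\Hcal) < \infty$ whenever $|\Hcal| < \infty$ and $|\Ycal| < \infty$. The key observation is that for any distinct $x_1, \dots, x_d \in \Xcal$ and prompt $y \in \Ycal$ with $\Hcal(x_{1:d}, y) \neq \emptyset$, writing $V := \Hcal(x_{1:d}, y) \subseteq \Hcal$, every $h \in V$ satisfies $h(x_i) = y$, so $\{x_1, \dots, x_d\} \subseteq \bigcap_{h \in V}\operatorname{supp}(h,y) = \langle (x_1, \dots, x_d), y\rangle_{\Hcal}$. Hence, if in addition $|\langle (x_1, \dots, x_d), y\rangle_{\Hcal}| < \infty$, then $d \le \bigl|\bigcap_{h \in V}\operatorname{supp}(h,y)\bigr|$. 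I would then set $M := \max\{\, |\bigcap_{h\in V}\operatorname{supp}(h,y)| : V \subseteq \Hcal,\ V \neq \emptyset,\ y \in \Ycal,\ |\bigcap_{h\in V}\operatorname{supp}(h,y)| < \infty \,\}$ (with $M := 0$ if this set is empty). Since $\Hcal$ and $\Ycal$ are both finite, this maximum ranges over a finite collection of finite numbers, so $M < \infty$, and the inequality above gives $\operatorname{PC}(\Hcal) \le M < \infty$. Item (i) is then immediate from Theorem~\ref{thm:mcunifgen}.

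For item (ii), I would fix an enumeration $h_1, h_2, \dots$ of the countably infinite class $\Hcal$ and set $\Hcal_n := \{h_1, \dots, h_n\}$. This is a non-decreasing sequence with $\Hcal = \bigcup_{n \in \naturals} \Hcal_n$, and each $\Hcal_n$ is finite, so the argument in item (i) gives $\operatorname{PC}(\Hcal_n) < \infty$ for every $n \in \naturals$. Theorem~\ref{thm:mcnonunifgen} then yields that $\Hcal$ is prompted non-uniformly generatable. For item (iii), I would observe that prompted non-uniform generatability always implies prompted generatability in the limit, by the same order-of-quantifiers reasoning used for the unprompted notions, so (iii) follows from (ii). Concretely, let $\Gcal$ be a prompted non-uniform generator for $\Hcal$, fix $h \in \Hcal$ with associated threshold $d^{\star}$, a sequence $(x_i, y_i)_{i\in\naturals}$, and a prompt $y^{\star}$ with $\operatorname{supp}(h, y^{\star}) \subseteq \{x_1, x_2, \dots\}$; since $\operatorname{supp}(h, y^{\star})$ is infinite by the PUUS property and is eventually fully enumerated, the quantity $|\{x_1, \dots, x_t\} \cap \operatorname{supp}(h, y^{\star})|$ is non-decreasing in $t$, increases by at most one per step, and tends to infinity, so it equals $d^{\star}$ at some finite time $t^{\star}$; applying the definition of $\Gcal$ at this $t^{\star}$ gives $\Gcal(\cdot) \in \operatorname{supp}(h, y_s)\setminus\{x_1,\dots,x_s\}$ for all $s \ge t^{\star}$ with $y_s = y^{\star}$, which is exactly what Definition~\ref{def:mcgeninlim} demands.

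The main obstacle is the finiteness bound in item (i): one must be certain the relevant supremum of closure sizes is taken over a finite index set, which is precisely where the hypothesis $|\Ycal| < \infty$ enters. With an infinite prompt space, a finite class could a priori exhibit finite closures of unbounded size across different prompts, and then $\operatorname{PC}(\Hcal)$ could be infinite and the reduction through Theorem~\ref{thm:mcunifgen} would break; everything else is bookkeeping around the definitions.
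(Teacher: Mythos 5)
Your proof is correct, and for item (i) it takes a genuinely different route from the paper's. The paper proves (i) by a reduction to the unprompted binary setting: for each $y \in \Ycal$ it forms the binary class $\Hcal_y := \{x \mapsto \mathbbm{1}\{h(x)=y\} : h \in \Hcal\}$, invokes Theorem~\ref{thm:genfinite} (Kleinberg--Mullainathan's result that finite binary classes are uniformly generatable) to obtain a uniform generation sample complexity $d_y$ for each $\Hcal_y$, and then argues by contradiction that $\operatorname{PC}(\Hcal) \le \max_{y \in \Ycal} d_y$, using the observation that $\langle (x_1,\dots,x_d),y\rangle_{\Hcal} = \langle x_1,\dots,x_d\rangle_{\Hcal_y}$. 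Your argument instead bounds $\operatorname{PC}(\Hcal)$ directly: since any distinct $x_1,\dots,x_d$ with nonempty closure at prompt $y$ are themselves contained in that closure, a finite closure forces $d \le |\bigcap_{h\in V}\operatorname{supp}(h,y)|$, and taking the max over the finitely many pairs $(V,y)$ with finite intersection gives an explicit finite bound. Both arguments are correct and both use $|\Ycal|<\infty$ and $|\Hcal|<\infty$ in an essential way; yours is more self-contained (it does not rely on Theorem~\ref{thm:genfinite}) and produces an explicit numerical bound on the Prompted Closure dimension, while the paper's keeps the proof uniform with its overall strategy of reducing prompted statements to binary ones. For (ii) and (iii) you proceed exactly as the paper does (exhaust a countable class by its finite initial segments, then apply Theorem~\ref{thm:mcnonunifgen}, then observe that prompted non-uniform generatability implies prompted generatability in the limit by the usual quantifier rearrangement); the paper leaves the last implication implicit, whereas you spell it out, correctly using PUUS to guarantee that the running intersection count reaches $d^{\star}$ at a finite time.
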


\begin{proof}
(ii) follows from (i) and Theorem \ref{thm:mcnonunifgen} and (iii) follows from (ii), so we only focus on proving (i). Let $\Hcal$ be any finite class. Fix some $y \in \Ycal$ and consider the binary hypothesis class:

$$\Hcal_y := \{ x \mapsto \mathbbm{1}\{h(x) = y\}: h \in \Hcal\}.$$

\noindent Since $\Hcal$ is finite, so is $\Hcal_y.$ Accordingly, by Theorem \ref{thm:genfinite}, we know that there exists $d_y \in \mathbbm{N}$ that witnesses the fact that $\Hcal_y$ is uniform generatable according to Definition \ref{def:unifgen}. We claim that $\operatorname{PC}(\Hcal) = \max_{y \in \Ycal} d_y.$ For the sake of contradiction, suppose this is not the case. That is, $\operatorname{PC}(\Hcal) \geq (\max_{y \in \Ycal} d_y ) + 1.$ Then, by definition, there exists a distinct sequence $x_1, \dots , x_{\operatorname{PC}(\Hcal)}$ and a prompt $y^{\star} \in \Ycal$ such that $|\langle (x_1, \dots, x_{\operatorname{PC}(\Hcal)}), y\rangle_{\Hcal}| \neq \bot$ and $|\langle (x_1, \dots, x_{\operatorname{PC}(\Hcal)}), y^{\star}\rangle_{\Hcal}| < \infty.$  This implies that $d_{y^{\star}} \geq \operatorname{PC}(\Hcal)$ which contradicts the fact that $\operatorname{PC}(\Hcal) = \max_{y \in \Ycal} d_y.$ This completes the proof, as $\operatorname{PC}(\Hcal) = \max_{y \in \Ycal} d_y < \infty$ and thus, by Theorem \ref{thm:mcunifgen}, $\Hcal$ is prompted uniformly generatable. 
\end{proof}

However, quite surprisingly, we find that this is not the case when $|\Ycal| = \infty$ -- there exists a finite class which is not prompted uniformly generatable!

\begin{lemma} \label{lem:infiniteYnotunifgen} Let $\Xcal$ be countable and $\Ycal$ be countably infinite. There exists a finite class $\Hcal \subseteq \Ycal^{\Xcal}$  satisfying the $\operatorname{PUUS}$ property such that $\Hcal$ is not prompted uniformly generatable. 
\end{lemma}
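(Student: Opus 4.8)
The plan is to invoke the characterization in Theorem~\ref{thm:mcunifgen}: it suffices to produce a \emph{finite} hypothesis class $\Hcal \subseteq \Ycal^{\Xcal}$ satisfying the $\operatorname{PUUS}$ property with $\operatorname{PC}(\Hcal) = \infty$. The intuition is that an infinite prompt space hands the adversary enough freedom that, for each target dimension $d$, it can choose a \emph{different} prompt $y_d$ for which two fixed hypotheses agree and are positive on exactly $d$ examples; so a fresh finite ``core'' can be arranged at every scale even though the class itself has only two members.

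Concretely, I would take $\Ycal = \naturals$ (any countably infinite prompt space works after fixing an enumeration) and $\Hcal = \{h_1, h_2\}$. Choose $\Xcal$ to be a disjoint union of ``cells'', $\Xcal = \bigsqcup_{(i,j) \in \naturals^2} C_{i,j}$, where $|C_{i,j}| = \infty$ whenever $i \neq j$ and $|C_{d,d}| = d$ on the diagonal; this $\Xcal$ is countable. Define $h_1$ to map every $x \in C_{i,j}$ to $i$, and $h_2$ to map every $x \in C_{i,j}$ to $j$. Then $\operatorname{supp}(h_1, i) = \bigsqcup_{j} C_{i,j}$ and $\operatorname{supp}(h_2, j) = \bigsqcup_{i} C_{i,j}$ are each infinite — already the off-diagonal cells contribute infinitely many points — so $\Hcal$ satisfies $\operatorname{PUUS}$, and of course $|\Hcal| = 2 < \infty$.

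It then remains to compute the Prompted Closure dimension. Fix $d \in \naturals$ and enumerate the $d$ distinct elements of $C_{d,d}$ as $x_1, \dots, x_d$. Since $h_1(x_i) = h_2(x_i) = d$ for each $i$, we have $\Hcal(x_{1:d}, d) = \{h_1, h_2\}$, hence $\langle x_{1:d}, d \rangle_{\Hcal} = \operatorname{supp}(h_1, d) \cap \operatorname{supp}(h_2, d) = C_{d,d}$, which is nonempty with exactly $d$ elements; in particular $\langle x_{1:d}, d \rangle_{\Hcal} \neq \bot$ and $|\langle x_{1:d}, d \rangle_{\Hcal}| < \infty$. Thus $\operatorname{PC}(\Hcal) \geq d$ for every $d$, i.e. $\operatorname{PC}(\Hcal) = \infty$, and Theorem~\ref{thm:mcunifgen} yields that $\Hcal$ is not prompted uniformly generatable.

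There is essentially no obstacle in this argument; the one point worth flagging is that the off-diagonal cells $C_{i,j}$ (for $i \neq j$) must be taken infinite, since otherwise some support $\operatorname{supp}(h_k, y)$ would become finite and $\operatorname{PUUS}$ would fail. It may also be worth contrasting with the finite-$\Ycal$ case (Corollary~\ref{thm:mcfiniteclasses}), where $\operatorname{PC}(\Hcal)$ is controlled by a maximum over finitely many Closure dimensions $\operatorname{C}(\Hcal_y)$ and hence stays finite whenever $\Hcal$ is finite, explaining why the phenomenon is special to infinite prompt spaces.
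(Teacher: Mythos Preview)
Your argument is correct and follows essentially the same approach as the paper: both build a two-element class $\Hcal=\{h_1,h_2\}$, verify $\operatorname{PUUS}$, and invoke Theorem~\ref{thm:mcunifgen} by exhibiting, for each $d$, a prompt $y=d$ at which $\operatorname{supp}(h_1,d)\cap\operatorname{supp}(h_2,d)$ is a finite set of size $d$. The only difference is cosmetic---the paper realizes the construction arithmetically on $\Xcal=\mathbb{Z}$ using the blocks $A_d=\{\tfrac{d(d-1)}{2}+1,\dots,\tfrac{d(d-1)}{2}+d\}$ together with prime-power tails for the off-diagonal supports, whereas your abstract cell decomposition $\Xcal=\bigsqcup_{i,j} C_{i,j}$ packages the same idea more transparently.
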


\begin{proof} Let $\Xcal = \mathbbm{Z}$ and $\Ycal = \mathbbm{N}$. For every $n \in \mathbbm{N}$, define the set

$$A_n := \Biggl\{\frac{n(n-1)}{2}+1,\dots,\frac{n(n-1)}{2}+n\Biggl\}.$$ Let $\{p_n\}_{n \in \mathbbm{N}}$ be the set of all prime numbers. Consider the hypotheses $h_1: \Xcal \rightarrow \Ycal$ and $h_2:\Xcal \rightarrow \Ycal$ defined as

$$h_1(x) := \begin{cases}
			n, & \text{if $x \in A_n$}\\
            n, & \text{if $x \in \{-p_n, -p^2_n, -p^3_n, \dots\}$}\\
            1, & \text{otherwise}
		 \end{cases},$$

\noindent and 

$$h_2(x) := \begin{cases}
			n, & \text{if $x \in A_n$}\\
            n, & \text{if $x \in \{-p_{n+1}, -p^2_{n+1}, -p^3_{n+1}, \dots\}$}\\
            1, & \text{otherwise}
		 \end{cases}.$$

Let $\Hcal = \{h_1, h_2\}.$ Observe that $\Hcal$ satisfies the PUUS property. We will now show that $\Hcal$ is not prompted uniformly generatable. 

By Theorem \ref{thm:mcunifgen}, it suffices to show that $\operatorname{PC}(\Hcal) = \infty$. In particular, it suffices to show that for every $d \in \naturals \setminus \{1\}$, there exists distinct $x_1, \dots ,x_d$ and a prompt $y \in \Ycal$ such that $|\langle (x_1, \dots x_d), y\rangle| \neq \bot$ and $|\langle (x_1, \dots x_d), y\rangle| < \infty.$ To that end, fix some $d \in \naturals \setminus \{1\}.$ Consider the sequence $x_1, \dots, x_d$ obtained by sorting $A_d$ in increasing order and consider the prompt $y = d.$ Then, we have that $\Hcal(x_{1:d}, y) = \Hcal$ and 

$$\langle x_{1:d}, y \rangle_{\Hcal}  = \bigcap_{h \in \Hcal} \operatorname{supp}(h, d) = A_d,$$

\noindent so that $|\langle x_{1:d}, y \rangle_{\Hcal}| < \infty.$ The proof is complete after noting that $d \in \naturals \setminus \{1\}$ is picked arbitrarily. 
\end{proof}

Moreover, we can use Lemma \ref{lem:infiniteYnotunifgen} and Theorem \ref{thm:mcnonunifgen}  to show the existence of a finite class which is not prompted non-uniformly generatable.

\begin{corollary} \label{lem:mcfiniteclasses} Let $\Xcal$ be countable and $\Ycal$ be countably infinite. There exists a finite class $\Hcal \subseteq \Ycal^{\Xcal}$  satisfying the $\operatorname{PUUS}$ property such that $\Hcal$ is not prompted non-uniformly generatable. 
\end{corollary}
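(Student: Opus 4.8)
The plan is to derive Corollary~\ref{lem:mcfiniteclasses} from Lemma~\ref{lem:infiniteYnotunifgen} together with the characterization of prompted non-uniform generatability in Theorem~\ref{thm:mcnonunifgen}. The key observation is that the class $\Hcal = \{h_1, h_2\}$ constructed in Lemma~\ref{lem:infiniteYnotunifgen} has only \emph{finitely many} subclasses, so any non-decreasing exhausting sequence $\Hcal_1 \subseteq \Hcal_2 \subseteq \cdots$ with $\bigcup_n \Hcal_n = \Hcal$ must have $\Hcal_n = \Hcal$ for all sufficiently large $n$. Hence if $\Hcal$ were prompted non-uniformly generatable, Theorem~\ref{thm:mcnonunifgen} would force $\operatorname{PC}(\Hcal_n) < \infty$ for every $n$, and in particular $\operatorname{PC}(\Hcal) < \infty$ (taking $n$ large enough that $\Hcal_n = \Hcal$). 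But the proof of Lemma~\ref{lem:infiniteYnotunifgen} established $\operatorname{PC}(\Hcal) = \infty$, a contradiction.

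Concretely, I would proceed as follows. First, take the same $\Xcal = \mathbbm{Z}$, $\Ycal = \mathbbm{N}$, and $\Hcal = \{h_1, h_2\}$ as in Lemma~\ref{lem:infiniteYnotunifgen}, and recall from that proof that $\Hcal$ satisfies the $\operatorname{PUUS}$ property and that $\operatorname{PC}(\Hcal) = \infty$ (witnessed, for each $d \geq 2$, by the sequence obtained from sorting $A_d$ together with the prompt $y = d$). Second, suppose toward a contradiction that $\Hcal$ is prompted non-uniformly generatable. By Theorem~\ref{thm:mcnonunifgen}, there is a non-decreasing sequence $\Hcal_1 \subseteq \Hcal_2 \subseteq \cdots$ with $\Hcal = \bigcup_{n \in \naturals} \Hcal_n$ and $\operatorname{PC}(\Hcal_n) < \infty$ for every $n$. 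Third, since $\Hcal$ has exactly two elements, the union $\bigcup_n \Hcal_n = \Hcal$ implies there is some $N \in \naturals$ with $h_1, h_2 \in \Hcal_N$, i.e.\ $\Hcal_N = \Hcal$. Then $\operatorname{PC}(\Hcal) = \operatorname{PC}(\Hcal_N) < \infty$, contradicting $\operatorname{PC}(\Hcal) = \infty$. This completes the proof.

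There is essentially no analytic obstacle here — the entire content is already packaged in Lemma~\ref{lem:infiniteYnotunifgen} (which supplies the finite witness class with infinite Prompted Closure dimension) and Theorem~\ref{thm:mcnonunifgen} (which reduces non-uniform generatability to a union decomposition). The only mild subtlety worth stating explicitly is the finiteness argument: a non-decreasing sequence of \emph{subclasses} of a finite class whose union is the whole class must eventually equal the whole class, so one of the $\Hcal_n$ is $\Hcal$ itself; this is where finiteness of $\Hcal$ (as opposed to merely countability) is used, and it is exactly what makes the contrast with Corollary~\ref{thm:mcfiniteclasses}(ii) sharp. One could alternatively avoid invoking Theorem~\ref{thm:mcnonunifgen} and argue directly that prompted non-uniform generatability of a finite class implies prompted uniform generatability of that class — because over a finite class the per-hypothesis sample complexities $d_{h_1}, d_{h_2}$ can be taken as a single uniform bound $\max(d_{h_1}, d_{h_2})$ — and then apply Lemma~\ref{lem:infiniteYnotunifgen} to reach the contradiction; I would mention this as a remark but present the Theorem~\ref{thm:mcnonunifgen} route as the main argument since it is the shortest.
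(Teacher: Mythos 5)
Your proposal is correct and matches the paper's own proof: both take the two-element class from Lemma~\ref{lem:infiniteYnotunifgen}, invoke Theorem~\ref{thm:mcnonunifgen}, and observe that any non-decreasing exhausting sequence of subclasses of a finite class must eventually equal the whole class, forcing $\operatorname{PC}(\Hcal_i) = \operatorname{PC}(\Hcal) = \infty$ for some $i$. The alternative route you sketch in the final paragraph (finite class plus non-uniform generatability implies uniform generatability) is a valid shortcut but is not the one the paper takes.
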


\begin{proof} Consider the same class $\Hcal = \{h_1, h_2\}$ from the proof of Lemma \ref{lem:infiniteYnotunifgen}. We know that $\operatorname{PC}(\Hcal) = \infty.$ By Theorem \ref{thm:mcnonunifgen}, $\Hcal$ is not prompted non-uniformly generatable if for every non-decreasing sequence of classes $\Hcal_1 \subseteq \Hcal_2 \subseteq \dots$ satisfying $\Hcal =\bigcup_{n\in\naturals} \Hcal_n$, there exists a $i \in \naturals$ such that $\operatorname{PC}(\Hcal_i)= \infty$. Trivially, for every non-decreasing sequence of classes $\Hcal_1 \subseteq \Hcal_2 \subseteq \dots$ satisfying $\Hcal =\bigcup_{n\in\naturals} \Hcal_n$, there must be an index $i \in \naturals$ such that $\Hcal = \Hcal_i.$ Since $\operatorname{PC}(\Hcal) = \infty$, our proof is complete. 
\end{proof}

We leave as an open question whether such a separation exists for prompted generatability in the limit. These results highlight that the behavior of prompted generatability changes significantly when the prompt space is allowed to be unbounded. This sort of phase change is not unique to generation, and has also been observed in the context of multiclass PAC learning. For example, it is well known that uniform convergence is not necessary for multiclass PAC learnability when $|\Ycal| = \infty$ \citep{natarajan1992probably}. This is contrast to when $|\Ycal| < \infty$, where uniform convergence provides a characterization of multiclass PAC learnability \citep{DanielyERMprinciple, daniely2014optimal}. Similar observations have been made in  multiclass online classification \citep{hanneke2023multiclass} and multiclass transductive online classification \citep{hanneke2024multiclass}.

\section{Discussion and Future Directions} \label{sec:diss}
In this work, we reinterpreted the results by \cite{gold1967language} \cite{angluin1979finding}, \cite{angluin1980inductive}, and \cite{kleinberg2024language} through the lens of learning theory. By doing so, we are able to formalize three paradigms in learning theory: generation in the limit, non-uniform generation, and uniform generation. By abstracting the problem of generation to an arbitrary example space and binary hypothesis classes, we are able to study the fundamental nature of generation beyond language modeling and compare generation with prediction. We end by highlighting  several important directions for future work.

\vspace{5pt}

\noindent \textbf{Characterizing Generatability in the Limit.} \cite{kleinberg2024language} proved that every countable hypothesis class is generatable in the limit. We gave an alternate sufficiency condition which showed the existence of many uncountably infinite classes that are generatable in the limit. However, it is unclear (and unlikely) that our sufficiency condition, in conjunction with countableness, provides a characterization of generatability in the limit.  This motivates our first open question. 

\begin{question} What characterizes generatability in the limit?
\end{question}

\noindent Ideally, a characterization of generatability in the limit can be written neatly in set-theoretic language like Angluin's characterization of identifiability in the limit.  

\vspace{5pt}

\noindent \textbf{Generatability in the Limit under Finite Unions.} In Section \ref{sec:genvpred}, we  showed that uniform and non-uniform generatability are not closed under finite unions. However, we were able to only show that generatability in the limit is not closed under countable unions. This motivates our second open question. 

\begin{question} \label{ques:geninlim} Is generatability in the limit closed under finite unions? 
\end{question}

Resolving Question \ref{ques:geninlim} is important for two reasons: (1) it may lead to a complete characterization of generatability in the limit and (2) it may provide insight on how to optimally combine generators. Recall that Theorem \ref{thm:weaksuff} shows that the finite union of uniformly generatable classes are generatable in the limit. Thus, as a first step towards resolving Question \ref{ques:geninlim}, it might be helpful to resolve the following open question.

\begin{question} \label{ques:nonunif2geninlim} Is the finite union of non-uniformly generatable classes generatable in the limit?
\end{question}

\noindent \textbf{Characterizing Prompted Generatability in the Limit.} In Section \ref{sec:promptedgen}, we provided complete characterizations of prompted uniform and non-uniform generation. When $|\Ycal| < \infty$, we showed that all countable classes are prompted generatable in the limit. However, we left open the complete characterization of prompted generatability in the limit, which motivates our first question.  

\begin{question} \label{ques:finiteYprompGIL} When $|\Ycal| < \infty$, what characterizes prompted generatability in the limit?
\end{question}

When $|\Ycal| = \infty$, we show that there are finite classes which are not even prompted non-uniformly generatable. This begs the question of whether all countable classes continue to be prompted generatable in the limit when $|\Ycal| = \infty.$

\begin{question} \label{ques:finiteYprompGIL} When $|\Ycal| = \infty$, are all countable classes prompted generatable in the limit?
\end{question}

Unlike the case for prompted uniform and non-uniform generatability, we conjecture that all countable classes are still prompted generatable in the limit when $|\Ycal| = \infty$. Our  claim is due to the positive result by \cite{kleinberg2024language}, who show that in their model of prompting, which can be stronger than ours (see Section \ref{sec:pgkm}), all countable classes are still prompted generatable in the limit.

\bibliographystyle{abbrvnat}
\bibliography{references}

\appendix

\section{Proof of Theorem \ref{thm:genvpred}} \label{app:genvpred}

\subsection{Proof of (i)}

Let $\Xcal = \mathbbm{Z}$ and consider the hypothesis class $\Hcal = \{x \mapsto \mathbbm{1}\{x \in A \text{ or } x \leq 0\}: A \subset \mathbbm{N}, |A| < \infty\}.$ First, note that $\Hcal$ satisfies the $\operatorname{UUS}$ since for every $x \in \mathbbm{Z}_{\leq 0}$ and every $h \in \Hcal$, we have that $h(x) = 1$. Second, its not too hard to see that $\operatorname{VC}(\Hcal) = \infty$ since it can shatter arbitrary length sequence of examples of the form $x_1 = 1, x_2 = 2, \dots, x_d = d$. Finally, observe that $\operatorname{C}(\Hcal) = 0$ since $\langle x \rangle_{\Hcal} = \mathbbm{Z}_{\leq 0}$ for all $x \in \Xcal.$

Such a separation also occurs for the more natural hypothesis class of convex polygons defined over the rationals.  This result is also noted by \cite{kleinberg2024language} in Section 3.2, but we summarize it below. 

 Let $\Xcal = \mathbbm{Q}^2$ and $\Hcal \subseteq \{0, 1\}^{\Xcal}$ be the class of all convex polygons over $\Xcal$. That is $\Hcal := \{h \in \{0, 1\}^{\Xcal}: \operatorname{supp}(h) \text{ is a convex polygon.} \}$.  It is well known that $\operatorname{VC}(\Hcal) = \infty$.  Moreover, its not too hard to see that $\Hcal$ satisfies the UUS property. We now show that $\operatorname{C}(\Hcal) < \infty$. In fact, we can show that $\operatorname{C}(\Hcal) = 0$. Indeed, pick any $x \in \Xcal$, and note that the set $\langle x \rangle_{\Hcal}$  is a convex polygon since $\operatorname{supp}(h)$ is a convex polygon for all $h \in \Hcal$. Accordingly, we have that $\lvert \langle x \rangle_{\Hcal}\lvert = \infty$, completing the proof.    

\subsection{Proof of (ii)}

Let $\Xcal = \mathbbm{Z}$ and consider the same class $\Hcal$ from Lemma \ref{lem:nonunifvsunifgen}. Recall, that $\operatorname{C}(\Hcal) = \infty.$ Thus, it suffices to  show that $\operatorname{L}(\Hcal) = 2.$  First, we prove that $\operatorname{L}(\Hcal) \geq 2$ by showing that $\operatorname{VC}(\Hcal) \geq 2$. Indeed, it is not hard to verify that any $x_1, x_2$ such that $x_1 \in \mathbbm{Z}_{<0}$  and $x_2 \in  \mathbbm{Z}_{>0}$ can be shattered by $\Hcal$, completing this direction.

In fact, we can show that this is the only way that two examples can be shattered.  Pick any two examples $x_1, x_2 \in \mathbbm{Z}$. None of $x_1, x_2$ can be $0$, as otherwise all hypothesis output $0$ on this example. Thus, assume that $x_1, x_2$ and are non-zero. Our proof will now be in cases. 

Suppose that $x_1, x_2 > 0$. For every $d \in \mathbbm{N}$, define $A_d := \{\frac{d(d-1)}{2} + 1, , \dots, \frac{d(d-1)}{2} + d\}$. Note that $A_1, A_2, \dots$ are all pairwise disjoint. If there exists a $d \in \mathbbm{N}$ such that $x_1, x_2$ lie in $A_d$, then the labeling $(1, 0)$ is not possible. If $x_1, x_2$ lie in different sets, then the labeling $(1, 1)$ is not possible. Thus, when $x_1, x_2 > 0$, they cannot be shattered by $\Hcal$. 

Suppose, $x_1, x_2< 0.$ If they are both even, then one cannot get the labeling $(1, 0)$. Likewise for odd. If one of them is even, say $x_1$ without loss of generality, then one cannot get the labeling $(1, 0).$ Likewise for odd. Thus, whenever $x_1, x_2 < 0$, they cannot be shattered by $\Hcal$.

Thus, the only way $x_1, x_2$ can be shattered is if one of them is strictly negative, but the other is strictly positive. 

Using this observation, we will now show that $\operatorname{L}(\Hcal) \leq 2.$  Consider any Littlestone tree $\Tcal$ of depth $3$. We will show that $\Tcal$ cannot be shattered by $\Hcal$. There are two cases to consider. 

Suppose the root node is labeled by a strictly negative integer. Then, using the above analysis it must be the case that the nodes on the second level must be labeled by strictly positive integers in order for $\Tcal$ to be shattered. Now, pick any root-to-leaf path prefix $b = (1, 1)$ down $\Tcal$. Let $(x_1, 1), (x_2, 1) $ denote the sequence of labeled examples obtained by traversing down $\Tcal$ according to the prefix $b$. By our observation above, we know that $x_1 < 0$ and $x_2 > 0$. However, there can be exactly one hypothesis $h \in \Hcal$ that is consistent with $(x_1, 1)$ and $(x_2, 1)$. Thus, any completion of the path $b$ cannot be shattered by $\Hcal$. 

Suppose the root node is labeled by a strictly positive integer. Then,  using the above analysis, it must be the case that the nodes on the second level must be labeled by strictly negative integers in order for $\Tcal$ to be shattered. Now, pick the root-to-leaf path prefix $b = (1, 1)$ down $\Tcal$. Let $(x_1, 1), (x_2, 1) $ denote the sequence of labeled examples obtained by traversing down $\Tcal$ according to the prefix $b$. By our observation above, we know that $x_1 > 0$ and $x_2 < 0$. However, there can be exactly one hypothesis $h \in \Hcal$ that can be consistent with $(x_1, 1)$ and $(x_2, 1)$. Thus, any completion of the path $b$ cannot be shattered by $\Hcal$. Since $\Tcal$ was arbitrary, it must be the case that $\operatorname{L}(\Hcal) \leq 2.$

\subsection{Proofs of (iii)-(vi)}

To see (iii), observe that the class $\Hcal = \{x \mapsto \mathbbm{1}\{x = a \text{ or } x \leq 0\}: a \in \mathbbm{N}\}$ is trivially online learnable and uniformly generatable. To see (iv), observe that the class $\Hcal_{\text{thresh}} \cup \Hcal^e \cup \Hcal^o$ is PAC learnable but neither uniformly generatable nor online learnable, where $\Hcal_{\text{thresh}} := \{x \mapsto \mathbbm{1}\{x \geq a\}: a \in \naturals\}$ and $\Hcal^e$,$\Hcal^o$ are defined in Equations \ref{eq:He} and \ref{eq:Ho}. To see (v), observe that the class $\Hcal_{\text{thresh}}$ is PAC learnable, uniformly generatable, but not online learnable. Finally, to see (vi), consider the class $\Hcal = \{x \mapsto \mathbbm{1}\{x \notin A\}: A \subset \naturals, |A| < \infty\}.$

\section{Proofs for Prompted Generatability}
\subsection{Proof sketches of Theorem \ref{thm:mcunifgen}}

\begin{proof} (sketch of sufficiency) Let $\Hcal \subseteq \Ycal^{\Xcal}$ be such that it satisfies the PUUS property and $\operatorname{PC}(\Hcal) < \infty$. Consider the following prompted generator $\Gcal$. For any finite sequence of tuples $(x_1, p_1, y_1), \dots, (x_t, p_t, y_t) \in (\Xcal, \Ycal, \Ycal)^{t}$, $\Gcal$ extracts  $B_t := \{x_i : p_i = y_t\}$,  the subset of examples where $p_i$ is $y_t$. Then, $\Gcal$ checks whether $|B_t| \geq \operatorname{PC}(\Hcal) + 1$. If so, $\Gcal$ computes $\langle B_t, y_t \rangle_{\Hcal}$ and plays $\hat{x}_t \in \langle B_t, y_t \rangle_{\Hcal} \setminus \{x_1, \dots, x_t\}.$ Otherwise, $\Gcal$, plays an arbitrary $\hat{x}_t \in \Xcal.$ We claim that $\Gcal$ is a prompted uniform generator for $\Hcal$. To see this, let $h\in \Hcal$ and $(x_1, h(x_1), y_1), (x_2, h(x_2), y_2), \dots$ be the hypothesis and sequence of tuples chosen by the adversary. Fix an arbitrary reference label $y^{\star} \in \Ycal$. It suffices to show that if there exists a $t^{\star} \in \mathbbm{N}$ such that 

$$|\{x_1, \dots, x_{t^{\star}}\} \cap \operatorname{supp}(h, y^{\star})| = \operatorname{PC}(\Hcal) + 1,$$

\noindent then 

$$\Gcal((x_1, h(x_1), y_1), \dots, (x_s,  h(x_s)), y_s) \in  \operatorname{supp}(h, y_s) \setminus \{x_1, \dots, x_s\}$$

\noindent for all $s \geq t^{\star}$ where $y_s = y^{\star}.$ To that end, suppose there exists a $t^{\star} \in \mathbbm{N}$ such that 

$$|\{x_1, \dots, x_{t^{\star}}\} \cap \operatorname{supp}(h, y^{\star})| = \operatorname{PC}(\Hcal) + 1.$$ Fix an arbitrary $s \geq t^{\star}$ such that $y_s = y^{\star}.$ By construction, we have that $|B_s| \geq \operatorname{PC}(\Hcal) + 1.$ Accordingly, $\Gcal$ computes $\langle B_s, y_s \rangle_{\Hcal} = \langle B_s, y^{\star} \rangle_{\Hcal}$. By definition of the Prompted Closure dimension, it must be the case that $|\langle B_s, y_s \rangle_{\Hcal}| = \infty$. Accordingly, $\langle B_s, y_s \rangle_{\Hcal} \setminus \{x_1, \dots, x_s\} \neq \emptyset$, and we have that $\hat{x}_s \in  \operatorname{supp}(h, y_s) \setminus \{x_1, \dots, x_s\}$, completing the proof. 
\end{proof}

\begin{proof} (sketch of necessity) Let $\Hcal \subseteq \Ycal^{\Xcal}$ be such that it satisfies the MUUS property and $\operatorname{PC}(\Hcal) = \infty$. Let $\Gcal$ be any prompted generator. It suffices to show that for arbitrarily large $d \in \mathbbm{N}$, there exists a sequence of distinct examples $x_1, \dots, x_d$, a prompt $y \in \Ycal$, and a hypothesis $h \in \Hcal$ such that $h(x_i) = y$ for all $i \in [d]$, and 

$$\Gcal((x_1, h(x_1), y), \dots, (x_d,  h(x_d)), y) \notin  \operatorname{supp}(h, y) \setminus \{x_1, \dots, x_d\}.$$

By definition of the Prompted Closure dimension, for every $n \in \mathbbm{N}$ there exists distinct $x_1, \dots, x_n \in \Xcal$ and a label $y^{\star} \in \Ycal$ such that $|\langle (x_1, \dots, x_n), y^{\star}\rangle_{\Hcal}| \neq \bot$ and $|\langle (x_1, \dots, x_n), y^{\star}\rangle_{\Hcal}| < \infty.$ Thus, for some $d \geq n$, there exists a distinct $x_1, \dots, x_d \in \Xcal$ such that $|\langle (x_1, \dots, x_d), y^{\star}\rangle_{\Hcal}| \neq \bot$ and $|\langle (x_1, \dots, x_d), y^{\star}\rangle_{\Hcal} \setminus \{x_1, \dots, x_d\}| = 0.$ Accordingly, for every $x \in \Xcal \setminus \{x_1, \dots, x_d\}$, there exists a $h \in \Hcal((x_1, \dots, x_d), y^{\star})$ such that $x \notin \operatorname{supp}(h, y^{\star})\setminus \{x_1, \dots, x_d\}.$ Let $\hat{x}_d = \Gcal((x_1, h(x_1), y^{\star}), \dots, (x_d,  h(x_d), y^{\star}))$, and suppose without loss of generality that $\hat{x}_d \notin \{x_1, \dots, x_d\}$. Then, by the previous observation, there exists a $h^{\star} \in \Hcal((x_1, \dots, x_d), y^{\star})$ for which $\hat{x}_d \notin \operatorname{supp}(h^{\star}, y^{\star}).$ Thus, we have shown that there exists a sequence of distinct examples $x_1, \dots, x_d$, a label $y \in \Ycal$, and a hypothesis $h \in \Hcal$ such that $h(x_i) = y$ for all $i \in [d]$, and 

$$\Gcal((x_1, h(x_1), y), \dots, (x_d,  h(x_d), y)) \notin  \operatorname{supp}(h, y) \setminus \{x_1, \dots, x_d\}.$$ The proof is complete after noting that $n$ was arbitrary, and thus this holds for all $n \in \mathbbm{N}.$ 
\end{proof}

\subsection{Proof sketches for Theorem \ref{thm:mcnonunifgen}}

The proof of the necessity direction follows identically to that of Theorem \ref{thm:nonunifgen}, so we omit that proof and only prove the sufficiency direction. Although one can prove the sufficiency direction in Theorem \ref{thm:mcnonunifgen} through a reduction to prompted uniform generation (like we did for Theorem \ref{thm:nonunifgen}), we provide a more direct proof using the prompted closure dimension to avoid repetition. 

\begin{proof} (sketch of sufficiency)
Suppose $\Hcal \subseteq \Ycal^{\Xcal}$ is a hypothesis class satisfying the $\operatorname{PUUS}$ property such that there exists a non-decreasing sequence of classes $\Hcal_1 \subseteq \Hcal_2 \subseteq \dots$ with $\Hcal =\bigcup_{i\in\naturals} \Hcal_i$ and $\operatorname{PC}( \Hcal_n)<\infty$ for every $n\in\naturals$.  First note that $\operatorname{PC}(\Hcal_{n})$ is monotonic increasing in $n$. We consider two cases: $\lim_{n\to\infty}\operatorname{PC}(\Hcal_{n})=\infty$ and $\lim_{n\to\infty}\operatorname{PC}(\Hcal_{n})<\infty$.

In the first case, consider the following generator $\Gcal$. Fix $t \in \mathbbm{N}$ and consider any finite sequence of tuples $(x_1, p_1, y_1), \dots, (x_t, p_t, y_t) \in (\Xcal, \Ycal, \Ycal)^{t}$. $\Gcal$ extracts $B_t:=\{x_i:p_i=y_t\}$. Let $d_t := |B_t|$ be the number of unique examples whose label is $y_t$. $\Gcal$ first computes 

$$n_t = \max\{n \in \mathbbm{N} : \operatorname{PC}(\Hcal_{n}) < d_t\}\cup\{0\}.$$

If $n_t = 0$, meaning $\operatorname{PC}(\Hcal_{1})\ge d_t$,  $\Gcal$ plays any $\hat{x}_t \in \Xcal$. If $n_t >0$ but $|\Hcal_{n_t}(B_t,y_t)|=0$, $\Gcal$ also plays any $\hat{x}_t\in\Xcal$.  If $n_t > 0$ and $|\Hcal_{n_t}(B_t,y_t)| \geq 1$, $\Gcal$ plays any 


$$\hat{x}_t \in \langle B_t,y_t\rangle_{\Hcal_{n_t}} \setminus\{x_1,\dots,x_t\}.$$

We now prove that such a $\Gcal$ is a non-uniform generator for $\Hcal$. To that end, 
let $h^{\star}$ be the hypothesis chosen by the adversary and suppose that $h^{\star}$ belongs to $\Hcal_{n^{\star}}$. Let $d^{\star} := \operatorname{PC}(\Hcal_{n^{\star}}).$ We show that for a label sequence $(x_1,h^{\star}(x_1),y_1),\dots,(x_t,h^{\star}(x_t),y_t)$, such that $d_t := |\{x_i:h^{\star}(x_i)~=~y_t\}| \geq d^{\star} + 1$,  we have
$$\Gcal((x_1,h^{\star}(x_1),y_1),\dots,(x_t,h^{\star}(x_t),y_t)) \in \operatorname{supp}(h^{\star},y_t)\setminus\{x_1,\dots,x_t\}.$$
By definition, $\Gcal$ first computes

$$n_t = \max\{n \in \mathbbm{N} : \operatorname{PC}(\Hcal_{n}) < d_t\} \cup \{0\}.$$

Note that $n_t\ge n^{\star}$ since $\operatorname{PC}(\Hcal_{n^{\star}}) = d^{\star} < d^{\star} + 1 \leq d_t$. Thus, $|\Hcal_{n_t}(B_t,y_t)| \geq 1$ since $h^{\star} \in \Hcal_{n_t}$, where $B_t=\{x_i:h^{\star}(x_i)=y_t\}$. Accordingly, by construction of $\Gcal$, we have that it computes


$$V_t := \langle B_t,y_t\rangle_{\Hcal_{n_t}},$$
and plays any $\hat{x}_t\in V_t \setminus\{x_1,\dots,x_t\}$. The proof is complete by noting that $h^{\star} \in \Hcal_{n_t}$ and $d_t \geq \operatorname{PC}(\Hcal_{n_t}) + 1$ which gives that $|V_t|=\infty$ and $V_t \subseteq \operatorname{supp}(h^{\star}).$

In the second case, suppose $\lim_{n\to\infty}\operatorname{PC}(\Hcal_{n}) := c<\infty$. Consider the following generator~$\Gcal$. Fix $t \in \mathbbm{N}$ and consider any finite sequence of tuples $(x_1, p_1, y_1), \dots, (x_t, p_t, y_t) \in (\Xcal, \Ycal, \Ycal)^{t}$, $\Gcal$ extracts $B_t=\{x_i:p_i=y_t\}$. Let $d_t := |B_t|$ be the number of unique examples whose label is $y_t$. If $d_t<c$ or $|\Hcal_{d_t}(B_t,y_t)|=0$, $\Gcal$ plays any $\hat{x}_t\in\Xcal$. Otherwise, if $d_t \ge c$ and $|\Hcal_{d_t}(B_t,y_t)|>0$, $\Gcal$ plays any 


$$\hat{x}_t \in \langle B_t, y_t\rangle_{\Hcal_{d_t}} \setminus\{x_1,\dots,x_t\}.$$

Let $h^{\star}$ be the hypothesis chosen by the adversary and suppose $h^{\star}$ belongs to $\Hcal_{n^{\star}}$. We show that for every labeled sequence $(x_1,h^{\star}(x_1),y_1),\dots,(x_t,h^{\star}(x_t),y_t)$ such that $d_t :=|\{x_i:h^{\star}(x_i)~=~y_t\}|> \max(c,n^{\star})$,  we have 
$$\Gcal((x_1,h^{\star}(x_1),y_1),\dots,(x_t,h^{\star}(x_t),y_t)) \in \operatorname{supp}(h^{\star},y_t)\setminus\{x_1,\dots,x_t\}. $$
Because $d_t \geq n^{\star}$, we have that $h^{\star} \in \Hcal_{d_t}$. Therefore, by construction, $\Gcal$ computes 


$$V_t := \langle \{x_i:h^{\star}(x_i)=y_t\},y_t\rangle_{\Hcal_{[d_t]}},$$
and plays any $\hat{x}_t\in V_t\setminus\{x_1,\dots,x_t\}$. The proof is complete after noting that $|V_t|=\infty$ and $V_t\subseteq \operatorname{supp}(h^{\star},y_t)$ using the fact that $d_t>\max(c,n^{\star}).$
\end{proof}

\section{Weaker Sufficiency Conditions for Generatability in the Limit} \label{app:weaksuff}

In this section, we prove a weaker sufficiency condition than the one in Theorem \ref{thm:geninlim} for generatability in the limit. Before we present the main result, we define a new property of a class termed the Eventually Unbounded Closure property. 

\begin{definition}[Eventually Unbounded Closure] \label{def:uuc} A class $\Hcal \subseteq \{0, 1\}^\Xcal$ has the \emph{Eventually Unbounded Closure (EUC)} property if for every $h \in \Hcal$ and any enumeration of its support $x_1, x_2, \dots$, there exists a $t \in \naturals$ such that $|\langle x_1, x_2, \dots, x_t \rangle_{\Hcal}| = \infty$
\end{definition}

Because the closure with respect to a class $\Hcal$ does not depend on any one particular hypothesis, Definition \ref{def:uuc} is really only a stream dependent property. That is, an equivalent representation of Definition \ref{def:uuc} is as follows -- $\Hcal$ satisfies the EUC property if and only if for every sequence of examples $x_1, x_2, \cdots$ there exists a $t\in \naturals$ such that $|\langle x_1, \dots, x_t\rangle_{\Hcal}| = \infty$ or $\langle x_1, \dots, x_t\rangle_{\Hcal} = \bot.$ 


The EUC property is sufficient for generation in the limit. Indeed, just consider the generator that plays arbitrarily until the closure is unbounded, after which it only plays from this infinite set.  Since the EUC property guarantees that the closure will become infinite in finite time, this is a valid generator. Moreover, note that such a generator is eventually auto regressive -- once the closure is infinite in size, the generator no longer needs to observe positive examples to generate new, unseen examples in the future. One might be tempted to think that the EUC property is also necessary for generatability in the limit. However, the following lemma shows that this is not the case. For an infinite bit string $b \in \{0, 1\}^{\mathbbm{N}}$, let $|b|$ denote the number of $1$'s. 

\begin{lemma}\label{lem:stillnotnec} Let $\Xcal$ be countable. There exists a class $\Hcal \subseteq \{0, 1\}^{\Xcal}$ such that $\Hcal$ satisfies the \emph{UUS} property, is non-uniformly generatable, but does not satisfy the \emph{EUC} property. 
\end{lemma}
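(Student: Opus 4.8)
The goal is to exhibit a countable class $\Hcal$ satisfying UUS that is non-uniformly generatable but violates EUC, i.e.\ there is some enumeration $x_1, x_2, \dots$ of the support of some $h \in \Hcal$ for which the closure $\langle x_1, \dots, x_t\rangle_{\Hcal}$ remains finite (and not $\bot$) for every $t$. The natural way to force the closure to stay finite forever is to make the version space ``shrink slowly'': after observing the first $t$ points we want there to still be hypotheses in $\Hcal(x_{1:t})$ whose supports intersect in only a finite set, yet the stream keeps revealing new points so the closure is never empty and never becomes infinite.

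\textbf{Construction.} I would take $\Xcal = \mathbbm{N}$ (or $\mathbbm{Z}$ if a ``safe zone'' is convenient) and build $\Hcal$ as a union of a ``bad'' chain of hypotheses whose supports are nested finite-core extensions, plus enough extra hypotheses to restore UUS and non-uniform generatability. Concretely, reserve a distinguished infinite set $Z \subseteq \Xcal$ (say the negatives, or evens) that lies in the support of every hypothesis, guaranteeing UUS. Then on the remaining coordinates place, for each $n$, a hypothesis $h_n$ whose support (outside $Z$) is a finite ``prefix block'' $\{x_1, \dots, x_{k_n}\}$ together with one infinite tail, arranged so that $\operatorname{supp}(h_n) \cap \operatorname{supp}(h_m)$ restricted to $\Xcal \setminus Z$ is exactly the common finite prefix. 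If I instead want $Z$ \emph{not} to be forced into the closure, I make $Z$ itself only appear in \emph{some} hypotheses, so that $\langle x_1, \dots, x_t \rangle_{\Hcal}$ never contains $Z$ — this is the point of having at least two hypotheses that agree on the revealed prefix but disagree on all of $Z$, forcing the intersection to be the finite prefix. The enumeration witnessing the failure of EUC is then the one that reveals the prefix blocks in order; at each finite time $t$ the closure is the finite prefix seen so far, which is nonempty (so not $\bot$) and finite. The likely concrete realization is essentially the class from Lemma~\ref{lem:notnonunifgen} or Lemma~\ref{lem:nonunifclos} augmented so that it remains countable and non-uniformly generatable: a family like $\{x \mapsto \mathbbm{1}\{x \in A \text{ or } x \in S\} : A \in 2^{\mathbbm{N}}\}$ for a fixed infinite $S$, intersected or unioned with a sparse chain, so that the closure of any single point (or small set) is finite but eventually the generator can commit to $S$.

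\textbf{Verifying the three properties.} UUS is immediate from the reserved infinite set inside every support. For non-uniform generatability I would invoke Corollary~\ref{cor:countnonunif}: since the construction is countable, it is automatically non-uniformly generatable — this is the cheapest route and avoids building an explicit non-uniform generator. So the only real content is (a) checking countability of $\Hcal$, and (b) exhibiting the stream on which the closure stays finite-and-nonempty for all $t$. For (b) I would carefully pick $h^\star \in \Hcal$ and an enumeration of $\operatorname{supp}(h^\star)$ that first walks through an infinite nested sequence of finite prefixes, verifying that for each $t$ there are (at least) two hypotheses in $\Hcal(x_{1:t})$ whose supports meet in exactly $\{x_1, \dots, x_t\}$ (or some fixed finite set), hence $\langle x_{1:t}\rangle_{\Hcal}$ is finite, and that $h^\star$ keeps feeding new examples so it is never $\bot$ and never stabilizes to an infinite set.

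\textbf{Main obstacle.} The delicate part is engineering the supports so that, simultaneously, (i) the closure of every finite prefix along the chosen stream is finite — this needs ``spreading'' hypotheses that share the prefix but diverge everywhere else, including on the reserved infinite set, so that the reserved set is \emph{not} swept into the closure prematurely — while (ii) the class is still countable and (iii) still satisfies UUS (each individual support must be infinite). Reconciling (i) with (iii) is the tension: each hypothesis needs an infinite support, but the pairwise intersections along the stream must stay finite, so the infinite tails of different hypotheses must be placed on disjoint coordinate sets. Getting a clean, fully explicit family where one can \emph{verify} $\langle x_{1:t}\rangle_{\Hcal}$ for every $t$ — rather than just argue heuristically — is where the proof's real work lies; the rest (UUS, countability, non-uniform generatability via Corollary~\ref{cor:countnonunif}) is routine.
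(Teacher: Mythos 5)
Your plan correctly identifies the core tension — each hypothesis must have infinite support (UUS), yet no infinite set may be forced into every closure along the witnessing stream — but you never resolve it, and the one concrete family you float does not work. The class $\{x \mapsto \mathbbm{1}\{x \in A \text{ or } x \in S\}: A \in 2^{\mathbbm{N}}\}$ with a fixed infinite $S$ has $S \subseteq \operatorname{supp}(h)$ for every $h$, hence $S \subseteq \langle x_1,\dots,x_t\rangle_{\Hcal}$ for every nonempty prefix, so every closure is already infinite and EUC holds trivially (indeed $\operatorname{C}(\Hcal)=0$, exactly as in Lemma~\ref{lem:uncountunifgen}). Your proposed fix — put $Z$ in only some supports — destroys UUS for the remaining hypotheses unless you give them some other infinite support, at which point you are back to the same problem and nothing concrete has been exhibited. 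The "augment with a sparse chain" step is where the entire content of the lemma lives, and it is left unfilled. You acknowledge this yourself ("where the proof's real work lies"), which is honest, but it means the proposal is a strategy sketch rather than a proof.

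The paper closes the gap with a construction that dissolves the tension entirely rather than reconciling a common infinite set with finite closures. Take $\Xcal = \mathbbm{N}$, let $p_1 < p_2 < \dots$ enumerate the primes, and for each bitstring $b \in \{0,1\}^{\mathbbm{N}}$ with $|b|<\infty$ set $\operatorname{supp}(h_b) = \{p_n^{1+\sum_{i=1}^{n} b_i}\}_{n\in\mathbbm{N}}$. Every support is infinite (one prime power per index $n$), so UUS holds, and the class is countable, so non-uniform generatability follows from Corollary~\ref{cor:countnonunif} exactly as you intended. But there is no infinite set common to all supports: different hypotheses pick \emph{different} powers of $p_n$ for large $n$. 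Taking $h^\star = h_{0^\infty}$ and the enumeration $p_1, p_2, p_3, \dots$, the version space at time $t$ pins down $b_1 = \dots = b_t = 0$ but leaves all later bits free, so no prime power beyond $p_t$ is common to every consistent hypothesis and $\langle p_1,\dots,p_t\rangle_{\Hcal} = \{p_1,\dots,p_t\}$ is finite (and nonempty) for every $t$. That is the idea you were reaching for — put the "infinite tails" of distinct hypotheses on nearly disjoint coordinate sets — made precise.
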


\begin{proof} Let $\Xcal = \mathbbm{N}$ and $\{p_n\}_{n \in \naturals}$ be the sequence of prime numbers. Consider the class $\Hcal := \{h_b: b \in \{0, 1\}^{\naturals}, |b| < \infty\}$ where $h_b$ is defined such that $\operatorname{supp}(h_b) := \{p^{1 + \sum_{i=1}^n b_i}_n\}_{n \in \naturals}.$ Note that $\Hcal$ satisfies the UUS property. Moreover, $\Hcal$ is countable since the collection of all countably infinite bit strings with finite size is countable. Thus, by Corollary \ref{cor:countnonunif}, $\Hcal$ is non-uniformly generatable. Finally, to see that $\Hcal$ does not satisfy the EUC property, observe that for every finite sequence of prime powers, its closure can only contain this sequence itself. To see why, note that for any prime not in the sequence and any power, one can always construct a hypothesis which contains the finite sequence, but not that prime power. 
\end{proof}

 That said, we can use the EUC property to weaken our sufficiency condition in Theorem \ref{thm:geninlim} by replacing finite Closure dimension with the EUC property.  

\begin{theorem} \label{thm:weaksuff}  Let $\Xcal$ be countable and $\Hcal \subseteq \{0, 1\}^{\Xcal}$ be any class satisfying the \emph{UUS} property. If there exists a finite sequence of classes $\Hcal_1, \Hcal_2, \dots, \Hcal_n$, all satisfying the \emph{EUC} property, such that $\Hcal = \bigcup_{i = 1}^n \Hcal_i$, then $\Hcal$ is generatable in the limit. 
\end{theorem}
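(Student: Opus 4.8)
The plan is to mimic the proof of Theorem~\ref{thm:geninlim}, replacing the role of the finite Closure dimension with the EUC property. Recall that in Theorem~\ref{thm:geninlim} the finiteness of $\operatorname{C}(\Hcal_i)$ was used only to guarantee that after seeing $c+1$ distinct positive examples, the closure $\langle x_{1:t}\rangle_{\Hcal_i}$ is infinite (so the generator has an infinite reservoir to play from). Under the EUC property we no longer get a \emph{uniform} time at which all closures become infinite, but we do get, \emph{for the particular stream being played}, a finite (stream-dependent) time $t_i^{\star}$ at which $|\langle x_{1:t_i^{\star}}\rangle_{\Hcal_i}| = \infty$ or $\langle x_{1:t_i^{\star}}\rangle_{\Hcal_i} = \bot$ — and that is all that is needed, since generation \emph{in the limit} already allows the good time to depend on the stream and hypothesis.

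First I would describe the generator $\Gcal$. On each round $t$, given the revealed examples $x_1,\dots,x_t$, let $S_t \subseteq [n]$ be the set of indices $i$ with $\langle x_{1:t}\rangle_{\Hcal_i} \neq \bot$ \emph{and} $|\langle x_{1:t}\rangle_{\Hcal_i}| = \infty$; note $S_t$ is non-decreasing in $t$ because once a closure over a fixed class is infinite for a prefix it stays a superset situation — more carefully, $\langle x_{1:t}\rangle_{\Hcal_i} \supseteq \langle x_{1:t+1}\rangle_{\Hcal_i}$ so infiniteness need not persist; to handle this cleanly I would instead have $\Gcal$ \emph{freeze} a choice: let $i_t$ be the smallest index $i \in [n]$ such that at \emph{some} round $s \le t$ we had $|\langle x_{1:s}\rangle_{\Hcal_i}| = \infty$, and then among such $i$ take the one whose ``frozen'' infinite closure set $C_i := \langle x_{1:s_i}\rangle_{\Hcal_i}$ (recorded at the first such witnessing round $s_i$) is ``most consistent'' with the stream so far, measured exactly as in Theorem~\ref{thm:geninlim} via $n_t^i := \max\{m : \{z^{(i)}_1,\dots,z^{(i)}_m\}\subseteq\{x_1,\dots,x_t\}\}$ where $(z^{(i)}_j)_j$ enumerates $C_i$. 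The generator plays any $\hat x_t \in C_{i_t}\setminus\{x_1,\dots,x_t\}$ (playing arbitrarily until some $C_i$ is defined).

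Then I would prove correctness exactly along the lines of Theorem~\ref{thm:geninlim}. Fix the adversary's $h^{\star}\in\Hcal$ and enumeration $x_1,x_2,\dots$ of $\operatorname{supp}(h^{\star})$. Since $h^{\star}\in\Hcal_j$ for some $j$, and $\Hcal_j$ has the EUC property, there is a finite $t^{\star}$ with $|\langle x_{1:t^{\star}}\rangle_{\Hcal_j}| = \infty$ (the $\bot$ case is impossible here since $h^{\star}$ is consistent), so $C_j$ gets defined and $C_j \subseteq \operatorname{supp}(h^{\star})$. Let $S^{\star}$ be the set of defined indices $i$ with $C_i \subseteq \operatorname{supp}(h^{\star})$; $j \in S^{\star}$ so it is non-empty. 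For $i \notin S^{\star}$, some $z^{(i)}_{m_i}\notin\operatorname{supp}(h^{\star})$, so $n_t^i \le m_i$ for all $t$; for $i \in S^{\star}$, since the stream enumerates all of $\operatorname{supp}(h^{\star}) \supseteq C_i$ (infinite), $n_t^i \to \infty$. As there are at most $n$ indices, eventually the argmax $i_t$ lies in $S^{\star}$, and thereafter $\hat x_t \in C_{i_t}\setminus\{x_1,\dots,x_t\} \subseteq \operatorname{supp}(h^{\star})\setminus\{x_1,\dots,x_t\}$, which is non-empty since $C_{i_t}$ is infinite.

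The main obstacle I anticipate is the bookkeeping around non-persistence of infiniteness of $\langle x_{1:t}\rangle_{\Hcal_i}$ as $t$ grows: the "freeze the first infinite closure as $C_i$" device is what resolves it, and one must check that $C_i$ stays a valid infinite subset of the support of any consistent hypothesis (it does, since $C_i = \langle x_{1:s_i}\rangle_{\Hcal_i} \subseteq \operatorname{supp}(h)$ for every $h\in\Hcal_i$ consistent with $x_{1:s_i}$, in particular for $h^{\star}$ once $s_i$ has passed). Everything else is a direct transcription of the Theorem~\ref{thm:geninlim} argument.
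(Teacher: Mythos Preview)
Your proposal is correct and follows essentially the same route as the paper's proof: freeze an infinite closure $C_i=\langle x_{1:s_i}\rangle_{\Hcal_i}$ for each subclass at the first (stream-dependent) time it becomes infinite, score the frozen sets via $n_t^i$ exactly as in Theorem~\ref{thm:geninlim}, and play from the winner. One minor slip that does not affect your argument: the containment you wrote, $\langle x_{1:t}\rangle_{\Hcal_i}\supseteq\langle x_{1:t+1}\rangle_{\Hcal_i}$, is reversed---adding an example shrinks the version space and hence \emph{enlarges} the intersection of supports---so infiniteness in fact persists (unless the version space becomes empty); freezing is still the right device, however, since it fixes the enumeration $(z^{(i)}_j)_j$ used in the scoring and handles the possibility that some $\langle x_{1:t}\rangle_{\Hcal_i}$ later becomes $\bot$.
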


Theorem \ref{thm:weaksuff} replaces the constraint that each of the finite number of sub-classes need to be uniformly generatable with the constraint that they need to satisfy the EUC property. This is a weakening as uniform generatability implies EUC but not the other way around. The proof is effectively the same as the proof of Theorem \ref{thm:geninlim} with the only difference being that the amount of time before which the generator computes closures is now stream dependent. 

\begin{proof} (sketch of Theorem \ref{thm:weaksuff})
Let $\Hcal = \bigcup_{i = 1}^n \Hcal_i$ be such that $\Hcal_i$ satisfies the EUC property for all $i \in [n].$ 
Consider the following generator $\Gcal$. On a valid input sequence $x_1, x_2, \dots$, let $t_i \in \naturals$ be the smallest time point such that either $\langle x_1, \dots, x_{t_i}\rangle_{\Hcal_i} = \bot$ or $|\langle x_1, \dots, x_{t_i}\rangle_{\Hcal_i}| = \infty$ for all $i \in [n].$ Note that such an $t_i \in \naturals$ must exist because $\Hcal_i$ satisfies the EUC property.  Let $t^{\star} = \max_{i \in [n]} t_i$ be the largest time point. $\Gcal$ plays arbitrarily up to, but not including, time point $t^{\star}$. On time point $t^{\star}$, $\Gcal$ computes $\langle x_1, \dots, x_{t_i} \rangle_{\Hcal_i}$ for all $i \in [n]$. Let $S \subseteq [n]$ be the subset of indices such that $i \in S$ if and only if $\langle x_1, \dots, x_{t_i} \rangle_{\Hcal_i} \neq \bot.$ For every $i \in S$, let $(z^{(i)}_j)_{j \in \mathbbm{N}}$ be the natural ordering of  $\langle x_1, \dots, x_{t_i} \rangle_{\Hcal_i}$, which is guaranteed to exist since $\Xcal$ is countable. For every $t \geq t^{\star}$, valid sequence of revealed examples $x_1, \dots, x_t$, and $i \in S$, $\Gcal$ computes 
\begin{equation}
n_t^i := \max\{n \in \mathbbm{N} : \{z^{(i)}_{1}, \dots, z^{(i)}_{n}\}  \subset \{x_1, \dots, x_t\}\}
\end{equation}

\noindent and $i_t\in \argmax_{i \in S} n_t^i.$ Finally, $\Gcal$ plays any $\hat{x}_t \in \langle x_1, \dots, x_{t_i} \rangle_{\Hcal_{i_t}} \setminus \{x_1, \dots, x_t\}.$ The rest of the proof is identical to that of Theorem \ref{thm:geninlim}.
\end{proof}

Due to its connection to autoregressive generation in the limit, understanding which classes satisfy the EUC property is an interesting property on its own. For example, while it is clear that the EUC property is weaker than uniform generatability but stronger than generatability in the limit, its relationship to non-uniform generatability is less clear. Lemma \ref{lem:stillnotnec} shows that if one restricts to countable classes, then the EUC property is strictly stronger than non-uniform generatability -- there exists a countable class which does not have the EUC property. We leave as an open question whether the EUC property is strictly stronger than non-uniform generatability even amongst uncountable classes.

\begin{question}
    Does there exists an uncountable class $\Hcal$ which is non-uniformly generatable, but does not satisfy the \emph{EUC} property?
\end{question}

We can also provide a sufficient condition for generatability in the limit akin to that of non-uniform generatability in terms of the EUC property. 

\begin{theorem} \label{thm:altweaksuff}  Let $\Xcal$ be countable and $\Hcal \subseteq \{0, 1\}^{\Xcal}$ be any class satisfying the \emph{UUS} property. If there exists a non-decreasing sequence of classes $\Hcal_1 \subseteq \Hcal_2 \subseteq \dots$, all satisfying the \emph{EUC} property, such that $\Hcal = \bigcup_{i = 1}^{\infty} \Hcal_i$,  then $\Hcal$ is generatable in the limit. 
\end{theorem}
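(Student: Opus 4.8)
The plan is to adapt the generator from the proof of Theorem~\ref{thm:geninlim} in two ways: (a) replace the fixed waiting time ``$c+1$ distinct examples'' by a stream-dependent detection of when each sub-class's closure becomes infinite — which is exactly what the EUC property supplies — and (b) activate the sub-classes $\Hcal_1, \Hcal_2, \dots$ gradually rather than all at once, so that at every finite round only finitely many are in play. Concretely, I would have the generator $\Gcal$, on input $x_1, \dots, x_t$, do the following: for each $k \le t$, check whether $\langle x_1, \dots, x_s \rangle_{\Hcal_k} \ne \bot$ and is infinite for some $s \le t$, and if so record the first such $s =: \sigma_k$, freeze the infinite set $W_k := \langle x_1, \dots, x_{\sigma_k}\rangle_{\Hcal_k}$, and fix an enumeration $z^{(k)}_1, z^{(k)}_2, \dots$ of it (possible since $\Xcal$ is countable). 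Writing $R_t$ for the set of such ``ready'' indices $k \le t$ and $c^k_t := \max\{m : \{z^{(k)}_1, \dots, z^{(k)}_m\} \subseteq \{x_1, \dots, x_t\}\}$ (with $c^k_t = 0$ if even $z^{(k)}_1$ has not appeared), $\Gcal$ plays any $\hat x_t \in W_{k_t}\setminus\{x_1,\dots,x_t\}$ for $k_t := \min\argmax_{k \in R_t} c^k_t$, and arbitrarily while $R_t = \emptyset$. Intuitively, $c^k_t$ measures how ``consistent'' the frozen closure of $\Hcal_k$ has been with the revealed stream, and $\Gcal$ always generates from the closure that currently looks most consistent.

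For correctness I would fix the adversary's $h^\star$, an enumeration $x_1, x_2, \dots$ of $\operatorname{supp}(h^\star)$, and let $n^\star$ be the least index with $h^\star \in \Hcal_{n^\star}$. The argument then rests on three points. First, $n^\star$ eventually lands in $R_t$: the version space $\Hcal_{n^\star}(x_{1:t})$ always contains $h^\star$, and applying the EUC property of $\Hcal_{n^\star}$ to the enumeration of $\operatorname{supp}(h^\star)$ gives a finite $\sigma_{n^\star}$; moreover $W_{n^\star} = \bigcap_{h \in \Hcal_{n^\star}(x_{1:\sigma_{n^\star}})}\operatorname{supp}(h) \subseteq \operatorname{supp}(h^\star)$, so the stream reveals every $z^{(n^\star)}_m$ and hence $c^{n^\star}_t \to \infty$. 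Second, call an index $k$ \emph{bad} if it ever becomes ready but $W_k \not\subseteq \operatorname{supp}(h^\star)$; for any bad $k$, the first element of $z^{(k)}_1, z^{(k)}_2,\dots$ lying outside $\operatorname{supp}(h^\star)$ is never revealed, so $c^k_t$ is bounded by a constant $B_k$ independent of $t$. Third — the key structural observation — every bad $k$ satisfies $k < n^\star$, because for $k \ge n^\star$ the nesting gives $h^\star \in \Hcal_{n^\star} \subseteq \Hcal_k$ and hence $W_k = \bigcap_{h \in \Hcal_k(x_{1:\sigma_k})}\operatorname{supp}(h) \subseteq \operatorname{supp}(h^\star)$, i.e.\ $k$ is not bad. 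So there are only finitely many bad indices, $B := \max_k B_k$ is finite, and once $t$ is large enough that $n^\star \in R_t$ and $c^{n^\star}_t > B$, no bad index can attain $\max_{k \in R_t} c^k_t$; therefore $k_t$ is not bad, $W_{k_t} \subseteq \operatorname{supp}(h^\star)$, and $\hat x_t \in \operatorname{supp}(h^\star)\setminus\{x_1,\dots,x_t\}$. This is the finite threshold required by Definition~\ref{def:geninlim}.

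I expect the third point to be the main obstacle, and it is exactly where the \emph{non-decreasing} hypothesis is indispensable: in a general countable union (as in Lemma~\ref{lem:hardgeninlim}) infinitely many sub-classes can present spurious infinite closures that mislead the generator forever, so no uniform bound $B$ exists; the nested structure confines the troublesome sub-classes to the finite prefix $\Hcal_1, \dots, \Hcal_{n^\star-1}$. Secondary bookkeeping points are ensuring $R_t$ stays finite at each round so that $\argmax$ is well defined (the restriction $k\le t$ handles this) and checking that the ``freeze and enumerate'' step is legitimate (it is, since $\Xcal$ is countable and each recorded $W_k$ is infinite). I would also note that for any fixed adversary the proof uses only the EUC property of the sub-classes containing $h^\star$, but since $h^\star$ ranges over all of $\Hcal$, the hypothesis that every $\Hcal_i$ has the EUC property is used in full.
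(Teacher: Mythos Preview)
Your proposal is correct but takes a different route from the paper. You adapt the ``freeze and score'' technique from the proof of Theorem~\ref{thm:geninlim}: freeze each $W_k$ at the first moment its closure becomes infinite, score each ready index by how far its enumeration has been confirmed by the stream, and play from the top-scoring one. The paper instead recomputes closures fresh at every round and uses a much simpler selection rule: at round $t$, set $n_t := \max\{n \le t : |\langle x_{1:t}\rangle_{\Hcal_n}| = \infty\}$ and play from that closure directly --- no freezing, no consistency counters. Correctness is then almost immediate: once $t \ge \max(n^\star, t_{n^\star})$ the index $n^\star$ qualifies (its closure only grows and cannot become $\bot$ since $h^\star$ is consistent), so $n_t \ge n^\star$; the nesting gives $h^\star \in \Hcal_{n_t}$, hence $\langle x_{1:t}\rangle_{\Hcal_{n_t}} \subseteq \operatorname{supp}(h^\star)$, and the play is correct. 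Both arguments hinge on the same structural fact you isolated --- that for $k \ge n^\star$ the closure is automatically contained in $\operatorname{supp}(h^\star)$ --- but the paper exploits it one step earlier, letting the ``largest index'' rule bypass the bad sub-classes without ever needing to bound their scores. Your version is perfectly valid and arguably more robust (it would also cover a hypothetical setting with finitely many bad indices above $n^\star$); the paper's is just shorter.
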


Note the sufficiency condition in Theorem \ref{thm:altweaksuff} is weaker than the sufficiency condition for non-uniform generation as uniform generatability is stronger than EUC. Our proof of Theorem \ref{thm:altweaksuff} is constructive -- we give an algorithm which generates in the limit as long as the sufficiency condition is met. The algorithm can be thought of as a generalization of the algorithm by \cite{kleinberg2024language} for countable classes. The high-level idea is to play from the closure of rightmost class whose closure is infinite in size. 

\begin{algorithm}
\setcounter{AlgoLine}{0}
\caption{Generator}\label{alg:gen}
\KwIn{Hypothesis class $\Hcal = \bigcup_{n = 1}^{\infty} \Hcal_n$ such that $\Hcal_1 \subseteq \Hcal_2 \subseteq \dots$ and $\Hcal_n$ satisfies EUC for all $n \in \naturals$}

\For{$t = 1,2, \dots$} {
    Adversary reveals positive example $x_t$

    Let $n_t = \max \, \{n \in [t]: |\langle x_1,\dots, x_t \rangle_{\Hcal_{n}}| = \infty\} \cup \{0\}$

    \uIf{$n_t = 0$}{
        Play arbitrarily from $\Xcal$
    }

    \uElse{
        Play arbitrarily from $\langle x_1,\dots, x_t \rangle_{\Hcal_{n_t}} \setminus \{x_1, \dots, x_t\}$
    }
}
\end{algorithm}

\begin{proof} We will show that Algorithm \ref{alg:gen} generates in the limit. To that end, let $h^{\star} \in \Hcal$ be the hypothesis and $x_1, x_2, \dots$ be an enumeration of $\operatorname{supp}(h^{\star})$ chosen by the adversary.  Let $n^{\star} \in \naturals$ be the smallest number such that $h^{\star} \in \Hcal_{n^{\star}}$. For every $n \in \naturals$, since $\Hcal_{n}$ satisfies the EUC property, there exists a $t_n \in \mathbbm{N}$ such that either $|\langle x_1, \dots, x_{t_n} \rangle_{\Hcal_{n}}| = \infty$ or $\langle x_1, \dots, x_{t_n} \rangle_{\Hcal_{n}} = \bot.$ 


We claim that for all $n \geq n^{\star}$, we have that $|\langle x_1, \dots, x_{s}\rangle_{\Hcal_{n}}| = \infty$ and $\langle x_1, \dots, x_{s}\rangle_{\Hcal_{n}} \subseteq \operatorname{supp}(h^{\star})$ for all $s \geq t_n.$ Fix some $n \geq n^{\star}$. By definition, we know that $h^{\star} \in \Hcal_{n}.$ In addition, since the stream $x_1, x_2, \dots$ is an enumeration of $h^{\star}$ and $\Hcal_{n}$ satisfies the EUC property, it must be the case that $|\langle x_1, \dots, x_{t_{n}}\rangle_{\Hcal_{n}}| = \infty.$ Moreover, $\langle x_1, \dots, x_{t_{n}}\rangle_{\Hcal_{n}} \subseteq \operatorname{supp}(h^{\star})$ because $h^{\star} \in \Hcal_{n}.$ Now, fix some $s \geq t_n$. Then, it must be the case that $\langle x_1, \dots, x_{s}\rangle_{\Hcal_{n}} \supseteq \langle x_1, \dots, x_{t_n}\rangle_{\Hcal_{n}}$ and therefore $|\langle x_1, \dots, x_{s}\rangle_{\Hcal_{n}}| = \infty$.  Because $h^{\star} \in \Hcal_{n}$, it also must be the case that $\langle x_1, \dots, x_{s}\rangle_{\Hcal_{n}} \subseteq \operatorname{supp}(h^{\star})$. This completes the proof of the claim as $n \geq n^{\star}$ was chosen arbitrarily. 

Now, we complete the overall proof of Theorem \ref{thm:altweaksuff} by showing that Algorithm \ref{alg:gen} generates perfectly on and after round $t_{n^{\star}}.$ On round $t = t_{n^{\star}}$, $n_t = n^{\star}$, and thus by Line 7, Algorithm \ref{alg:gen} generates from $\operatorname{supp}(h^{\star})\setminus \{x_1, \dots, x_t\}$ since $\langle x_1, \dots, x_{t}\rangle_{\Hcal_{n^{\star}}} \subseteq \operatorname{supp}(h^{\star}).$ Now fix a round $t > t_{n^{\star}}$. Then, observe that by the claim above we have that $n_t \geq n^{\star}$ and $h^{\star} \in \Hcal_{n_t}.$ Accordingly, we have that $\langle x_1, \dots, x_{t}\rangle_{\Hcal_{n_t}} \subseteq \operatorname{supp}(h^{\star})$ and therefore by Line 7, Algorithm \ref{alg:gen} plays from $\operatorname{supp}(h^{\star})\setminus \{x_1, \dots, x_t\}$. Since $t > t_{n^{\star}}$ was picked arbitrarily, the proof is complete. 
\end{proof}

We note that in addition to Theorem \ref{thm:nonunifgen}, Theorem \ref{thm:altweaksuff} also recovers the result by \cite{kleinberg2024language} that all countable classes are generatable in the limit. 

\end{document}